\newcommand{\isPreprint}{1}
\newcommand{\isNotPreprint}{0}
\newcommand{\isNotPreprint}{1}
\newcommand{\AAAIdoesNotKnowHowToUseLaTeX}{\isNotPreprint}
\newcommand{\showAppendix}{\isPreprint}
\title{Bayesian Optimization-Based Combinatorial Assignment\footnotemark[1]}
\author{
	Jakob Weissteiner\textsuperscript{\rm 1,\rm 3}\footnotemark[2]{\normalfont,}
	Jakob Heiss\textsuperscript{\rm 2,\rm 3}\footnotemark[2]{\normalfont,}
	Julien Siems\textsuperscript{\rm 1}\footnotemark[2] {\normalfont and} 
	Sven Seuken\textsuperscript{\rm 1,\rm 3}
}
\providecommand{\AAAIdoesNotKnowHowToUseLaTeX}{0}
\begin{document}

\maketitle

\begin{abstract}
We study the combinatorial assignment domain, which includes combinatorial auctions and course allocation. The main challenge in this domain is that the bundle space grows exponentially in the number of items. To address this, several papers have recently proposed \emph{machine learning-based preference elicitation} algorithms that aim to elicit only the most important information from agents. However, the main shortcoming of this prior work is that it does not model a mechanism's uncertainty over values for not yet elicited bundles. In this paper, we address this shortcoming by presenting a \emph{Bayesian optimization-based combinatorial assignment (BOCA)} mechanism. Our key technical contribution is to integrate a method for capturing model uncertainty into an iterative combinatorial auction mechanism. Concretely, we design a new method for estimating an \emph{upper uncertainty bound} that can be used to define an acquisition function to determine the next query to the agents. This enables the mechanism to properly \emph{explore} (and not just \emph{exploit}) the bundle space during its preference elicitation phase. We run computational experiments in several spectrum auction domains to evaluate BOCA's performance. Our results show that BOCA achieves higher allocative efficiency than state-of-the-art approaches.
\end{abstract}

\renewcommand{\thefootnote}{\fnsymbol{footnote}}
\footnotetext[1]{\if\isPreprint1
This paper is the full version of \cited{weissteiner2023bayesian} published at AAAI'23 including the appendix.
\else
The full paper including appendix is available on arXiv via: \url{https://arxiv.org/abs/2208.14698}.%
\fi}
\footnotetext[2]{These authors contributed equally.}
\renewcommand{\thefootnote}{\arabic{footnote}}

\section{Introduction}\label{sec:Introduction}
Many economic problems require finding an efficient combinatorial assignment of multiple indivisible items to multiple agents. Popular examples include \emph{combinatorial auctions (CAs)}, \emph{combinatorial exchanges (CEs)}, and \emph{combinatorial course allocation}. In CAs, heterogeneous items are allocated among a set of bidders, e.g., for the sale of spectrum licenses \cite{cramton2013spectrumauctions}. In CEs, items are allocated among agents who can be sellers \emph{and} buyers at the same time, e.g., for the reallocation of catch shares \cite{bichler2019designing}. In course allocation, course seats are allocated among students at universities \cite{budish2011combinatorial}.

In all these domains, agents have preferences over \emph{bundles} of items. In particular, agents' preferences may exhibit \textit{complementarities} and \textit{substitutabilities} among items.
A mechanism that allows agents to report values for bundles rather than just for individual items can achieve significantly higher efficiency. However, this also implies that agents' preferences are exponentially-sized (i.e., for $m$ items there are $2^m$ different bundles), which implies that agents cannot report values for all bundles. Therefore, the key challenge in combinatorial assignment is the design of a \emph{preference elicitation (PE)} algorithm that is (i) \textit{practically feasible} w.r.t. elicitation costs and (ii) \emph{smart}, i.e., it should elicit the information that is ``most useful'' for achieving high efficiency.

\subsection{Iterative Combinatorial Auctions (ICAs)}\label{ICAs}
While the PE challenge is common to all combinatorial assignment problems, it has been studied most intensely in the CA domain \citep{sandholm2006preference}. In CAs with general valuations, the amount of communication  needed to guarantee full efficiency is exponential in the number of items \cite{nisan2006communication}. Thus, practical CAs cannot provide efficiency guarantees. In practice, \textit{iterative combinatorial auctions (ICAs)} are therefore employed, where the auctioneer interacts with bidders over rounds, eliciting a \emph{limited} (and thus \emph{practically feasible}) amount of information, aiming to find a highly efficient allocation. ICAs are widely used; e.g., for the sale of licenses to build offshore wind farms \cite{ausubel2011auction}. The provision of spectrum licenses via the \emph{combinatorial clock auction (CCA) \cite{ausubel2006clock}} has generated more than \$20 billion in total revenue \cite{ausubel2017practical}. Thus, increasing the efficiency of such real-world ICAs by only 1\% point translates into monetary gains of hundreds of millions of dollars.

\subsection{ML-Powered Preference Elicitation}\label{subsec:Machine Learning-powered Assignemt Mechanims}

In recent years, researchers have used machine learning (ML) to design smart PE algorithms. Most related to this paper is the work by \citet{brero2018combinatorial,brero2021workingpaper}, who developed the first practical ML-powered ICA that outperforms the CCA. The main idea of their mechanism is two-fold: first, they train a separate \textit{support vector regression} model to learn each bidder's full value function from a small set of bids; second, they solve an \textit{ML-based winner determination problem (WDP)} to determine the allocation with the highest predicted social welfare, and they use this allocation to generate the next set of queries to all bidders. This process repeats in an iterative fashion until a fixed number of queries has been asked. Thus, their ML-powered ICA can be interpreted as a form of combinatorial \textit{Bayesian optimization (BO)} (see \Appendixref{sec:appendix:BO}{Appendix~C}).

In several follow-up papers, this work has been extended by developing more sophisticated ML methods for this problem. \citet{weissteiner2020deep} integrated \emph{neural networks (NN)} in their ICA and further increased efficiency. \citet{weissteiner2022fourier} used\emph{ Fourier transforms (FTs)} to leverage different notions of sparsity of value functions. Finally, \citet{weissteiner2022monotone} achieved state-of-the-art (SOTA) performance using \emph{monotone-value NNs (MVNNs)}, which incorporate important prior domain knowledge.

The main shortcoming of this prior work is that all of these approaches are \emph{myopic} in the sense that the resulting mechanisms simply query the allocation with the highest predicted welfare. In particular, the mechanisms do not have any model of \emph{uncertainty} over bidders' values for not yet elicited bundles, although handling uncertainty in a principled manner is one of the key requirements of a smart PE algorithm \citep{guo2010gaussian}. Thus, the mechanisms cannot properly control the \textit{exploration-exploitation trade-off} inherent to BO. For ML-based ICAs, this means that the mechanisms may get stuck in local minima, repeatedly querying one part of the allocation space while not exploring other, potentially more efficient allocations.

\subsection{Our Contributions}\label{subsec:Contribution}
In this paper, we address this main shortcoming of prior work and show how to integrate a notion of \textit{model uncertainty} (i.e., epistemic uncertainty) over agents' preferences into iterative combinatorial assignment. Concretely, we design a \emph{Bayesian optimization-based combinatorial assignment (BOCA)}\footnote{The acronym BOCA has also been used for a different method, namely for \emph{Bayesian optimisation with continuous approximations} by \citet{kandasamy2017multi}.} mechanism that makes use of model uncertainty in its query generation module. The main technical challenge is to design a new method for estimating an \emph{upper uncertainty bound} that can be used to define an acquisition function to determine the next query. For this we combine \emph{MVNNs} \citep{weissteiner2022monotone} with \emph{neural optimization-based model uncertainty (NOMU)} \cite{heiss2022nomu}, a recently introduced method to estimate model uncertainty for NNs. In detail, we make the following contributions: 
\begin{enumerate}[leftmargin=*,topsep=0pt,partopsep=0pt, parsep=0pt]
    \item We present a modified NOMU algorithm (\Cref{subsec:NOMU for Combinatorial Assignment}), tailored to CAs, exploiting monotonicity of agents' preferences and the discrete (finite) nature of this setting.
    \item We show that generic parameter initialization for monotone NNs can dramatically fail and propose a new initialization method for MVNNs based on uniform mixture distributions (\Cref{subsec:New MVNN Random Initialization}).
    \item We present a more succinct mixed integer linear program for MVNNs to solve the ML-based WDP (\Cref{subsec:New MVNN-based MIP}).
    \item We experimentally show that BOCA outperforms prior approaches in terms of efficiency (\Cref{sec:experiments}).
\end{enumerate}

Although our contribution applies to any combinatorial assignment setting, we focus on CAs to simplify the notation and because there exist well-studied preference generators for CAs that we use for our experiments.

Our source code is publicly available on GitHub via: \url{https://github.com/marketdesignresearch/BOCA}.
\subsection{Related Work on Model Uncertainty}\label{subsec:related work}
Estimating model uncertainty for NNs is an active area of research in AI and ML, with a plethora of new methods proposed every year. Classic methods can be broadly categorized into \emph{(i) ensemble methods}: training multiple different NNs to estimate model uncertainty \cite{gal2016dropout,lakshminarayanan2017simple,wenzel2020hyperparameter} and \emph{(ii) Bayesian NNs (BNNs)}: assuming a prior distribution over parameters and then estimating model uncertainty by approximating the intractable posterior \cite{graves2011practical,blundell2015weight,hernandez2015probabilistic,ober2019benchmarking}. However, for ML-based iterative combinatorial assignment, a key requirement is to be able to efficiently solve the ML-based WDP based on these uncertainty estimates. As there is no known computationally tractable method to perform combinatorial optimization over ensembles or BNNs, we cannot use these approaches for ML-based ICAs. In contrast, NOMU \citep{heiss2022nomu} enables the computationally efficient optimization over its uncertainty predictions, which is why we use it as a building block for BOCA.

\section{Preliminaries}\label{sec:Preliminaries}
In this section, we present our formal model (\Cref{subsec:Formal Model for ICAs}), review the ML-based ICA by \cited{brero2021workingpaper} (\Cref{subsec:A Machine Learning powered ICA}), briefly review Bayesian optimization (BO) (\Cref{subsec:app:Bayesian Optimization Background}), and review \emph{monotone-value neural networks (MVNNs)} by \cited{weissteiner2022monotone} (\Cref{subsec:mvnns}) as well as \emph{neural optimization-based model uncertainty (NOMU)} by \cited{heiss2022nomu} (\Cref{subsec:NOMU}).

\subsection{Formal Model for ICAs}\label{subsec:Formal Model for ICAs}
We consider a CA with $n$ bidders and $m$ indivisible items. Let $N=\{1,\ldots,n\}$ and $M=\{1,\ldots,m\}$ denote the set of bidders and items. We denote by $x\in \X=\{0,1\}^m$ a bundle of items represented as an indicator vector, where $x_{j}=1$ iff item $j \in M$ is contained in $x$. Bidders' true preferences over bundles are represented by their (private) value functions $v_i: \X\to \R_+,\,\, i \in N$, i.e., $v_i(x)$ represents bidder $i$'s true value for bundle $x\in \X$.

By $a=(a_1,\ldots,a_n) \in \X^n$ we denote an allocation of bundles to bidders, where $a_i$ is the bundle bidder $i$ obtains. We denote the set of \emph{feasible} allocations by $\F=\left\{a \in \X^n:\sum_{i \in N}a_{ij} \le 1, \,\,\forall j \in M\right\}$. We let $p\in \R^n_+$ denote the bidders' payments. We assume that bidders have quasilinear utility functions $u_i$ of the form $u_i(a,p) = v_i(a_i) - p_i$. This implies that the (true) \emph{social welfare} $V(a)$ of an allocation $a$ is equal to the sum of all bidders' values $\sum_{i\in N} v_i(a_i)$.
We let $a^* \in \argmax_{a \in {\F}}V(a)$ denote a social-welfare maximizing, i.e., \textit{efficient}, allocation. The \emph{efficiency} of any allocation $a \in \F$ is determined as $V(a)/V(a^*)$.

An ICA \textit{mechanism} defines how the bidders interact with the auctioneer and how the allocation and payments are determined. We denote a bidder's (possibly untruthful) reported value function by $\hvi{}:\X\to\R_+$. In this paper, we consider ICAs that ask bidders to iteratively report their values $\hvi{x}$ for bundles $x$ selected by the mechanism. A  finite set of reported bundle-value pairs of bidder $i$ is denoted as ${R_i=\left\{\left(x^{(l)},\hvi{x^{(l)}}\right)\right\},\,x^{(l)}\in \X}$. Let $R=(R_1,\ldots,R_n)$ be the tuple of reported bundle-value pairs obtained from all bidders. We define the \textit{reported social welfare} of an allocation $a$ given $R$ as $\hV{a|R}\coloneqq \sum_{i \in N:\, \left(a_i,\hvi{a_i}\right)\in R_i}\hvi{a_i},$ where $\left(a_i,\hvi{a_i}\right)\in R_i$ ensures that only values for reported bundles contribute. The ICA's optimal allocation $a^*_{R}\in \F$ and payments $p(R)\in \R^n_+$ are computed based on the elicited reports $R$ \emph{only}. More formally, $a^*_{R}\in \F$ given reports $R$ is defined as 
\begin{align}\label{WDPFiniteReports}
a^*_{R} \in \argmax_{a \in {\F}}\hV{a|R}.
\end{align}

As the auctioneer can only query a limited number of bundles $|R_i| \leq \Qmax$ (e.g., $\Qmax=100$), an ICA needs a practically feasible and smart PE algorithm.

\subsection{A Machine Learning-Powered ICA}\label{subsec:A Machine Learning powered ICA}
We now provide a high-level review of the \textit{machine learning-powered combinatorial auction (MLCA)} by \cited{brero2021workingpaper} (please see \Appendixref{sec:appendix_A Machine Learning powered ICA}{Appendix~A} for further details). MLCA proceeds in rounds until a maximum number of value queries per bidder $\Qmax$ is reached. In each round, for every bidder $i$, an ML model $\mathcal{A}_i$ is trained on the bidder's reports $R_i$ to learn an approximation of bidders' value functions. Next, MLCA generates new value queries by computing the allocation with the highest predicted social welfare. Concretely, it computes $\qnew=(\qnew_i)_{i=1}^{n}$ with $\qnew_i\in \X \setminus R_i$ by solving an ML-based WDP:
\begin{equation}\label{eq:ML-based-WDP}
\qnew \in \argmax\limits_{a \in {\F}}\sum\limits_{i \in N} \mathcal{A}_i(a_i)
\end{equation}
The idea is the following: if the $\mathcal{A}_i$'s are good surrogate models of the bidders' value functions, then the efficiency of $\qnew$ is likely to be high as well. Thus, in each round, bidders are providing value reports on bundles that are guaranteed to fit into a feasible allocation and that together are predicted to have high social welfare. Additionally, bidders are also allowed to submit ``push-bids,'' enabling them to submit information to the auctioneer that they deem useful, even if they are not explicitly queried about it.
At the end of each round, MLCA receives reports $\Rnew$ from all bidders for the newly generated queries $\qnew$, and updates the overall elicited reports $R$. When $\Qmax$ is reached, MLCA computes an allocation $a^*_R$ that maximizes the \emph{reported} social welfare (\Cref{WDPFiniteReports}) and determines VCG payments $p(R)$ based on all reports (see \Appendixref[Appendix ]{def:vcg_payments}{Definition~B.1}).

\begin{remark}[IR, No-Deficit, and Incentives of MLCA]
\cited{brero2021workingpaper} showed that MLCA satisfies \emph{individual rationality (IR)} and \emph{no-deficit}, with \emph{any} ML algorithm. They also studied MLCA's incentive properties; this is important, since manipulations may lower efficiency. Like all deployed ICAs (including the CCA), MLCA is not strategyproof. However, they argued that it has good incentives in practice; and given two additional assumptions, bidding truthfully is an ex-post Nash equilibrium. We present a detailed summary of their incentive analysis in \Appendixref{sec:appendix_Incentives of MLCA}{Appendix~B}.
\end{remark}

\subsection{Bayesian Optimization Background}\label{subsec:app:Bayesian Optimization Background}
In this section, we briefly review Bayesian optimization (BO). BO refers to a class of \emph{machine learning-based} \emph{gradient-free} optimization methods, which, for a given black-box objective function $f:X\to \R$, aim to solve
\begin{equation}\label{eq:bo_problem}
    \max_{x\in X}f(x)
\end{equation}
in an \emph{iterative} manner. Specifically, given a budget of $T$ queries (i.e., function evaluations of $f$), a BO algorithm generates queries $\{x^{(1)},\ldots,x^{(T)}\}$ with the aim that

\begin{equation}\label{eq:goal_of_BO}
    \max_{x\in \{x^{(1)},\ldots,x^{(T)}\}}f(x) \approx \max_{x\in X} f(x).
\end{equation}

In each BO step $t$, the algorithm selects a new input point $x^{(t)}\in X$ and observes a (potentially noisy) output
\begin{equation}
    y^{(t)}=f(x^{(t)})+\varepsilon^{(t)},
\end{equation}
where $\varepsilon^{(t)}$ is typically assumed to be i.i.d.\ Gaussian, i.e., $\varepsilon^{(t)}\sim \mathcal{N}(0,\sigma^2)$.\footnote{In this paper, we assume that $\sigma^2=0$.} The BO algorithm's decision rule for selecting the query $x^{(t)}$ is based on
\begin{enumerate}
    \item A \emph{probabilistic model} representing an (approximate) posterior distribution over $f$ (e.g., Gaussian processes, NOMU, ensembles, BNNs, etc.).
    \item An \emph{acquisition function} $\mathcal{A}:X \to \R$ that uses this probabilistic model to determine the next query $x^{(t)}\in\argmax_{x\in X} \mathcal{A}(x)$ by properly trading off \emph{exploration} and \emph{exploitation}. See \Appendixref{subsec:app:Choice of Acquisition function}{Appendix~C.3} for popular examples of acquisition functions including:
    \begin{itemize}[leftmargin=*]
        \item \emph{Upper uncertainty bound} (uUB) (aka \emph{upper confidence bound (UCB)})~\cite{Srinivas_2012}
        \item \emph{Expected improvement}~\cite[Section 4.1]{frazier2018tutorial}
        \item \emph{Thompson sampling}~\cite{Chapelle2011AnEE}
    \end{itemize}
\end{enumerate}

\begin{remark}
MLCA (\Cref{subsec:A Machine Learning powered ICA}) can be seen as a combinatorial BO algorithm with acquisition function $\mathcal{A}(a)\coloneqq \sum_{i\in N}\mathcal{A}_i(a_i)$ (see \Appendixref{sec:appendix:BO}{Appendix~C} for a discussion).
\end{remark}

\subsection{MVNNs: Monotone-Value Neural Networks}\label{subsec:mvnns}
MVNNs \cite{weissteiner2022monotone} are a new class of NNs specifically designed to represent \emph{monotone} \emph{combinatorial} valuations. First, we reprint the definition of MVNNs and then discuss their desirable properties.
\begin{definition}[MVNN, \cited{weissteiner2022monotone}]\label{def:MVNN}
		An MVNN $\MVNNi{}:\X \to \mathbb{R}_+$  for bidder $i\in N$ is defined as
		\begin{equation}\label{eq:MVNN}
		\MVNNi{x}\coloneqq  W^{i,K_i}\varphi_{0,t^{i, K_i-1}}\left(\ldots\varphi_{0,t^{i, 1}}(W^{i,1}x+b^{i,1})\ldots\right)
		\end{equation}
		\begin{itemize}[leftmargin=*,topsep=0pt,partopsep=0pt, parsep=0pt]
		\item $K_i+1\in\mathbb{N}$ is the number of layers ($K_i-1$ hidden layers),
		\item $\{\varphi_{0,t^{i, k}}{}\}_{k=1}^{K_i-1}$ are the MVNN-specific activation functions with cutoff $t^{i, k}>0$, called \emph{bounded ReLU (bReLU)}:
		\begin{align}\label{itm:MVNNactivation}
		\varphi_{0,t^{i, k}}(\cdot)\coloneqq\min(t^{i, k}, \max(0,\cdot))
		\end{align}
		\item $W^i\coloneqq (W^{i,k})_{k=1}^{K_i}$ with $W^{i,k}\ge0$ and $b^i\coloneqq (b^{i,k})_{k=1}^{K_i-1}$ with $b^{i,k}\le0$ are the \emph{non-negative} weights and \emph{non-positive} biases of dimensions $d^{i,k}\times d^{i,k-1}$ and $d^{i,k}$, whose parameters are stored in $\theta=(W^i,b^i)$.
		\end{itemize}
\end{definition}
MVNNs are particularly well suited for the design of combinatorial assignment mechanism for two reasons. First, MVNNs are \textit{universal} in the set of monotone and normalized value functions \citep[Theorem~1]{weissteiner2022monotone}, i.e., \emph{any} $\hvi{}:\X\to\Rp$ that satisfies the following two properties can be represented \emph{exactly} as an MVNN~$\MVNNi{}$:
\begin{enumerate}[align=left, leftmargin=*,topsep=2pt]\label{monotonicity_and_normalization}
    \item\textbf{Monotonicity (M)}~(\emph{``additional items increase value''}):\\ For $A,B \in 2^M$: if $A\subseteq B$ it holds that $\hvi{A}\le \hvi{B}$
    \item\textbf{Normalization (N)}~(\emph{''no value for empty bundle''}):\\ $\hat{v}_i(\emptyset)=\hat{v}_i((0,\ldots,0))\coloneqq 0$,
\end{enumerate}
Second, \cited{weissteiner2022monotone} showed that an MVNN-based WDP, i.e., $\argmax\limits_{a\in \F}\sum_{i \in N}\MVNNi{a_i}$, can be succinctly encoded as a MILP, which is key for the design of MVNN-based iterative combinatorial assignment mechanisms. Finally, \cited{weissteiner2022monotone} experimentally showed that using MVNNs as $\mathcal{A}_i$ in MLCA leads to SOTA performance.

\subsection{NOMU}\label{subsec:NOMU}
Recently, \citet{heiss2022nomu} introduced a novel method to estimate model uncertainty for NNs: \emph{neural optimization-based model uncertainty (NOMU)}. In contrast to other methods (e.g., ensembles), NOMU represents an \emph{upper uncertainty bound (uUB)} as a \emph{single} and \emph{MILP-formalizable} NN. Thus, NOMU is particularly well suited for iterative combinatorial assignment, where uUB-based \emph{winner determination problems (WDPs)} need to be solved hundreds of times to generate new informative queries. This, together with NOMU's strong performance in noiseless BO, is the reason why we build on it and define a modified NOMU algorithm tailored to iterative combinatorial assignment (\Cref{subsec:NOMU for Combinatorial Assignment}).

\section{Bayesian Optimization-Based ICA}\label{sec:Integrating Uncertainty into an ICA}
In this section, we describe the design of our Bayesian optimization-based combinatorial assignment (BOCA) mechanism. While the design is general, we here present it for the CA setting, leading to a BO-based ICA. Recall that MLCA generates new value queries by solving the ML-based WDP $\qnew \in \argmax\limits_{a \in {\F}}\sum\limits_{i \in N} \mathcal{A}_i(a_i)$ (see \Cref{subsec:A Machine Learning powered ICA}). For the design of BOCA, we integrate a proper notion of uncertainty into MLCA by using a bidder-specific \emph{upper uncertainty bound (uUB)}, taking the role of the ML model $\mathcal{A}_i$, to define our acquisition function $\mathcal{A}(a)\coloneqq \sum_{i\in N}\mathcal{A}_i(a_i)$. To define our uUB and make it amenable to MLCA, we proceed in three steps: First, we combine MVNNs with a modified NOMU algorithm that is tailored to the characteristics of combinatorial assignment (\Cref{subsec:NOMU for Combinatorial Assignment}). Second, we highlight the importance of proper parameter initialization for MVNNs and propose a more robust method (\Cref{subsec:New MVNN Random Initialization}). Third, we present a more succinct MILP for MVNNs (\Cref{subsec:New MVNN-based MIP}).
In the remainder of the paper, we make the following assumption:
\begin{assumption}\label{assumption:monotonicity_and_normalization}
For all agents $i\in N$, the true and reported value functions $v_i$ and $\hvi{}$ fulfill the \textbf{Monotonicity (M)} and \textbf{Normalization (N)} property (see \Cref{monotonicity_and_normalization}).
\end{assumption}

\subsection{Model Uncertainty for Monotone NNs}\label{subsec:NOMU for Combinatorial Assignment}
We propose a modified NOMU architecture and loss that is specifically tailored to combinatorial assignment. Concretely, our algorithm is based on the following two key characteristics of combinatorial assignment: (i) since agents' value functions are monotonically increasing, the uUBs need to be monotonically increasing too, and (ii) due to the (finite) discrete input space, one can derive a closed-form expression of the 100\%-uUB as an MVNN. Before we present our modified NOMU architecture and loss, we introduce the MVNN-based 100\%-uUB.

Let $\HC$ denote a \emph{hypothesis class} of functions $f:X\to \R$  for some input space $X$ and let $\HC_{\Dtr}\coloneqq \{f\in \HC:f(x^{(l)})=y^{(l)}, l=1,\ldots,\ntr\}$ denote the set of all functions from $\HC$ that fit exactly through training points $\Dtr=\left\{\left(x^{(l)},f(x^{(l)})\right)\right\}_{l=1}^{\ntr}$.
\begin{definition}[100\%-uUB]\label{def:100 upper UB MVNN}
For a hypothesis class $\HC$ and a training set $\Dtr$, we define the 100\%-uUB as $\fuUB(x)\coloneqq \sup_{f\in\HC_{\Dtr}}f(x)$ for every $x\in X$.
\end{definition}
In the following, let
\begin{align}\label{eq:Vmon}
    \Vmon\coloneqq \{\hv{}:\X \to \Rp|\, \text{satisfy \textbf{(N)} and \textbf{(M)}}\}
\end{align}
denote the set of all value functions that satisfy the \emph{normalization} and \emph{monotonicity} property. Next, we define the 100\%-uUB. 
In \Cref{proposition:100uUB as MVNN}, we show that for $\HC=\Vmon$ the 100\%-uUB can be explicitly represented as an MVNN.
\begin{theorem}[MVNN-based 100\%-uUB]\label{proposition:100uUB as MVNN}
Let $\left((1,\ldots,1), \hvi{1,\ldots,1}\right)\in \Dtr$. Then for $\HC=\Vmon$ it holds that $\fuUB(x)=\max_{f\in\Vmon_{\Dtr}}f(x)$ for all $x\in \mathcal{X}$ and $\fuUB\in\Vmon_{\Dtr}$ can be represented as a two hidden layer MVNN with $\ntr$ neurons per layer, which we denote as $\oneMi$ going forward.\footnote{Note that $\oneMi(\cdot)$ depends on a training set $\Dtr$, but we omit this dependency in our notation to improve readability.}
\end{theorem}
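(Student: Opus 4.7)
The plan is to first guess $\fuUB$ in closed form, verify the candidate lies in $\Vmon_{\Dtr}$ and dominates every other member pointwise, and finally hand-build an MVNN with the prescribed shape that realises it.

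\emph{Closed form and optimality.} I would conjecture that
\[
\fuUB(x) \;=\; \min\{y^{(l)} : x \subseteq x^{(l)}\} \text{ for } x \neq 0, \quad \fuUB(0) = 0,
\]
with the $\min$ taken over a non-empty set since $(1,\ldots,1) \in \Dtr$. The upper bound $f(x) \le \fuUB(x)$ for every $f \in \Vmon_{\Dtr}$ follows immediately from monotonicity applied to $x \subseteq x^{(l)}$. For the converse, the candidate $f^* \coloneqq \fuUB$ needs to be shown to itself lie in $\Vmon_{\Dtr}$: normalization is built in; monotonicity holds because enlarging $x$ shrinks the set $\{l : x \subseteq x^{(l)}\}$ and therefore weakly raises its $\min$; and interpolation $f^*(x^{(k)}) = y^{(k)}$ follows from Assumption~\ref{assumption:monotonicity_and_normalization} applied to the reported $\hvi{}$ (which ensures $y^{(l)} \ge y^{(k)}$ whenever $x^{(l)} \supseteq x^{(k)}$ and in particular that $\Vmon_{\Dtr}$ is non-empty, since $\hvi{} \in \Vmon_{\Dtr}$). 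This shows the sup is attained and equals $f^*$.

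\emph{MVNN realisation.} Sort indices so $y^{(1)} \le \ldots \le y^{(\ntr)}$; Assumption~\ref{assumption:monotonicity_and_normalization} together with $(1,\ldots,1) \in \Dtr$ then puts the full bundle at the top, i.e.\ $x^{(\ntr)} = (1,\ldots,1)$. In the first hidden layer I place, for each $l<\ntr$, the violation indicator
\[
h_l(x) \;=\; \varphi_{0,1}\Bigl(\sum\nolimits_{j:\, x^{(l)}_j=0} x_j\Bigr),
\]
which is $0$ iff $x \subseteq x^{(l)}$ and $1$ otherwise. The analogous neuron for $x^{(\ntr)}$ would be identically zero, so I repurpose that slot as a non-emptiness detector $h_{\ntr}(x) = \varphi_{0,1}(\sum_j x_j)$; this uses exactly $\ntr$ neurons with non-negative weights and zero biases. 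In the second hidden layer I then realise the telescoping decomposition $y^{(l^*)} = \sum_{l \le l^*}(y^{(l)} - y^{(l-1)})$, with $y^{(0)} = 0$ and $l^* = \min\{l : x \subseteq x^{(l)}\}$, via
\[
g_1(x) \;=\; \varphi_{0,y^{(1)}}\!\bigl(y^{(1)} h_{\ntr}(x)\bigr),
\]
\[
g_l(x) \;=\; \varphi_{0,y^{(l)}-y^{(l-1)}}\!\Bigl((y^{(l)}-y^{(l-1)})\bigl(\textstyle\sum_{k<l} h_k(x) - (l-2)\bigr)\Bigr) \text{ for } l \ge 2,
\]
so that $g_l$ contributes the increment $y^{(l)}-y^{(l-1)}$ precisely when $h_k=1$ for all $k<l$, and $0$ otherwise. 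The output layer sums the $g_l$'s with unit weights. Sign constraints are immediate: the hidden weights $(y^{(l)}-y^{(l-1)})$ are $\ge 0$ by the sorting and the biases $-(y^{(l)}-y^{(l-1)})(l-2)$ are $\le 0$. A short case analysis on $l^*$ then confirms that the network output equals $f^*$ everywhere.

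\emph{Main obstacle.} The main difficulty is reconciling the pointwise-min structure of $\fuUB$ with the purely additive, non-negatively-weighted architecture of an MVNN, while simultaneously enforcing the boundary condition $\fuUB(0)=0$ (which the naive formula $\min_l y^{(l)}$ violates whenever $y^{(1)}>0$) and staying within the tight budget of $\ntr$ neurons per layer. Both issues dissolve once one (i) writes the min as a telescoping sum of non-negative increments gated by conjunctions of first-layer indicators (expressible by bReLU with non-negative weights) and (ii) recycles the otherwise-trivial full-bundle neuron as the non-emptiness detector that switches $g_1$ on for $x \neq 0$. Ties $y^{(l)}=y^{(l-1)}$ need only cosmetic care because the MVNN definition demands a strictly positive cutoff: the corresponding $g_l$ can be given any positive cutoff and zero weights, rendering it identically zero without breaking the count.
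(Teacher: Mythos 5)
Your proposal is correct and follows essentially the same route as the paper's proof: the same closed form ($\fuUB(x)$ equals the value of the first superset of $x$ in value-sorted order, with the empty bundle forced to $0$), the same optimality argument via monotonicity, and the same two-hidden-layer construction (first layer of non-subset indicators plus a non-emptiness gate, second layer realising the telescoping sum of sorted value increments). The only cosmetic difference is that the paper keeps all second-layer cutoffs equal to $1$ and places the increments $w_{k+1}-w_k$ as output-layer weights, which sidesteps the tie-handling you patch at the end.
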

\begin{proof}
The proof for \Cref{proposition:100uUB as MVNN} is provided in \Appendixref{proof:proposition:100uUB as MVNN}{Appendix~D.1}. It follows a similar idea as the universality proof in \cite[Theorem 1]{weissteiner2022monotone}. In particular, \Appendixref{eq:last_term}{Equation~(27)} in \Appendixref{proof:proposition:100uUB as MVNN}{Appendix~D.1} provides the closed-form expression of $\fuUB$ as MVNN $\oneMi$.
\end{proof}
Using the MVNN-based 100\%-uUB $\oneMi$, we can now define our modified NOMU architecture and loss.

\paragraph{The Architecture.}
Towards defining the architecture, we first observe that if the true function is monotonically increasing, the corresponding uUB needs to be monotonically increasing as well (\Appendixref{prop:BayesUUBmonotone,prop:frequUUBmonotone}{Proposition~1 and~2} in \Appendixref{subsec:monotoneUUB}{Appendix~D.2}). Given that bidders' value functions are monotone (\Cref{assumption:monotonicity_and_normalization}), this implies that our uUB must also be monotonically increasing. Thus, instead of the original NOMU architecture that outputs the (raw) uncertainty (i.e., an estimate of the posterior standard deviation) which is \emph{not} monotone, we can modify NOMU's architecture and directly output the monotone uUB. Given this, we propose the following architecture $\NOMUi$ to estimate the uUB for bidder $i\in N$. $\NOMUi$ consists of two sub-MVNNs with two outputs: the mean prediction $\meanMi:\X\to\R$ and the estimated uUB $\uUBMi:\X\to\R$. In \Cref{fig:nn_tik_extension}, we provide a schematic representation of $\NOMUi$
(see \Appendixref{subsec:monotoneUUB}{Appendix~D.2} for details).
\begin{figure}[t!]
        \begin{center}
        \centerline{
        \resizebox{1\columnwidth}{!}{
                \begin{tikzpicture}
                [cnode/.style={draw=black,fill=#1,minimum width=3mm,circle}]
                
                \node[cnode=gray,label=180:{\Huge$x \in \X$}] (x1) at (0.5,0) {};
                
                \node[cnode=gray] (x2+3) at (3,3) {};
                \node at (3,2) {$\vdots$};
                \node[cnode=gray] (x2+1) at (3,1) {};
                
                \node[cnode=gray] (x2-3) at (3,-3) {};
                \node at (3,-2) {$\vdots$};
                \node[cnode=gray] (x2-1) at (3,-1) {};
                
                \draw (x1) -- (x2+3);
                \draw (x1) -- (x2+1);
                
                \draw (x1) -- (x2-1);
                \draw (x1) -- (x2-3);

                \node[cnode=gray] (x3+3) at (6,3) {};
                \node at (6,2) {$\vdots$};
                \node[cnode=gray] (x3+1) at (6,1) {};
                
                \node[cnode=gray] (x3-3) at (6,-3) {};
                \node at (6,-2) {$\vdots$};
                \node[cnode=gray] (x3-1) at (6,-1) {};

                \foreach \y in {1,3}
                {   \foreach \z in {1,3}
                        {   \draw (x2+\z) -- (x3+\y);
                        }
                }
                \foreach \y in {1,3}
                {   \foreach \z in {1,3}
                        {   \draw (x2-\z) -- (x3-\y);
                        }
                }
                
                \node at (7.5,+3) {$\ldots$};
                \node at (7.5,+2) {$\ldots$};
                \node at (7.5,+1) {$\ldots$};
                
                \node at (7.5,-1) {$\ldots$};
                \node at (7.5,-2) {$\ldots$};
                \node at (7.5,-3) {$\ldots$};
                
                \node[cnode=gray] (x4+3) at (9,3) {};
                \node at (9,2) {$\vdots$};
                \node[cnode=gray] (x4+1) at (9,1) {};
                
                \node[cnode=gray] (x4-3) at (9,-3) {};
                \node at (9,-2) {$\vdots$};
                \node[cnode=gray] (x4-1) at (9,-1) {};
                
                \node[cnode=gray,label=360:{\Huge$\uUBMi(x)$}] (x5+2) at (11.5,2) {};
                \node[cnode=gray,label=360:{\Huge$\meanMi(x)$}] (x5-2) at (11.5,-2) {};
                
                \draw (x4+3)--(x5+2);
                \draw (x4+1)--(x5+2);
                
                \draw (x4-1)--(x5-2);
                \draw (x4-3)--(x5-2);

                \draw [dotted, line width=0.4mm] (2.5,0.5) rectangle (9.5,+3.5);
                \draw [dotted, line width=0.4mm] (2.5,-3.5) rectangle (9.5,-0.5);
                
                \node at (0.25,-2.5) {\Huge Mean-network};
                \node at (0.25,+2.5) {\Huge uUB-network};

                \end{tikzpicture}
        }
        }
        \caption{$\NOMUi$: a modification of NOMU's original architecture for the combinatorial assignment domain.}
        \label{fig:nn_tik_extension}
        \end{center}
\vskip -0.2in
\end{figure}

\paragraph{The Loss.}
Next, we formulate a new NOMU loss function $L^\hp$ tailored to combinatorial assignment.
Since we have a closed-form expression of the 100\%-uUB as MVNN $\oneMi$ (\Cref{proposition:100uUB as MVNN}), we are able to enforce that $\meanMi\le\uUBMi\le \oneMi$ via the design of our new loss function.
Let $\meanMi$ be a trained mean-MVNN with a standard loss (e.g., MAE and L2-regularization).
Using $\meanMi$ and $\oneMi$, we then only train the parameters $\theta$ of $\uUBMi$ with loss $L^\hp$ and L2-regularization parameter $\lambda>0$, i.e., minimizing $L^\hp(\uUBMi)+\lambda\twonorm[{\theta}]^2$ via gradient descent. In particular, the parameters of $\oneMi$ and $\meanMi$ are not influenced by the training of $\uUBMi$ (see \Appendixref{sec.appednix:DetailedNOMULoss}{Appendix~D.3} for details on the loss and training procedure).

\begin{definition}[NOMU Loss Tailored to Combinatorial Assignment]\label{def:NOMU_Loss} Let $\hp=(\musqr,\muexp,\cexp, \piOneuUB,\piMean)\in \R_+^5$ be a tuple of hyperparameters and let $s(\uUBMi,x)\coloneqq\min\{\uUBMi(x),\oneMi(x)\}  - \meanMi(x)$ for all $x\in \X$. For a training set $\Dtr$, $L^\hp$ is defined as
\begin{subequations}\label{eq:NOMU_Loss_Extension}%
\begin{align}
&L^\hp(\uUBMi)\coloneqq\musqr\sum_{l=1}^{\ntr}\sL\left(\uUBMi(x^{(l)}),y^{(l)}\right)\label{subeq:dataLoss}\\
  &+\muexp\int_{[0,1]^m}g\left(-\cexp s(\uUBMi,x)\right)\,dx\label{subeq:pushUpLoss}\\&+\muexp\cexp\piOneuUB\int_{[0,1]^m}\sL\left((\uUBMi(x)-\oneMi(x))^{+}\right)\,dx\label{subeq:below100Loss}\\&+\muexp\cexp\piMean\int_{[0,1]^m}\sL\left((\meanMi(x)-\uUBMi(x))^{+}\right)\,dx\,,\label{subeq:aobveMeanLoss}
\end{align}
\end{subequations}
where $\sL$ is the smooth L1-loss with threshold $\beta$ (see \Appendixref[Appendix ]{def:appendix_smoothL1Loss}{Definition~D.1}), $(\cdot)^{+}$ the positive part, and $g\coloneqq1+\elu$\footnote{In our notation, $g(\cdot)$ is the analog of the function $e^{(\cdot)}$ used in the original NOMU loss in \citep{heiss2022nomu}.} is convex monotonically increasing with $\elu$ being the \emph{exponential linear unit} (see \Appendixref[Appendix ]{def:appendix_ELU}{Definition~D.2}).
\end{definition}
The interpretations of the four terms are as follows:
\begin{enumerate}[align= left]
	\item[\eqref{subeq:dataLoss}] enforces that $\uUBMi$ fits through the training data.
	\item[\eqref{subeq:pushUpLoss}] pushes $\uUBMi$ up as long as it is below the 100\%-uUB $\oneMi$.
	This force gets weaker the further $\uUBMi$ is above the mean $\meanMi$ (especially if $\cexp$ is large).
	$\muexp$ controls the overall strength of \eqref{subeq:pushUpLoss} and $\cexp$ controls how fast this force increases when $\uUBMi \to\meanMi$.
	Thus, increasing $\muexp$ increases the uUB and increasing $\cexp$ increases the uUBs in regions where it is close to $\meanMi$.
	Weakening \eqref{subeq:pushUpLoss} (i.e.,  $\muexp\cexp\to0$) leads to $\uUBMi \approx\meanMi$. Strengthening \eqref{subeq:pushUpLoss} by increasing $\muexp\cexp$ in relation to regularization\footnote{Regularization can be early stopping or a small number of neurons (implicit) or L2-regularization on the parameters (explicit).
	} leads to $\uUBMi\approx\oneMi$.
	\item[\eqref{subeq:below100Loss}] enforces that $\uUBMi\le\oneMi$. The strength of this term is determined by $\piOneuUB\cdot (\muexp\cexp)$, where $\piOneuUB$ is the \eqref{subeq:below100Loss}-specific hyperparameter and $\muexp\cexp$ adjusts the strength of \eqref{subeq:below100Loss} to \eqref{subeq:pushUpLoss}.
	\item[\eqref{subeq:aobveMeanLoss}] enforces $\uUBMi\ge\meanMi$. The interpretation of $\piMean$ and $\muexp\cexp$ is analogous to \eqref{subeq:below100Loss}.
\end{enumerate}

As in \cite{heiss2022nomu}, in the implementation of $L^\hp$, we approximate \Crefrange{subeq:pushUpLoss}{subeq:aobveMeanLoss} via Monte Carlo integration using additional, \emph{artificial input points} ${\Dart\coloneqq \left\{x^{(l)}\right\}_{l=1}^{\nart}\stackrel{i.i.d.}{\sim}\textrm{Unif}([0,1]^m)}$.

\paragraph{Visualization of the uUB.}
In Figure 2, we present a visualization of the output of $\NOMUi$ (i.e., $\meanMi$ and $\uUBMi$) and $\oneMi$ for the national bidder in the LSVM domain of the spectrum auction test suite (SATS) \cite{weiss2017sats}. In noiseless regression, uncertainty should vanish at observed training points, but (model) uncertainty should remain about value predictions for bundles that are very different from the bundles observed in training. \Cref{fig:1d_path_plot} shows that our uUB $\uUBMi$  nicely fulfills this.
Moreover, we have shown in \Appendixref{subsec:monotoneUUB}{Appendix~D.2} that $\uUBMi$ is monotonically increasing, since we assume that value functions fulfill the monotonicity property. This implies that once we observe a value for the full bundle, we obtain a globally bounded 100\%-uUB, i.e., see $\oneMi$ in \Cref{fig:1d_path_plot}.
Furthermore, we see that $\oneMi$ jumps to a high value when only a single item is added to an already queried bundle, but then often stays constant (e.g., $|x|=12,\ldots,18$ in \Cref{fig:1d_path_plot}).
Thus, using such a 100\%-uUB in our acquisition function, BOCA would only add a single item to an already queried bundle to have more items left for the other bidders instead of properly exploring the bundle space.
Our uUB $\uUBMi$ circumvents this via implicit and explicit regularization and yields a useful uUB.

\begin{figure}[t!]
    \begin{center}
    \includegraphics[width=1\columnwidth,trim=0 0 0 0, clip]{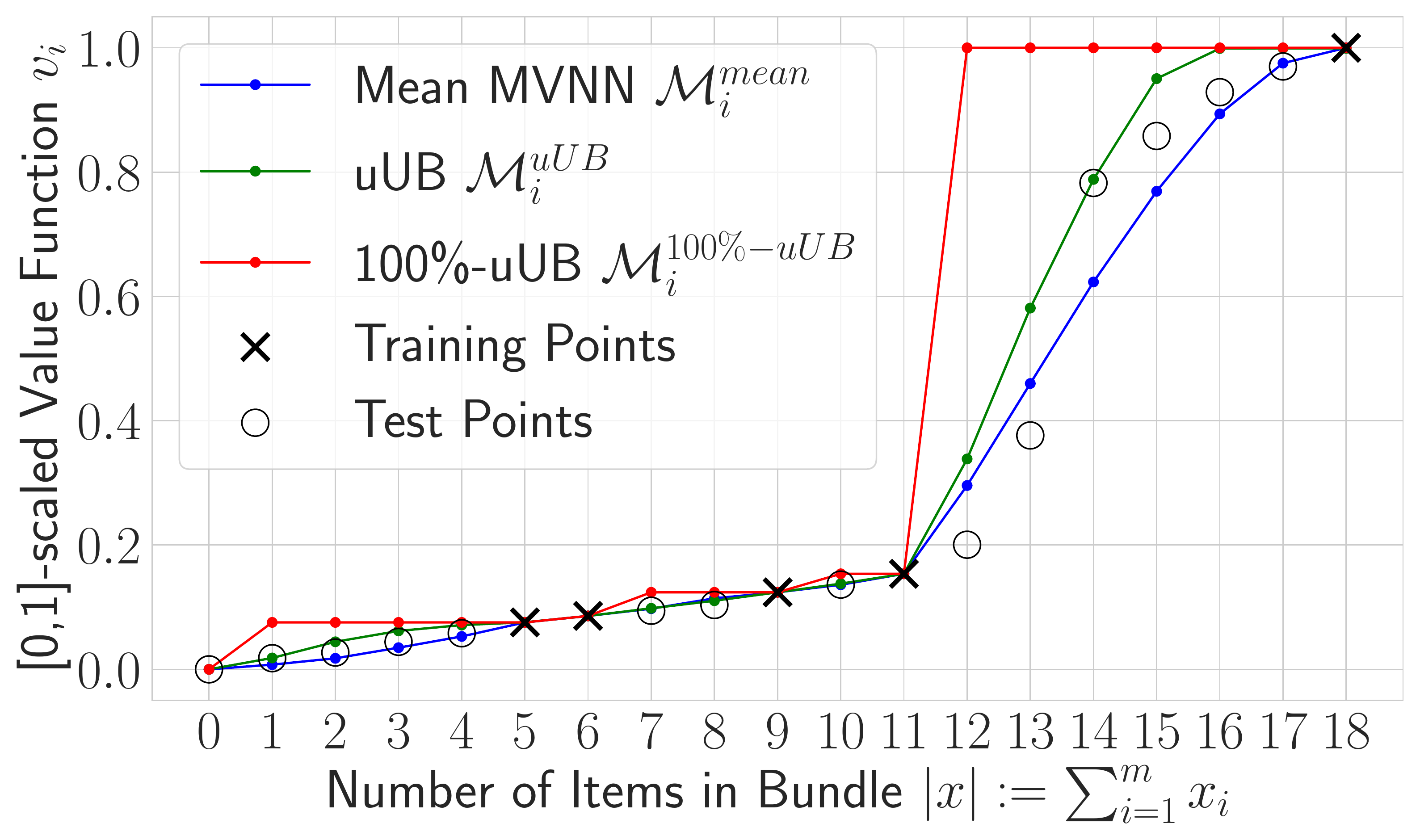}
    \caption{$\meanMi$, $\uUBMi$ and $\oneMi$ along an increasing 1D subset-path (i.e., for all bundles $x^{(j)}, x^{(k)}$ on the x-axis it holds that for $j\le k: x^{(j)}\subset x^{(k)}$).}
    \label{fig:1d_path_plot}
    \end{center}
\end{figure}
\subsection{Parameter Initialization for MVNNs}\label{subsec:New MVNN Random Initialization}
We now discuss how to properly initialize parameters for MVNNs. Importantly, the \emph{MVNN-based uUBs} $\uUBMi$ are MVNNs. As we will show next, to achieve the best performance of BOCA (in fact of any MVNN training), an adjusted, non-generic parameter initialization is important.

\paragraph{Generic Initialization.} For standard NNs, it is most common to use a parameter initialization with zero mean $\mu_k\coloneqq\E[W^{i,k}_{j,l}]=0$ and non-zero variance $\sigma_k^2\coloneqq\Var[W^{i,k}_{j,l}]\neq0$. Then the mean of each pre-activated neuron of the first hidden layer is zero and the variance $\Var[\left(W^{i,1}x\right)_j]=d^{i,0}\sigma_1^2\overline{x^2}$, if $\left(W^{i,1}_{j,l}\right)_{l=1}^{d^{i,0}}$ are i.i.d., where $\overline{x^2}=\frac{1}{d^{i,0}}\sum_{l=1}^{d^{i,0}} x_l^2$.\footnote{We assume that the biases $b^{i,k}=0$ are all initialized to zero throughout \Cref{subsec:New MVNN Random Initialization} to keep the notation simpler, while we formulate everything for the general case including random biases in \Appendixref{sec:apendix_New MVNN Random Initialization}{Appendix~E} and in our code.} Analogously, one can compute the \emph{conditional} mean 
and the \emph{conditional} variance 
of a pre-activated neuron in any layer $k$ by replacing $x$ by the output $z^{i,k-1}$ of the previous layer, i.e., $\Eco{\left(W^{i,k}z^{i,k-1}\right)_j}{z^{i,k-1}}=0$ and $\Varco{\left(W^{i,k}z^{i,k-1}\right)_j}{z^{i,k-1}}=d^{i,k-1}\sigma_k^2\overline{\left(z^{i,k-1}\right)^2}$
. For $\sigma_k\propto\frac{1}{\sqrt{d^{i,k-1}}}$, the conditional mean and variance do not depend on the layer dimensions $d^{i,k}$, which is why generic initialization methods scale the initial distribution by $s_k\propto\frac{1}{\sqrt{d^{i,k-1}}}$. 

\begin{figure}[t!]
    \begin{center}
    \includegraphics[width=1\columnwidth,trim=0 0 0 0, clip]{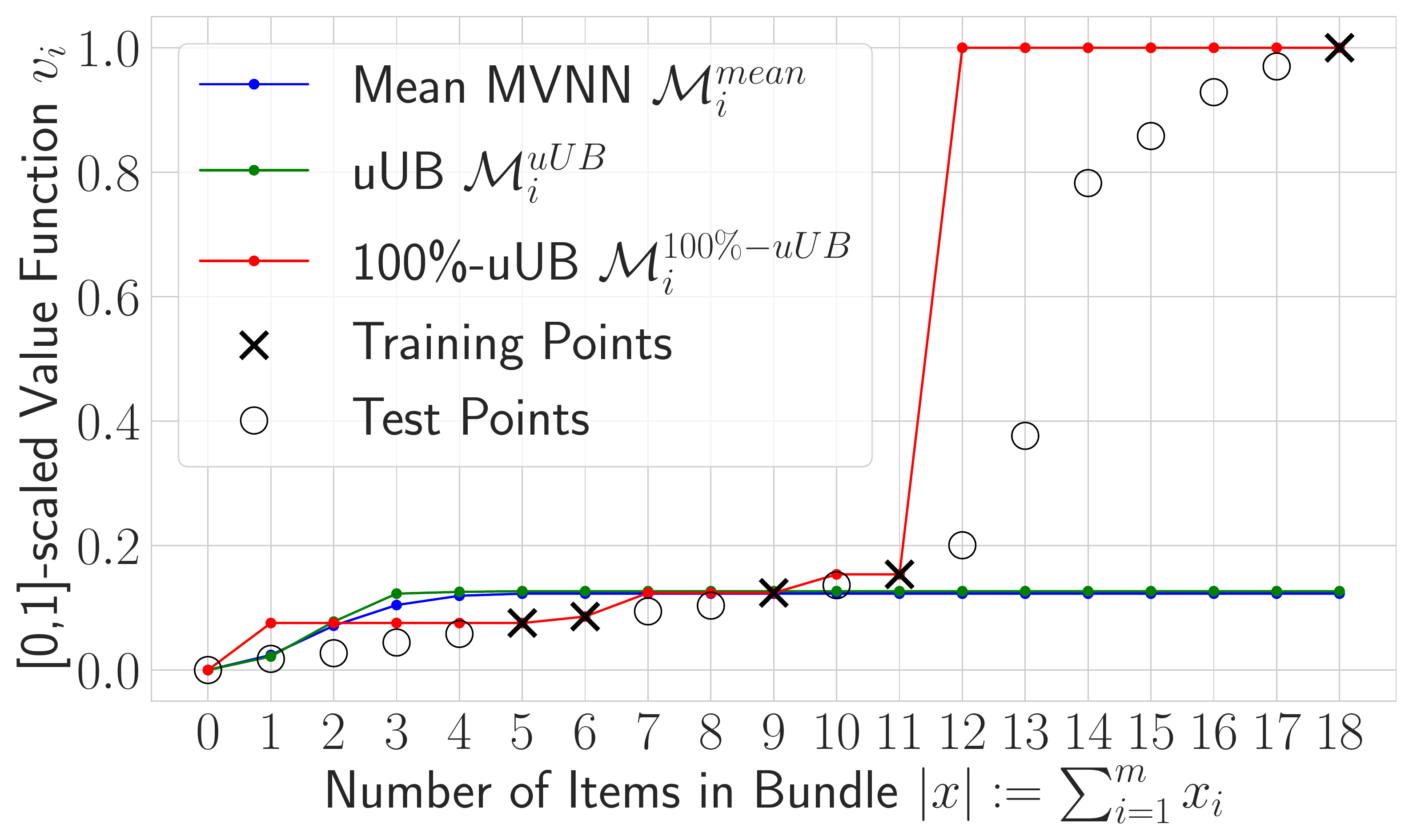}
    \caption{In contrast to our proposed initialization (see \Cref{fig:1d_path_plot}), training fails with generic initialization already for relatively small [64,64]-architectures that were used here.}
    \label{fig:1d_path_plot_initialization}
    \end{center}
\end{figure}
\paragraph{Problem.} Unfortunately, this generic initialization approach can dramatically fail for MVNNs: For any non-zero initialization, the non-negativity constraint of the weights implies that the mean $\mu_k>0$. This implies that the mean of a pre-activated neuron in the first hidden layer is $\E[\left(W^{i,1}x\right)_j]=d^{i,0}\mu_1\bar{x}$. For a generic scaling $s_k$ one would obtain $\mu_k\propto\frac{1}{\sqrt{d^{i,k-1}}}$ and thus the mean $\E[\left(W^{i,1}x\right)_j]\propto d^{i,0}\frac{1}{\sqrt{d^{i,0}}}\bar{x}={\sqrt{d^{i,0}}}\bar{x}$ of the pre-activated neurons diverges to infinity with a rate of ${\sqrt{d^{i,0}}}$ as $d^{i,0}\to \infty$. Analogously, the pre-activated neurons of every layer diverge to infinity as $d^{i,k-1}\to \infty$.  This is particularly problematic for bReLUs (as used in MVNNs) as their gradient is zero on $[0,t^{i,k}]^c$. \Cref{fig:1d_path_plot_initialization} shows that both MVNNs $\meanMi$ and $\uUBMi$ get ``stuck.'' This happens because already at initialization, every neuron in the first hidden layer has a pre-activation that is larger than $t^{i,1}$ for every training point.

This could be solved by scaling down the initial weights even more, e.g., $W^{i,k}_{j,l}\sim\text{Unif}[0,\frac{2}{d^{i,k-1}}]$ resulting in $\mu_k=\frac{1}{d^{i,k-1}}$. However, since for $W^{i,k}_{j,l}\sim\text{Unif}[0,\frac{2}{d^{i,k-1}}]$ it holds that $\sigma^2_k\propto \frac{1}{(d^{i,k-1})^2}$, this induces a new problem of vanishing conditional variance $\Varco{\left(W^{i,k}z^{i,k-1}\right)_j}{z^{i,k-1}}$ with a rate of $\BigO(\frac{1}{d^{i,k-1}})$ for wide (i.e., $d^{i,k-1}$ large) MVNNs. Overall, it is impossible to simultaneously solve both problems by just scaling the distribution by a factor $s_k$, because the conditional mean $\Eco{\left(W^{i,k}z^{i,k-1}\right)_j}{z^{i,k-1}}$ scales with $s_k \cdot d^{i,k-1}$ and the conditional variance $\Varco{\left(W^{i,k}z^{i,k-1}\right)_j}{z^{i,k-1}}$ scales with $s_k^2 \cdot d^{i,k-1}$. Thus, for wide MVNNs, one of those two problems (i.e., either diverging expectation or vanishing variance) would persist.

\paragraph{Solution.} We introduce a new initialization method that solves \emph{both} problems at the same time. For this, we propose a mixture distribution of two different uniform distributions (see \Appendixref[Appendix ]{def:mixtureDistribution}{Definition~E.1}). For each layer $k$, we independently sample all weights $W^{i,k}_{jl}$ i.i.d.\ with probability $(1-p_k)$ from $\Unif[0,A_k]$, and with probability $p_k$ from $\Unif[0,B_k]$.
If we choose $p_k$ and $A_k$ small enough, we can get arbitrarily small $\mu_k$ while not reducing $\sigma_k$ too much. In \Appendixref{sec:apendix_New MVNN Random Initialization}{Appendix~E}, we provide formulas for how to choose $A_k$, $B_k$ and $p_k$ depending on $d^{i,k-1}$. In \Appendixref{thm:ScalingInitConstantEandV}{Theorem~3} in \Appendixref{sec:apendix_New MVNN Random Initialization}{Appendix~E}, we prove that, if the parameters are chosen in this way, then the conditional mean and conditional variance neither explode nor vanish with increasing $d^{i,k-1}$ but rather stay constant for large $d^{i,k-1}$. Note that, in \Cref{fig:1d_path_plot}, for $\meanMi$ and $\uUBMi$, we used our proposed initialization method for suitable $A_k$, $B_k$ and $p_k$, such that the problem induced by a generic initialization from \Cref{fig:1d_path_plot_initialization} is resolved.

\subsection{Mixed Integer Linear Program (MILP)}\label{subsec:New MVNN-based MIP}
A key step in ML-powered iterative combinatorial assignment mechanisms is finding the (predicted) social welfare-maximizing allocation, i.e., solving the \emph{ML-based WDP}. Thus, a key requirement posed on any acquisition function $\mathcal{A}$ in such a mechanism is to be able to efficiently solve $\max\limits_{a\in \F} \mathcal{A}(a)$. Recall that, to define our acquisition function $\mathcal{A}$, we use $\mathcal{A}(a) = \sum_{i \in N}\Ai{a_i}$ where the $\mathcal{A}_i$'s are bidder-specific upper uncertainty bounds. Thus, the ML-based WDP becomes
\begin{align}\label{eq:ML-WDP}
\max_{a\in \F} \sum_{i \in N}\Ai{a_i}.
\end{align}

\cited{weissteiner2022monotone} proposed a MILP for MVNNs with $\Ai{}\coloneqq\MVNNi{}$ to efficiently solve \cref{eq:ML-WDP}. Their MILP was based on a reformulation of the $\min(\cdot,\cdot)$ and $\max(\cdot,\cdot)$ in the bReLU activation $\min(\max(\cdot,0),t)$. Thus, it required twice the number of binary variables \emph{and} linear constraints as for a plain ReLU-NN. Since we use an MVNN-based uUB $\Ai{}\coloneqq\uUBMi{}$ to define our acquisition function, we could directly use their MILP formulation. However, instead, we propose a new MILP, which is significantly more succinct.
For this, let $o^{i, k}\coloneqq W^{i, k}z^{i, k-1} + b^{i, k}$ be the \emph{pre}-activated output and $z^{i, k}\coloneqq\varphi_{0,t^{i,k}}(o^{i, k})$  be the output of the $k$\textsuperscript{th} layer with $l^{i, k}\le o^{i, k} \le u^{i, k}$, where the tight lower (upper) bound $l^{i, k}$ ($u^{i, k})$ is derived by forward-propagating the empty (full) bundle \cite[Fact 1]{weissteiner2022monotone}. In \Cref{thm:milp}, we state our new MILP (see \Appendixref{subsec:appendix_proof_MILP}{Appendix~F.1} for the proof).\footnote{All vector inequalities should be understood component-wise.}
\begin{theorem}[MVNN MILP Tailored to Combinatorial Assignment]\label{thm:milp}
Let $\Ai{}=\uUBMi$ be our MVNN-based uUBs. The ML-based WDP~\eqref{eq:ML-WDP} can be formulated as the following MILP:
    \begin{align}
        &\max\limits_{a\in \F, z^{i,k},\alpha^{i,k},\beta^{i,k}}\left\{\sum_{i \in N} W^{i, K_{i}} z^{i, K_{i}-1}\right\}\label{eq:milp_objective}\\
        &\hspace{-1cm}\text{s.t. for } i\in N \text{ and } k \in \{1,\ldots,K_i-1\}\notag\\
        &z^{i,0}=a_i\label{eq:thm(i)}\\
        &z^{i, k}\le \alpha^{i,k}\cdot t^{i, k}\label{eq:thm(ii)}\\
        &z^{i, k}\le o^{i, k} - l^{i,k}\cdot(1-\alpha^{i,k})\label{eq:thm(iii)}\\
        &z^{i, k}\ge \beta^{i,k}\cdot t^{i, k}\label{eq:thm(iv)}\\
        &z^{i, k}\ge o^{i, k} + (t^{i, k}-u) \beta^{i,k}\label{eq:thm(v)}\\
        &\alpha^{i,k}\in \{0,1\}^{d^{i,k}},\,\beta^{i,k}\in \{0,1\}^{d^{i,k}}\label{eq:lemma(vi)}
    \end{align}
\end{theorem}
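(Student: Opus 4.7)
The plan is to show that for each bidder $i$, the MILP constraints encode the forward pass of the MVNN $\uUBMi$ exactly, so that at any feasible point the per-bidder contribution $W^{i,K_i} z^{i,K_i-1}$ coincides with $\uUBMi(a_i)$. Since the MILP also enforces $a \in \F$ directly, the equivalence of the two optimal values will follow immediately from the equivalence of the objectives over the shared feasible set of allocations.

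The heart of the argument is a per-neuron encoding lemma: for a fixed pre-activation $o^{i,k}$ satisfying $l^{i,k} \le o^{i,k} \le u^{i,k}$ (componentwise), the constraints indexed by $(i,k)$ are feasible if and only if $z^{i,k} = \varphi_{0,t^{i,k}}(o^{i,k})$. I would prove this neuron by neuron, case-splitting on the four combinations of $(\alpha^{i,k}_j, \beta^{i,k}_j) \in \{0,1\}^2$ and the three branches of the bReLU. Concretely, $(\alpha,\beta) = (0,1)$ is infeasible globally since it imposes $z \le 0$ and $z \ge t^{i,k}>0$; $(0,0)$ is feasible precisely when $o \le 0$ and pins $z = 0$; $(1,0)$ is feasible precisely when $0 \le o \le t^{i,k}$ and pins $z = o$; and $(1,1)$ is feasible precisely when $o \ge t^{i,k}$ and pins $z = t^{i,k}$. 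The bounds $l^{i,k} \le o^{i,k} \le u^{i,k}$ are exactly what is needed to verify that the ``inactive-branch'' inequalities become slack: $z \le o - l^{i,k}$ is slack whenever $\alpha = 0$ (because $o - l^{i,k} \ge 0 \ge z$), and $z \ge o + (t^{i,k} - u^{i,k})$ is slack whenever $\beta = 1$ (because $o + t^{i,k} - u^{i,k} \le t^{i,k} \le z$). Here Fact~1 of \cited{weissteiner2022monotone}, which furnishes tight bounds obtained by forward-propagating the empty and the full bundle through the non-negative MVNN, is what guarantees $o^{i,k}$ always lies in this interval.

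Equipped with the lemma, I proceed by induction on the layer index $k = 1, \ldots, K_i - 1$. The base case uses the equality $z^{i,0} = a_i$ to identify the network input with the allocation variable, and the inductive step applies the lemma at layer $k$ with $o^{i,k} = W^{i,k} z^{i,k-1} + b^{i,k}$, using non-negativity of $W^{i,k}$ and the previous layer's bReLU bounds to keep $o^{i,k}$ inside $[l^{i,k}, u^{i,k}]$. Hence, for every allocation $a \in \F$, the auxiliary variables $(z, \alpha, \beta)$ are forced to reproduce the MVNN forward pass of $\uUBMi$ on $a_i$. Because $\uUBMi$ carries no activation on its output layer, the contribution $W^{i,K_i} z^{i,K_i-1}$ equals $\uUBMi(a_i)$; summing over $i$ and maximising over $a \in \F$ gives exactly the ML-based WDP value.

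The main subtlety to treat carefully is the boundary behaviour at $o^{i,k}_j = 0$ or $o^{i,k}_j = t^{i,k}$, where two distinct $(\alpha_j, \beta_j)$ assignments are simultaneously feasible; I would observe that $z^{i,k}_j$ is still uniquely pinned to $\varphi_{0,t^{i,k}}(o^{i,k}_j)$ in both assignments, so the multiplicity is cosmetic and affects neither feasibility nor the objective. A secondary point to flag is the sign of $(t^{i,k} - u^{i,k})$ in the fifth constraint: in the saturating regime $u^{i,k} \ge t^{i,k}$ the coefficient is non-positive and the constraint enforces the desired slackness, while in the non-saturating regime $u^{i,k} < t^{i,k}$ the $(1,1)$ branch is never feasible in the first place so the constraint remains harmless.
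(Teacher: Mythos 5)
Your proposal is correct and follows essentially the same route as the paper: a per-neuron lemma showing that constraints \eqref{eq:thm(ii)}--\eqref{eq:lemma(vi)} encode the bReLU $\varphi_{0,t^{i,k}}$ exactly (using the propagated box bounds $l^{i,k}\le o^{i,k}\le u^{i,k}$ to slacken the inactive inequalities), followed by layer-by-layer composition starting from $z^{i,0}=a_i$. The only difference is organizational---you case-split on the four $(\alpha^{i,k}_j,\beta^{i,k}_j)$ assignments and verify that every feasible one pins $z^{i,k}_j$ to the activation value, whereas the paper cases on the three bReLU branches and exhibits the corresponding binary assignment; your version is, if anything, slightly more explicit about ruling out spurious feasible points.
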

Note that for each neuron of $\Ai{}=\uUBMi$, our new MILP has only $4$ linear constraints, i.e., respective components of \crefrange{eq:thm(ii)}{eq:thm(v)}, compared to $8$ in \cite{weissteiner2022monotone}. Moreover, in contrast to the MILP in \cite{weissteiner2022monotone}, our MILP does not make use of any ``big-M'' constraints, which are known to be numerically unstable.

\begin{table*}[ht]
    \renewcommand\arraystretch{1.2}
    \setlength\tabcolsep{2pt}
	\robustify\bfseries
	\centering
	\begin{sc}
	\resizebox{1\textwidth}{!}{
	\small
    \begin{tabular}{ccccccccc}
    \toprule
         &  & \multicolumn{5}{c}{\textbf{Efficiency Loss in \%\,\,\textdownarrow}} & \multicolumn{2}{c}{\textbf{T-Test for Efficiency:}}\\
        \cmidrule(l{2pt}r{2pt}){3-7}
        \cmidrule(l{2pt}r{2pt}){8-9}
    \textbf{Domain} &$\boldsymbol{\Qmax}$ &  \multicolumn{1}{c}{BOCA}  &\multicolumn{1}{c}{MVNN-MLCA}  &   \multicolumn{1}{c}{NN-MLCA}  & \multicolumn{1}{c}{FT-MLCA}& \multicolumn{1}{c}{RS}  & $\mathcal{H}_0: \mu_{\text{MVNN-MLCA}}\le\mu_{\text{BOCA}}$ & $\mathcal{H}_0: \mu_{\text{NN-MLCA}}\le\mu_{\text{BOCA}}$\\
        \cmidrule(l{2pt}r{2pt}){1-2}
        \cmidrule(l{2pt}r{2pt}){3-7}
        \cmidrule(l{2pt}r{2pt}){8-9}
    
    LSVM & 100& \ccell 0.39$\pm$0.30  &\ccell00.70$\pm$0.40 & 02.91$\pm$1.44 & 01.54 $\pm$0.65&31.73$\pm$2.15 &  $p_{\text{val}}=9\mathrm{e}{-2}$&$p_{\text{val}}=3\mathrm{e}{-4\hphantom{3}}$\\
    
    SRVM& 100&  \ccell 0.06$\pm$0.02    &00.23$\pm$0.06  & 01.13$\pm$0.22 &00.72$\pm$0.16 &28.56$\pm$1.74 & $p_{\text{val}}=5\mathrm{e}{-6}$&$p_{\text{val}}=2\mathrm{e}{-13}$\\
    
    MRVM& 100&  \ccell 7.77$\pm$0.34   &\ccell08.16$\pm$0.41 & 09.05$\pm$0.53 & 10.37$\pm$0.57 &48.79$\pm$1.13 & $p_{\text{val}}=8\mathrm{e}{-2}$&$p_{\text{val}}=2\mathrm{e}{-5\hphantom{3}}$\\
    
    \bottomrule
    \end{tabular}
}
    \end{sc}
    \vskip -0.1 in
    \caption{BOCA vs MVNN-MLCA, NN-MLCA, Fourier transform (FT)-MLCA and random search (RS). Shown are averages and a 95\% CI on a test set of $50$ instances. Winners based on a t-test with significance level of 1\% are marked in grey.}
\label{tab:efficiency_loss_mlca}
\end{table*}
\begin{figure*}
    \begin{center}
    \resizebox{1\textwidth}{!}{
    \includegraphics{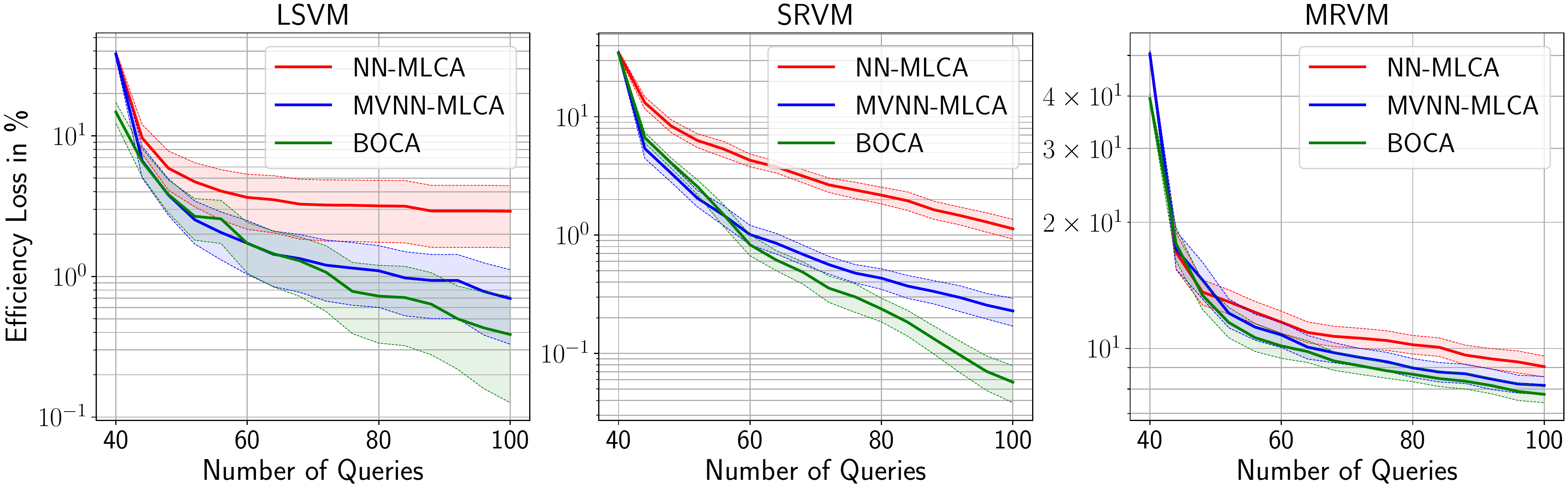}
    }
    \caption{Efficiency loss paths (i.e., regret plots) of BOCA compared to the results from \cited{weissteiner2022monotone} of MVNN-MLCA and NN-MLCA without any notion of uncertainty. Shown are averages with 95\% CIs over 50 CA instances.}
    \label{fig:efficiency_path_plot_summary}
    \end{center}
\end{figure*}
\section{Experiments}\label{sec:experiments}
In this section, we experimentally evaluate the performance of BOCA in CAs. To this end, we equip the MLCA mechanism (see \Cref{subsec:A Machine Learning powered ICA}) with our new acquisition function $\mathcal{A}(a)=\sum_{i\in N} \uUBMi(a_i)$. We compare the efficiency of BOCA against the previously proposed MVNN-based and NN-based MLCA from \cite{weissteiner2022monotone} which do not explicitly model the mechanism's uncertainty over values for not yet elicited bundles.\footnote{In these methods, uncertainty over not yet elicited bundles is only modeled via the retraining of the (MV)NNs in each round, i.e., the random parameter initialization of the (MV)NNs. This can be seen as simple form of Thompson sampling (see last paragraph in \Appendixref{sec:appendix:BO}{Appendix~C}).} We use our new parameter initialization method (\Cref{subsec:New MVNN Random Initialization}) for $\uUBMi$, and we use our new MILP (\Cref{thm:milp}) for solving the WDPs.

\paragraph{Experiment Setup.}
To generate synthetic CA instances, we use the following three domains from the spectrum auction test suite (SATS) \cite{weiss2017sats}: LSVM, SRVM, and MRVM (see \Appendixref{subsec:appendix_SATS_domains}{Appendix~G.1} for details).\footnote{We do not use GSVM, as \cited{weissteiner2022monotone} already achieved 0\% efficiency loss in GSVM via MVNN-based MLCA.} SATS gives us access to the true optimal allocation $a^*$, which we use to measure the \emph{efficiency loss}, i.e., $1-V(a^*_R)/V(a^*)$ when eliciting reports $R$ via MLCA. We report efficiency loss (and not revenue), as spectrum auctions are government-run, with a mandate to maximize welfare \citep{cramton2013spectrumauctions}. See \Appendixref{subsec:revenue}{Appendix~G.6} for a discussion of the corresponding results on revenue. To enable a fair comparison against prior work, for each domain, we use $\Qinit=40$ initial random queries (including the full bundle for the calculation of $\oneMi$) and set the query budget to $\Qmax=100$ (see \Appendixref{subsec:Reduced Number of Initial Queries}{Appendix~G.8} for results for $\Qinit=20$). We terminate any mechanism in an intermediate iteration if it already found an allocation with 0\% efficiency loss.

\paragraph{Hyperparameter Optimization (HPO).}
We use \emph{random search (RS)} \citep{bergstra2012random} to optimize the hyperparameters of the mean MVNN $\meanMi$ and of our MVNN-based uUB $\uUBMi$. The HPO includes the NN-architecture parameters, training parameters, NOMU parameters, and initialization parameters (see \Cref{subsec:New MVNN Random Initialization}).
RS was carried out independently for each bidder type and SATS domain with a budget of $500$ configurations, where each configuration was evaluated on $100$ SATS instances. For each instance, the MVNNs $\meanMi$ and $\uUBMi$ were trained on uniformly at random chosen bundle-value pairs $\Dtr$ and evaluated on a disjoint test set of different bundle-value pairs $\Dtest$. To select the winner configuration, we consider as evaluation metric the quantile-loss on the test set \emph{and} the MAE on the training set, i.e., for each configuration and instance we calculate
\begin{align}\label{eq:evaluation_metric_HPO}
&\frac{1}{|\Dtest|}\hspace{-0.1cm}\sum\limits_{(x,y)\in \Dtest}\hspace{-0.4cm}\max\{(y\hspace{-0.08cm}-\hspace{-0.08cm}\uUBMi(x))q,(\uUBMi(x)\hspace{-0.08cm}-\hspace{-0.08cm}y)(1\hspace{-0.08cm}-\hspace{-0.08cm}q)\}\hspace{-0.08cm}\notag\\
& + \text{MAE}(\Dtr),
\end{align}
which we then average over all $100$ instances. We used four quantile parameters $q\in \{0.6,0.75,0.9,0.95\}$ in \cref{eq:evaluation_metric_HPO} to achieve different levels of exploration (i.e., the resulting uUBs become larger the more we increase $q$ in \cref{eq:evaluation_metric_HPO}). This evaluation metric \emph{simultaneously} measures the quality of the uUB on the test data (via the quantile-loss) as well as the quality of the uUB predictions on the training data (via the MAE). For each quantile $q$ and SATS domain, we then proceed with the winner configuration of $\uUBMi$ and evaluate the efficiency of BOCA on a separate set of 50 instances. Details on hyperparameter ranges and the training procedure are provided in \Appendixref{subsec:appendix_hpo,subsec:Details MVNN-Training}{Appendices~G.2 and~G.3}.

\paragraph{Results.}
In \Cref{tab:efficiency_loss_mlca}, we show the average efficiency loss of each approach after $\Qmax=100$ queries (see \Appendixref{subsec:Detailed MLCA Results}{Appendix~G.5} for details). We see that BOCA significantly outperforms MVNN-MLCA \citep{weissteiner2022monotone} in SRVM, and it performs on-par in LSVM and MRVM, with a better average performance. Since MVNNs previously achieved SOTA performance, BOCA also outperforms the other benchmarks (i.e., NN \citep{weissteiner2020deep} and FT-MLCA \citep{weissteiner2022fourier}). RS's poor performance highlights the intrinsic difficulty of this task. The amount of exploration needed is domain dependent (e.g., multi-modality of the objective), which explains why the significance of BOCA's improvement varies across domains. However, our results also show that using an uUB (as in BOCA) instead of just a mean prediction (as in MVNN-MLCA) never hurts.

\Cref{fig:efficiency_path_plot_summary} shows the efficiency loss path for all domains. We see that the superior (average) performance of $\uUBMi$ does not only hold at the end of the auction (at $\Qmax=100$), but also for a large range of queries: in LSVM, BOCA is better for [70,100]; in SRVM, BOCA is significantly better for [70,100]; in MRVM, BOCA is better for [50,100]. See \Appendixref{subsec:revenue}{Appendix~G.6} for results on revenue where BOCA significantly outperforms MVNN-MLCA also for MRVM. In \Appendixref{subsec:Ablation Study - When is Exploration Needed}{Appendix~G.7}, we study to what degree BOCA's performance increase is due to (a) our uncertainty model (\Cref{subsec:NOMU for Combinatorial Assignment}) versus (b) our new parameter initialization method (\Cref{subsec:New MVNN Random Initialization}).   Finally, in \Appendixref{subsec:Reduced Number of Initial Queries}{Appendix~G.8}, we provide further experiments for a reduced number of $\Qinit=20$ initial queries, which lead to similar results as shown in \Cref{tab:efficiency_loss_mlca}.

\section{Conclusion}
In this paper, we have proposed a Bayesian optimization-based combinatorial assignment (BOCA) mechanism. On a conceptual level, our main contribution was the integration of model uncertainty over agents' preferences into ML-based preference elicitation. On a technical level, we have designed a new method for estimating an upper uncertainty bound that exploits the monotonicity of agents' preferences in the combinatorial assignment domain and the finite nature of this setting. Our experiments have shown that BOCA performs as good or better than the SOTA in terms of efficiency. An interesting direction for future work is the evaluation of BOCA in other combinatorial assignment domains, such as combinatorial exchanges or course allocation (e.g., see \cite{Soumalias2023Machine}). Finally, it would also be interesting to apply BOCA's conceptual idea in the combinatorial BO settings outside of combinatorial assignment.

\section*{Acknowledgments}
We thank the anonymous reviewers for helpful comments. This paper is part of a project that has received funding from the European Research Council (ERC)
under the European Union’s Horizon 2020 research and innovation program (Grant agreement No. 805542).

\bibliography{BOCA_references}

\if\showAppendix1
\clearpage
\appendix
\section*{Appendix}

\counterwithin{definition}{section}
\counterwithin{corollary}{section}
\counterwithin{problem}{section}
\counterwithin{example}{section}
\counterwithin{remark}{section}
\counterwithin{fact}{section}

\section{A Machine Learning-Powered ICA}\label{sec:appendix_A Machine Learning powered ICA}
In this section, we present in detail the \textit{machine learning-powered combinatorial auction (MLCA)} by \cited{brero2021workingpaper}.

At the core of MLCA is a \textit{query module} (Algorithm~\ref{alg:QueryModule}), which, for each bidder $i\in I\subseteq N$, determines a new value query $q_i$. First, in the \textit{estimation step} (Line 1), an ML algorithm $\mathcal{A}_i$ is used to learn bidder $i$'s valuation from reports $R_i$. Next, in the \textit{optimization step} (Line 2), an \textit{ML-based WDP} is solved to find a candidate $q$ of value queries. In principle, any ML algorithm $\mathcal{A}_i$ that allows for solving the corresponding ML-based WDP in a fast way could be used. Finally, if $q_i$ has already been queried before (Line 4), another, more restricted ML-based WDP (Line 6) is solved and $q_i$ is updated correspondingly. This ensures that all final queries $q$ are new.
\setlength{\textfloatsep}{5pt}
\begin{algorithm}[h!]
        \DontPrintSemicolon
        \SetKwInOut{inputs}{Inputs}
        \inputs{~Index set of bidders $I$ and reported values $R$}
    \lForEach(\Comment*[f]{\color{blue}Estimation step}){$i \in I$}{
    \hspace{-0.07cm}Fit $\mathcal{A}_i$ on $R_i$: $\mathcal{A}_i[R_i]$
    }
    Solve $q \in \argmax\limits_{a \in {\F}}\sum\limits_{i \in I} \mathcal{A}_i[R_i](a_i)$ \hspace{-0.03cm}\Comment*[r]{\color{blue}Optimization step}
    \ForEach{$i \in I$}{
        \If(\Comment*[f]{\color{blue} Bundle already queried}){$(q_i,\hvi{q_i})\in R_i$}{ 
        Define $\pF=  \{a\in \F : a_i \neq x, \forall (x,\hvi{x})\in R_i\}$\;
        Re-solve $\pq \in \argmax_{a \in \pF}\sum_{l \in I} \mathcal{A}_l[R_l](a_l)$\;
        Update $q_i = \pqi\;$
        }
    }
    \Return{Profile of new queries $q=(q_1,\ldots,q_n)$}
    \caption{\textsc{NextQueries}$(I,R)$\, {\scriptsize (Brero et al. 2021)}}
    \label{alg:QueryModule}
\end{algorithm}

In Algorithm~\ref{MLCA}, we present \textsc{Mlca}. In the following, let $R_{-i}=(R_1,\ldots,R_{i-1},R_{i+1},\ldots, R_n)$. \textsc{Mlca} proceeds in rounds until a maximum number of queries per bidder $\Qmax$ is reached. In each round, it calls Algorithm \ref{alg:QueryModule}  $(\Qround-1)n+1$ times: for each bidder $i\in N$, $\Qround-1$ times excluding a different bidder $j\neq i$ (Lines 5--10,  sampled \textit{marginal economies}) and once including all bidders (Line 11, \textit{main economy}). In total each bidder is queried $\Qround$ bundles per round in \textsc{MLCA}. At the end of each round, the mechanism receives reports $\Rnew$ from all bidders for the newly generated queries $\qnew$ and updates the overall elicited reports $R$ (Lines 12--14). In Lines 16--17, \textsc{Mlca} computes an allocation $a^*_R$ that maximizes the \emph{reported} social welfare (see \Cref{WDPFiniteReports}) and determines VCG payments $p(R)$ based on the reported values $R$ (see \Appendixref[Appendix ]{def:vcg_payments}{Definition~B.1}).
\setlength{\textfloatsep}{5pt}
\begin{algorithm}[t!]
        \DontPrintSemicolon
        \SetKwInOut{parameters}{Params}
        \parameters{$\Qinit,\Qmax,\Qround$ {initial, max and \#queries/round}}
    \ForEach{$i \in N$}{Receive reports $R_i$ for $\Qinit$ randomly drawn bundles}
    \For(\Comment*[f]{\hspace{-0.05cm}\color{blue}Round iterator}){$k=1,...,\floor{(\Qmax-\Qinit)/\Qround}$}{
        \ForEach(\Comment*[f]{\color{blue}Marginal economy queries}){$i \in N$}{
            {Draw uniformly without replacement $(\Qround\hspace{-0.1cm}-\hspace{-0.1cm}1)$ bidders from $N\setminus\{i\}$ and store them in $\tilde{N}$}\;
            \ForEach{$j \in \tilde{N}$}{
            $\qnew=\qnew\cup$ \textit{NextQueries$(N\setminus\{j\},R_{-j})$}
            }
        }
        $\qnew=$ \textit{NextQueries$(N,R)$} \Comment*[r]{\color{blue}Main economy queries}
        \ForEach{$i \in N$}{
         Receive reports $\Rnewi$ for $\qnew_i$, set $R_i=R_i\cup\Rnewi$
        }
    }
    Given elicited reports $R$ compute $a^*_{R}$ as in \Cref{WDPFiniteReports}\;
    Given elicited reports $R$ compute VCG-payments $p(R)$\;
    \Return{Final allocation $a^*_{R}$ and payments $p(R)$}
    \caption{\small \textsc{Mlca}($\Qinit,\Qmax,\Qround$)\, {\scriptsize (Brero et al. 2021)}}
    \label{MLCA}
\end{algorithm}

In this paper, we consider the following two minor adaptations of the generic \textsc{Mlca} mechanism described above:
\begin{enumerate}
    \item \textbf{Balanced and global marginal economies:} In Lines 5--6 of \Cref{MLCA}, \textsc{Mlca} draws \emph{for each bidder $i\in N$} uniformly at random a set of marginal economies $\tilde{N}$ to generate queries in this marginal economy. However, this implies that at the end of the auction it only holds on average that the same number of queries is asked from each bidder in each marginal economy. Moreover, since $\tilde{N}$ is re-drawn for each bidder this creates an computational overhead, since typically the WDPs in Line 8 in \textit{NextQueries} needs to be solved more often. For example consider the case $N=\{1,2,3,4,5,6\}, \Qround=3$ and that for bidder $1$ the $\Qround-1=2$ sampled marginal economies were given as $\tilde{N}=\{3,4\}$, whilst for bidder $2$, $\tilde{N}=\{5,6\}$, and for bidder $3$, $\tilde{N}=\{1,2\}$. In this case, the WDPs in \textit{NextQueries} would need to be solved $6$ times, which is the maximum possible. In our implementation, we change the following two things: First, we reduce the computational overhead by once globally selecting $\Qround$ marginal economies in each iteration, i.e., we select a set $\tilde{N}_{global}$ consisting of $\Qround$ marginal economies before Line 5, and then select $\tilde{N}$ for each bidder $i\in N$ in the loop in Line 5 as admissible subset of size $\Qround-1$ of $\tilde{N}_{global}$. In the above example, if $\tilde{N}_{global}=\{3,4,1\}$, then this ensures that only $\Qround=3$ WDPs in the marginal economies are solved in one iteration. Second, we do not determine $\tilde{N}_{global}$ uniformly at random, but ensure that at the end of the auction each marginal economy was selected equally often up to a difference in counts of at most one.
    \item \textbf{Single training per iteration:} To further reduce computational overhead, we train each bidder's ML algorithm $\mathcal{A}_i$ once at the beginning of each iteration, and then only select in \textit{NextQueries} the trained $\mathcal{A}_i$ corresponding to the active set of bidders $I$. This reduces the amount of total training procedures per iteration from (worst case) $n^2$ to $n$.
\end{enumerate}

\section{Incentives of MLCA}\label{sec:appendix_Incentives of MLCA}
In this section, we review the key arguments by \cited{brero2021workingpaper} why MLCA has good incentives in practice. First, we define VCG-payments\footnote{VCG is an abbreviation for \enquote{Vickrey--Clarke--Groves}. For the VCG-mechanism (which can be seen as a generalization of the second prize auction) it is always optimal for the bidders to report the truth.} given bidder's reports.

\begin{definition}{\textsc{(VCG Payments from Reports)}}\label{def:vcg_payments}
Let $R=(R_1,\ldots,R_n)$ denote an elicited set of reported bundle-value pairs from each bidder obtained from \textsc{Mlca} (\Cref{MLCA}) and let $R_{-i}\coloneqq(R_1,\ldots,R_{i-1},R_{i+1},\ldots,R_n)$. We then calculate the VCG payments $p(R)=(p(R)_1\ldots,p(R)_n) \in \R_+^n$ as follows:
\begin{align}\label{VCGPayments}
&p(R)_i \coloneqq \hspace{-0.2cm}\sum_{j \in N \setminus \{i\}} \hvj{}{}\left(\left(a^*_{R_{-i}}\right)_j\right) - \hspace{-0.2cm}\sum_{j \in N \setminus \{i\}}\hvj{}{}\left(\left(a^*_{R}\right)_j\right).
\end{align}
where $a^*_{R_{-i}}$ maximizes the reported social welfare (SW) when excluding bidder $i$, i.e.,
\begin{align}
&a^*_{R_{-i}}\in \argmax_{a \in \F} \hV{a|R_{-i}} = \argmax_{a \in \F}\hspace{-0.4cm}\sum_{\substack{j \in N\setminus\{i\}:\\ \left(a_j,\hvj{}(a_j)\right)\in R_j}}\hspace{-0.4cm}\hvj{}(a_j),
\end{align}
and $a^*_R$ is a reported-social-welfare-maximizing allocation (including all bidders), i.e,
\begin{align}
&a^*_{R}\in \argmax_{a \in \F} \hV{a|R} = \argmax_{a \in \F} \hspace{-1.3cm}\sum_{\hspace{1cm}i \in N:\, \left(a_i,\hvi{}(a_i)\right)\in R_i}\hspace{-1.3cm}\hvi{}(a_i).
\end{align}
\end{definition}

Therefore, when using VCG, bidder $i$'s utility is:
{\small\begin{align*}
u_i \hspace{-0.05cm}=&v_i((a_R^*)_i)-p(R)_i\\
=&\hspace{-0.05cm}  \underbrace{v_i((a_R^*)_i) + \hspace{-0.3cm}\sum_{j \in N \setminus \{i\}}\hspace{-0.25cm}\hvj{}((a^*_{R})_j)}_{\textrm{\scriptsize (a) Reported SW of main economy}} - \hspace{-0.1cm} \hspace{-0.1cm}\underbrace{\sum_{j \in N \setminus \{i\}}\hspace{-0.25cm} \hvj{}((a^*_{R_{-i}})_j).}_\textrm{{\scriptsize (b) Reported SW of marginal economy}}
\end{align*}}
Any beneficial misreport must increase the difference (a) $-$ (b). 

MLCA has two features that mitigate manipulations. First, MLCA explicitly queries each bidder's marginal economy (\Cref{MLCA}, Line 5), which implies that (b) is practically independent of bidder $i$'s bid (Section 7.3 in \cite{brero2021workingpaper} provides experimental support for this). Second, MLCA enables bidders to ``push'' information to the auction which they deem useful. This mitigates certain manipulations that target (a), as it allows bidders to increase (a) with truthful information. \cited{brero2021workingpaper} argue that any remaining manipulation would be implausible as it would require almost complete information.\footnote{In this paper, we propose a new method that uses a notion of epistemic uncertainty to actively explore regions of the bundle space with high uncertainty. Intuitively, this makes manipulation even harder, since additional exploration makes it more difficult for a bidder to prevent other bidders from getting queries in certain regions.}

If we are willing to make further assumptions, we also obtain two theoretical incentive guarantees: 
\begin{itemize}
\item Assumption 1 requires that, for all bidders $i\in N$, if all other bidders report truthfully, then the reported social welfare of bidder $i$'s marginal economy (i.e., term (b)) is \emph{independent} of her value reports. 
\item Assumption 2 requires that, if all bidders $i\in N$ bid truthfully, then MLCA \emph{finds an efficient allocation}. 
\end{itemize}
\paragraph{Result 1: Social Welfare Alignment} If Assumption 1 holds, and if all other bidders are truthful, then MLCA is social welfare aligned, i.e., increasing the reported social welfare of $a_R^*$ in the main economy (i.e., term (a)), which in this case equals the \emph{true} social welfare of $a_R^*$, is the only way for a bidder to increase her true utility \citep[Proposition 3]{brero2021workingpaper}.

\paragraph{Result 2: Ex-Post Nash Equilibrium} Moreover, if Assumption 1 \emph{and} Assumption 2 hold, then bidding truthfully is an ex-post Nash equilibrium in MLCA \citep[Proposition 4]{brero2021workingpaper}.

\section{BO Perspective of ICAs}\label{sec:appendix:BO}
In this section, we discuss the Bayesian optimization (BO) perspective of iterative combinatorial assignment (see \Cref{subsec:app:Bayesian Optimization Background}). Specifically, we analyze the MLCA mechanism (see \Cref{subsec:A Machine Learning powered ICA} for an overview or \Cref{sec:appendix_A Machine Learning powered ICA} for a detailed description) in the light of BO.

Iterative combinatorial assignment can be seen as a combinatorial BO task with an expensive-to-evaluate function:
\begin{itemize}
    \item The objective (e.g., social welfare) in general lacks known structure and when evaluating it (e.g., value queries) one only observes the objective at a single input point and no derivatives such that gradient-based optimization cannot be used.
    \item Typically one can only query a very limited amount of information to find an approximately optimal allocation, For example, in a real-world spectrum auction, the auctioneer can only ask each bidder to answer on the order of 100 value queries for different bundles of items, even though the space of possible bundles is exponential in the number of items $m$, i.e., there are $2^m$ possible bundles and $(n+1)^m$ possible allocations.
\end{itemize}

\subsection{BO Perspective of MLCA}
In this paper, we extend prior work on MLCA with a notion of uncertainty that makes MLCA more similar to classic BO. Specifically, we now use an MVNN-based upper uncertainty bound (uUB) to define our acquisition function. This allows MLCA to trade-off exploration and exploitation making it more likely to find optimal allocations.

However, in addition to the challenges that arise in BO, combinatorial assignment adds their own set of challenges. For example, Gaussian process-based BO often does not extend beyond 10-20 input dimensions, which is problematic as in combinatorial assignment the input space can be much larger, e.g., for $m=98$ items and $n=10$ bidder the input space would be $980$ dimensional (MRVM). In addition, integrality constraints to obtain only whole items (i.e., combinatorial assignment deals with assigning $m$ \emph{indivisible} items to agents) and feasibility constraints that ensure each item is only allocated once to a single agent are often only incorporated via rounding or randomization.

Our work addresses both problems by combining (MV)NN-based MLCA by~\citep{weissteiner2020deep, weissteiner2022monotone} with the recently introduced NOMU~\citep{heiss2022nomu}, an optimization-based method to obtain uncertainty estimates for the predictions of NNs. Importantly, NOMU enables to represent an uUB as a \emph{single} NN in contrast to other more expensive methods such as ensembles \citep{lakshminarayanan2017simple}. Thus, NOMU is particularly suited for iterative combinatorial assignment, where uUB-based WDPs are solved hundreds of times to generate informative queries.

\subsection{How does BOCA Differ from Classical BO?}
There are two main differences that set BOCA from classical BO apart.
\begin{enumerate}
\item \textbf{More information per query:} In classical BO, one would obtain only one number $f(x)$ per query, which would correspond to $f(x)=\sum_{i \in N}\hvi{a_i}$ in the case of ICA. However, we obtain all the individual values $\hvi{a_i}$ instead of only obtaining their sum. This additional information is very valuable for MLCAs such as BOCA. We benefit from this additional information by representing our acquisition function $\mathcal{A}(a)=\sum_{i\in N}\mathcal{A}_i(a_i)$ as sum of functions $\mathcal{A}_i$, which are trained based on their individual values. The benefits of this additional information are even more valuable because of our very strong prior knowledge on the individual value functions $\hat{v}_i$ (e.g., monotonicity) compared to rather vague prior on their sum. Also our notion of uncertainty incorporates this additional information.
\newline
In classical BO $\argmax_{x\in \{x^{(1)},\ldots,x^{(T)}\}}f(x)$ is outputted as an approximation for $\argmax_{x\in X}f(x)$. However, the fact that we revive the individual values~$\hvi{a_i}$, allows us to output the solution $a^*_{R}$ of \eqref{WDPFiniteReports} instead of just outputting the best allocation that we have queried. In other words \eqref{WDPFiniteReports} allows us to recombine queried bundles potentially differently than in any queried \emph{allocation}.
\item \textbf{Multiple objectives:} In classical BO there is only one objective function $f$. In the case of MLCA, this corresponds to the primary objective $f(x)=\sum_{i \in N}\hvi{a_i}$. However, besides social welfare, we also have the secondary objective of achieving a high revenue in the case of MLCA. The revenue depends on the payments $p(R)$ from \Cref{def:vcg_payments}. In order to compute the payments $p(R)$, we need to approximate $n$ further maximization problems (i.e., the $n$ marginal economies $\sum_{j \in N\setminus\{i\}}\hvj{a_j}$, see \Cref{sec:appendix_A Machine Learning powered ICA,sec:appendix_Incentives of MLCA}). Since all this $n+1$ objectives are different sub-sums of the same $n$ unknown functions $\hat{v}_i$ each bundle-value-pair $\left(a_i,\hvi{a_i}\right)$ can provide useful information for a better approximation of $n$ objectives.
Intuitively speaking, we think that in a multi-objective setting, exploration is particularly useful (compared to exploitation), since the uncertainty of the individual value functions are independent of the different objectives. If a bundle of high uncertainty is queried one gains plenty of new information which can be very helpful for the better approximation of $n$ objectives. Whereas, very specialized queries that seem to be optimal with respect to one specific objective might not be as helpful for the other objectives in general.
\end{enumerate}
\subsection{Choice of Acquisition Function}\label{subsec:app:Choice of Acquisition function}
Recall that in BOCA we use as acquisition function $\mathcal{A}(a) = \sum_{i \in N}\Ai{a_i}$ where the $\mathcal{A}_i$'s are bidder-specific MVNN-based upper uncertainty bounds (uUBs). Thus, the WDP $\max_{a\in \F} \mathcal{A}(a)$, which we solve to generate new informative queries, decomposes to
\begin{align}\label{eq:appML-WDP}
\max_{a\in \F} \mathcal{A}(a)=\max_{a\in \F} \sum_{i \in N}\Ai{a_i}.
\end{align}
The key design choice to make solving this WDP practically feasible, is that we were able to encode each uUB $\Ai{}$ as a succinct MILP such that the acquisition function $\mathcal{A}(a) = \sum_{i \in N}\Ai{a_i}$ as a whole is also MILP-encodable and thus the WDP can be reformulated itself as a MILP. 
Importantly, our acquisition function $\mathcal{A}$ was carefully chosen to make solving this WDP practically feasible for moderate-sized MVNNs. In the following, we briefly discuss the usage (i.e., advantages and disadvantages) of other popular acquisition functions in the combinatorial assignment setting.

Popular examples of acquisition functions in classical BO from the literature include:
    \begin{itemize}[leftmargin=*]
        \item \emph{Upper uncertainty bound} (uUB) (aka \emph{upper confidence bound or upper credible bound (UCB)})~(e.g., see \citep{Srinivas_2012}),
        \item \emph{Expected improvement}~(e.g., see \citep[Section 4.1]{frazier2018tutorial}),
        \item \emph{Thompson sampling}~(e.g., see \citep{Chapelle2011AnEE}),
        \item \emph{Probability of improvement} (e.g., see \citep[Section 2.2.4]{de2021greed}),
        \item \emph{Expected value of information} (e.g., see \citep{guo2010gaussian}).
    \end{itemize}
    \subsubsection{Upper Uncertainty Bound (uUB)}
    We decided to use an UCB-type acquisition function since this enables a succinct MILP formulation of the acquisition function optimization (i.e., the WDP). Moreover, this also offers good performance in practice and good intuitive and  theoretical motivation \citep{Srinivas_2012}.
    
    \subsubsection{Expected improvement (EI)}
    While it would be theoretically optimal to use EI for the very last query, it is neither theoretically nor empirically clear if EI is better or worse than the uUB for all other queries. We are not aware of any practically feasible solver in the combinatorial assignment domain of the corresponding WDP based on EI and on a monotonic regression technique. Moreover, we do not see any straightforward practically feasible implementation and we do not expect any significant improvement from EI over uUB. 

\subsubsection{Thompson Sampling} Recently, \cited{pmlr-v162-papalexopoulos22a} also proposed a MILP-based BO method using a ReLU NN as surrogate model and Thompson sampling as acquisition function. Concretely, \cited{pmlr-v162-papalexopoulos22a} approximate Thompson sampling via retraining of the NN from scratch with a new random initialization. Subsequently, they determine the next query by solving an NN-based MILP. However, unlike our proposal,~\cited{pmlr-v162-papalexopoulos22a} do only implicitly integrate a very limited notion of uncertainty, i.e., via a random parameter initialization of the NN, making their approach conceptually equivalent to the (MV)NN-based MLCA by~\citet{weissteiner2020deep,weissteiner2022monotone}.
In particular, this approximation of Thompson sampling only achieves sufficient exploration if the diversity induced by different random initialization seeds is large enough. However, \citet[Remark B.5]{heiss2022nomu} showed that the diversity of an NN ensemble (in noiseless settings) is rather small and in \citep[Remark B.5]{heiss2022nomu} the ensemble's uncertainty needed to be scaled up by a factor of $\sim\!\!10$ to make its uncertainty competitive. 
However, in Thompson sampling the desired amount of uncertainty/exploration cannot be easily calibrated/scaled. Moreover, our experimental results in spectrum auctions suggests that indeed the method by \citet{weissteiner2022monotone}, which uses conceptually the same notion of uncertainty as in \citep{pmlr-v162-papalexopoulos22a}, is outperformed by our proposal.

\subsubsection{Probability of Improvement (PI)}
Intuitively we believe that PI is inferior to EI and uUB. We are not aware of any theoretical or empirical results that suggest significant advantages of PI. We are not aware of any practically feasible solver of the corresponding WDP based on PI and on a monotonic regression technique. 

\subsubsection{Expected Value of Information (EVOI)}
All previously mentioned acquisition functions are still to some extend myopic/greedy, as they do not optimize explicitly for multiple queries ahead (while still being significantly less myopic than just using the mean prediction as acquisition function). E.g., as mentioned above, EI is only optimal for the last query. The other acquisition functions are not mathematically optimal in any provable sense.
In theory, EVOI is optimal for a horizon of two BO-steps. In principle, one could adapt it recursively to be optimal also for longer horizons, i.e. multiple BO-steps. While this approach would be the most appealing from a purely theoretical point of view, it would also be by far the hardest from a computational point of view (because of the nested EI-like optimization problems). We are not aware of any practically feasible solver of the corresponding WDP based on EVOI and on a monotonic regression technique. We believe this would be a very exciting direction for long-term future research. For example, a very weak approximation of an EVOI-WDP could outperform a good approximation of our uUB-WDP, especially if compute power significantly increases in the future.

\section{NOMU for Monotone NNs}\label{sec:appendix_NOMU for Combinatorial Assignment}
In this section, we give more details regarding our uncertainty estimates for monotonically increasing functions based on NOMU (see \Cref{subsec:NOMU for Combinatorial Assignment}).
\subsection{Proof of \Cref{proposition:100uUB as MVNN}}\label{proof:proposition:100uUB as MVNN}
The 100\%- uUB~$\fuUB(x)\coloneqq \sup_{f\in\HC_{\Dtr}}f(x)$ is defined via a $2^m$-dimensional optimization problem with $\ntr$ constraints. In the following proof, we analytically derive the explicit closed-form joint solution of $2^m$ such optimization problems (one for each $x$), which we can represent as an MVNN that does not require \emph{any} optimization algorithm.
\begin{proof}[Proof of \Cref{proposition:100uUB as MVNN}]\label{appproof:proposition:100uUB as MVNN}
In the \ref{itm:proof:inVDtrain}\textsuperscript{st} part of the proof (which is based on the proof of \citet[Theorem 1]{weissteiner2022monotone}), we explicitly construct $\oneMi$ and show that $\oneMi\in \Vmon_{\Dtr}$.
\Cref{eq:last_term} gives our explicit (fast to evaluate) closed-form formula for $\oneMi$, which can be directly written down without any training algorithm.

The \ref{itm:proof:is100uUB}\textsuperscript{nd} part of the proof (which is new) shows that $\oneMi$ is indeed the 100\%-uUB by showing that it is maximal (and thus the supremum is actually a maximum).
\begin{enumerate}
    
    
    \item\label{itm:proof:inVDtrain} 
    
    Let $\hvi{} \in \Vmon$ and let $\Dtr=\{(x^{(l)},\hvi{x^{(l)}})\}_{l=1}^{\ntr}$ be a set of $\ntr$ observed training points corresponding to $\hvi{}$.
    First, given $\Dtr$, we construct an MVNN $\oneMi{}$ with weights $\theta=(W^i_{\Dtr},b^i_{\Dtr})$ such that $\oneMi(x)=\fuUB(x)$ for all $x\in \X$.
    
    Recall, that by definition $\hvi{(0,\ldots,0)}=0$ and that we assume that $((1\ldots,1),\hvi{(1\ldots,1)})\in \Dtr$.
    Now let $(w_l)_{l=0}^{\ntr}$ denote the observed/known values corresponding to $\hvi{}$ sorted in increasing order, i.e, let
    $x^{(0)}=(0,\ldots,0)$ with 
    \begin{align}\label{eq:w_1}
    w_0\coloneqq \hvi{x^{(0)}}=0,
    \end{align}
    let $x^{(\ntr)}=(1,\ldots,1)$ with \begin{align}\label{eq:w_2m}
    w_{\ntr}\coloneqq \hvi{x^{(\ntr)}},
    \end{align}
    and $x^{(j)},x^{(k)} \in \X\setminus\{x^{(0)},x^{(\ntr)}\}$\ for $0< j < k \le \ntr-1$ with
    \begin{align}\label{eq:w_remaining}
    w_j\coloneqq \hvi{x^{(j)}}\, \leq\, w_k\coloneqq \hvi{x^{(k)}}.
    \end{align}
    In the following, we slightly abuse the notation and write for $x^{(j)},x^{(k)}\in \X$, $x^{(j)}\subseteq x^{(k)}$ iff for the corresponding sets $A^j, A^k\in 2^M$ it holds that $A^j\subseteq A^k$. Furthermore, we denote by $ \left<\cdot,\cdot\right>$ the Euclidean scalar product on $\R^m$. 
    Before we show that our construction fulfills $\oneMi\in\Vmon_{\Dtr}$, we define it as
    {\small
    \begin{align}
    &\oneMi(x)\coloneqq 
    \hspace{-0.05cm}\sum_{k=0}^{\ntr-1}\hspace{-0.05cm} \left(w_{k+1}-w_{k}\right)\1{\forall j\in\{0,\dots,k\}\,:\,x\not\subseteq x^{(j)}}\label{eq:firstTermoneMi}\\
    &=\sum_{k=0}^{\ntr-1}\hspace{-0.05cm} \left(w_{k+1}-w_{k}\right)		  \phiu_{0,1}{}\hspace{-0.05cm}\left(\sum_{j=0}^{k}	\phiu_{0,1}{}\left( \left<1-x^{(j)},x\right>\right)-k\right)\label{eq:last_term},	      
    \end{align}
    }%
    where the second equality follows since
    {\small
    \begin{align}
        x\not\subseteq x^{(j)} & \iff \left<1-x^{(j)},x\right> \ge1\\
        &\iff \phiu_{0,1}{}\left( \left<1-x^{(j)},x\right>\right)=1,
    \end{align}
    }%
    which implies that
    \begin{align}
        &\forall j\in\{0,\dots,k\}: x\not\subseteq x^{(j)}\\ \iff& \sum_{j=0}^{k}\phiu_{0,1}{}\left( \left<1-x^{(j)},x\right>\right)=k+1,
    \end{align}
    and
    {\small
    \begin{align}
        \1{\forall j\in\{0,\dots,k\}\,:\,x\not\subseteq x^{(j)}}=\phiu_{0,1}{}\left(\sum_{j=0}^{k}\phiu_{0,1}{}\left( \left<1-x^{(j)},x\right>\right)-k\right).
    \end{align}
    }%
    We now show that $\oneMi$ is actually an MVNN.
    \Cref{eq:last_term} can be equivalently written in matrix notation as
    
    \resizebox{!}{0.18\columnwidth}{\parbox{\columnwidth}{
    \begin{align*}
    \hspace{-0.1cm}\underbrace{\begin{bmatrix} w_1 - w_0  \\ w_2 - w_1  \\ \vdots \\ w_{\ntr} - w_{\ntr - 1} \end{bmatrix}^\top}_{(W^{i,3}_{\Dtr})^\top\in \R_{\ge0}^{\ntr}}\hspace{-0.3cm}\phiu_{0,1}{}\hspace{-0.1cm}\left(\hspace{-0.1cm}W^{i,2}_{\Dtr} \phiu_{0,1}{}\hspace{-0.1cm}\left(\underbrace{\begin{bmatrix} 1- x^{(0)} \\1- x^{(1)} \\ \vdots \\ 1 - x^{(\ntr-1)}\end{bmatrix}}_{W^{i,1}_{\Dtr}\in \R_{\ge0}^{(\ntr)\times m }} x\hspace{-0.05cm}\right) + \underbrace{\begin{bmatrix}  0\\-1\\ \vdots \\ -(\ntr - 1)\end{bmatrix}}_{b^{i,2}_{\Dtr}\in \R_{\le0}^{\ntr}}\hspace{-0.1cm}\right)
    \end{align*}
    }}%
    
    with $W^{i,2}_{\Dtr}\in \R_{\ge0}^{(\ntr)\times (\ntr)}$ a lower triangular matrix of ones, i.e.,
    {\scriptsize
    \begin{align*}
    W^{i,2}_{\Dtr}\coloneqq \begin{bmatrix}
    1       &  0 &  \ldots &0      \\
    \vdots  &\ddots  & \ddots  &\vdots     \\
    \vdots   &  &  \ddots &0     \\
    1       &   \ldots     & \ldots & 1
    \end{bmatrix}.
    \end{align*}
    }%
    From that, we can see that $\oneMi$ is indeed an MVNN with four layers in total (i.e., two hidden layers) and respective dimensions $[m,\ntr,\ntr,1]$. 
    From \Cref{eq:firstTermoneMi} we can see that for all $l=0,\ldots,\ntr:\, \oneMi(x^{(l)})=\hvi{x^{(l)}}$ and therefore $\oneMi\in \Vmon_{\Dtr}$. 
    
    \item\label{itm:proof:is100uUB}  Now we have to check what happens for $x\in \X:(x,\hvi{x})\not\in\Dtr$. Therefore, let $x\in \X:(x,\hvi{x})\not\in\Dtr$. Then, by definition we get that $\oneMi(x)=w_k$, where $k\in \{0,\ldots,\ntr\}$ is the smallest integer such that $x\subseteq x^{(k)}$. Assume there exists an $h \in \Vmon_{\Dtr}$ with $h(x)>\oneMi(x)=w_k=h(x^{(k)})$. However, since $x\subseteq x^{(k)}$ this is a contradiction to $h$ fulfilling the monotonicity property. Thus, we get that $\oneMi(x)=\max_{f\in\Vmon_{\Dtr}}f(x)$. Since $x$ was chosen arbitrarily, we finally get that
    $\oneMi(x)=\max_{f\in\Vmon_{\Dtr}}f(x)$ for all $x\in \X$ which concludes the proof.
\end{enumerate}
\end{proof}

\subsection{ICA-Based New NOMU Architecture}
In this section, we provide more details on our carefully chosen ICA-based new NOMU architecture. The original NOMU architecture from \cite{heiss2022nomu} outputs a mean prediction $\hat{f}$, which is in our case the mean-MVNN $\meanMi$ and a model uncertainty prediction $\hat{\sigma}_f$. Then uUBs in the original NOMU algorithm at an input point $x$ are defined as $\hat{f}(x)+c\cdot\hat{\sigma}_f(x)$ for $c>0$.

\subsubsection{Monotone uUBs and Non-Monotone \texorpdfstring{$\hat{\sigma}_f$}{sigma}}\label{subsec:monotoneUUB}

\paragraph{Monotone Value Functions Imply Monotone uUBs.} Knowing that the unknown ground truth function is monotonically increasing\footnote{Within this paper \enquote{monotonically increasing} should always be interpreted as \enquote{monotonically non-decreasing}.} means that the support of the prior in function space only contains monotonically increasing functions in Bayesian language. In frequentist language this means that the hypothesis class~$\HC$ only contains monotonically increasing functions. In both cases the resulting uUBs are monotonically increasing too, which we next prove in \Cref{prop:BayesUUBmonotone,prop:frequUUBmonotone}.
\begin{proposition}\label{prop:BayesUUBmonotone} 
Let $\PP[f \text{ is monotonically increasing}]=1$ according to the prior and let $\text{uUB}_\alpha(x)\coloneqq \inf\{y\in\R : \PP[f(x)\leq y | \Dtr]\geq \alpha\} \ \forall x\in X$ be the $\alpha$-credible upper bound (i.e., the $\alpha$-quantile of the posterior distribution of $f(x)$).\footnote{In the literature, $\PP[f(x)\leq y | \Dtr,x]$ is often used instead of \PP[f(x)\leq y | \Dtr] which is equal for every given $x\in X$. In both notations, $f$ is seen as a random variable in function space.}
Then it holds that $\text{uUB}_\alpha(x)$ is monotonically increasing (i.e., $x\leq\tilde{x}\implies \text{uUB}_\alpha(x)\leq \text{uUB}_\alpha(\tilde{x})$).
\end{proposition}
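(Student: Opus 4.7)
The plan is to exploit the fact that monotonicity of $f$ is a property that is preserved almost surely when passing from the prior to the posterior, and then to translate this pointwise almost-sure inequality into an ordering of the posterior CDFs, from which the monotonicity of the quantile (i.e., the uUB) follows immediately.

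First, I would observe that the prior assumption $\PP[f \text{ monotonically increasing}]=1$ implies $\PP[f \text{ monotonically increasing} \mid \Dtr]=1$, since conditioning on $\Dtr$ preserves prior-null events under any regular version of the conditional probability. Hence, fixing $x\leq \tilde{x}$ in $X$, we have $\PP[f(x)\leq f(\tilde{x}) \mid \Dtr]=1$.

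Next, the key step: for any $y\in\R$, on the full-measure event $\{f(x)\leq f(\tilde{x})\}$, the inclusion $\{f(\tilde{x})\leq y\}\subseteq \{f(x)\leq y\}$ holds. Taking conditional probabilities yields
\begin{equation}
\PP[f(x)\leq y \mid \Dtr]\;\geq\;\PP[f(\tilde{x})\leq y \mid \Dtr]\quad\text{for all }y\in\R.
\end{equation}
Consequently, any $y$ lying in $S_{\tilde{x}}\coloneqq\{y\in\R: \PP[f(\tilde{x})\leq y\mid \Dtr]\geq \alpha\}$ also lies in $S_{x}\coloneqq\{y\in\R: \PP[f(x)\leq y\mid \Dtr]\geq \alpha\}$, i.e.\ $S_{\tilde{x}}\subseteq S_{x}$. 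Taking infima reverses the inclusion:
\begin{equation}
\text{uUB}_\alpha(x)=\inf S_{x}\;\leq\;\inf S_{\tilde{x}}=\text{uUB}_\alpha(\tilde{x}),
\end{equation}
which is exactly the desired monotonicity.

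I do not foresee a substantive obstacle; the only mildly delicate point is the justification that the monotonicity event has posterior probability one (requiring a regular conditional probability, or at least the standard fact that conditioning preserves null sets). I would mention this briefly but not belabor it, since the remainder of the argument is a one-line CDF-to-quantile translation.
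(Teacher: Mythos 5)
Your proposal is correct and follows essentially the same route as the paper's proof: both arguments use that the monotonicity event retains full posterior measure to derive the pointwise ordering of the posterior CDFs, $\PP[f(x)\leq y\mid \Dtr]\geq\PP[f(\tilde{x})\leq y\mid \Dtr]$, and then translate this into monotonicity of the $\alpha$-quantile (your set-inclusion argument $S_{\tilde{x}}\subseteq S_x$ is just a slightly tidier phrasing of the paper's ``every $y<\text{uUB}_\alpha(x)$ is too small'' step).
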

\begin{proof}
Let $x\leq\tilde{x}$. For a shorter notation we write \enquote{$f$ is (M)} instead of \enquote{$f$ is monotonically increasing} and we define $\tilde{\Vmon}\coloneqq \{f\in \R^X: f \text{ is (M)}\}$. Then, from the definition of monotonicity, it follows for every $y\in\R$ that\footnote{For the second equivalence we use that for every measurable set $A$, we obtain $\PP[A]-0=\PP[A]-\PP[\Vmon^c]\leq\PP[A\setminus\Vmon^c]\leq\PP[A]$, thus $P[A\setminus\Vmon^c]=\PP[A]$.} 
\begin{align*}
&\{f : f(x)>y, f \text{ is (M)}\}\subseteq\{f : f(\tilde{x})>y, f \text{ is (M)}\}
\\
\iff& \{f : f(x)>y\}\setminus\tilde{\Vmon}^c\subseteq\{f : f(\tilde{x})>y\}\setminus\tilde{\Vmon}^c
\\
\iff&\PP[\{f : f(x)>y\}|\Dtr]\leq\PP[\{f : f(\tilde{x})>y|\Dtr\}]\\
\iff&\PP[\{f : f(x)\le y\}|\Dtr]\ge\PP[\{f : f(\tilde{x})\le y|\Dtr\}]
\end{align*}
Since $\text{uUB}_\alpha(x)$ is the infimum, we know that for every $y<\text{uUB}_\alpha(x)$:
\[\PP[\{f : f(\tilde{x})\leq y|\Dtr\}]\leq \PP[\{f : f(x)\leq y\}|\Dtr]< \alpha.\]
This means that every $y<\text{uUB}_\alpha(x)$ is too small to be equal to $\text{uUB}_\alpha(\tilde{x})$, thus $\text{uUB}_\alpha(x)\leq\text{uUB}_\alpha(\tilde{x})$.
\end{proof}
\begin{proposition}\label{prop:frequUUBmonotone}
Let $\HC$ be a hypothesis class that only contains monotonically increasing functions and let $\text{uUB}_\HC(x)\coloneqq \sup\{f(x): f\in\HC_{\Dtr}\}$ be an uUB, then $\text{uUB}_\HC$ is monotonically increasing  (i.e., $x\leq\tilde{x}\implies \text{uUB}_\HC(x)\leq \text{uUB}_\HC(\tilde{x})$).
\end{proposition}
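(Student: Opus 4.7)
The plan is to exploit the monotonicity assumption on every element of $\HC$ together with the fact that $\HC_{\Dtr}\subseteq \HC$, and then pass the pointwise inequality through the supremum. First I would fix arbitrary $x,\tilde{x}\in X$ with $x\leq \tilde{x}$ and observe that, by the hypothesis on $\HC$, every $f\in\HC_{\Dtr}$ is monotonically increasing, so the pointwise inequality $f(x)\leq f(\tilde{x})$ holds for each such $f$ individually.

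Next I would take the supremum over $f\in\HC_{\Dtr}$ on both sides. The left-hand side is exactly $\text{uUB}_\HC(x)$ by definition, while the right-hand side is bounded above by $\sup_{f\in\HC_{\Dtr}} f(\tilde{x}) = \text{uUB}_\HC(\tilde{x})$. The only property invoked is the elementary monotonicity of the supremum: if $g(f)\leq h(f)$ for every $f$ in a common index set, then $\sup_f g(f)\leq \sup_f h(f)$. Chaining the two bounds yields $\text{uUB}_\HC(x)\leq \text{uUB}_\HC(\tilde{x})$, which is the claim.

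There is essentially no obstacle here: the argument is a one-line application of monotonicity lifted to the supremum, and it is structurally simpler than the Bayesian counterpart in \Cref{prop:BayesUUBmonotone}, because one works directly with pointwise function values rather than with posterior probabilities. In particular, we do not need the careful measure-theoretic step of subtracting the null set of non-monotone functions that is required in the Bayesian version; the assumption that $\HC$ contains \emph{only} monotone functions makes the statement $f(x)\leq f(\tilde{x})$ hold everywhere on $\HC_{\Dtr}$, without any exceptional set to discard.
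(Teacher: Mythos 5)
Your argument is correct and is essentially identical to the paper's own one-line proof: both fix $x\leq\tilde{x}$, use that every $f\in\HC_{\Dtr}$ is monotone to get $f(x)\leq f(\tilde{x})$ pointwise, and pass this through the supremum. No gaps.
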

\begin{proof}
Let $x\leq\tilde{x}$, then $\sup\{f(x) : f\in\HC_{\Dtr}\}\leq\sup\{f(\tilde{x}) : f\in\HC_{\Dtr}\}$, since $\forall f\in\HC_{\Dtr}: f(x)\leq f(\tilde{x})$.
\end{proof}

\paragraph{Monotone Value Functions Do Not Imply Monotone Uncertainty.} However, the model uncertainty $\hat{\sigma}_f$ (defining the width of the UBs) is \emph{not} monotonic at all. For already observed training input points there is zero model uncertainty while smaller unobserved input points can have much bigger model uncertainty.

If one would simply use the prior knowledge about the monotonicity to improve the mean prediction (by assuring its monotonicity) but then estimate the uUB by simply adding a standard (e.g., original NOMU or Gaussian processes) estimator for the (scaled) model uncertainty $c\cdot\hat{\sigma}_f$ to the mean prediction to obtain an uUB, one would obtain non-monotonic uUBs violating \Cref{prop:BayesUUBmonotone,prop:frequUUBmonotone}.

If one would simply use ensemble methods \citep{lakshminarayanan2017simple,gal2016dropout} with MVNNs as ensemble members the uUBs obtained from the formula given in \citep{lakshminarayanan2017simple,gal2016dropout} (restated in \citep{heiss2022nomu} for the noiseless case) would also lead to non-monotonic uUBs violating \Cref{prop:BayesUUBmonotone,prop:frequUUBmonotone}. In theory, this problem could be circumvented by directly using a certain quantile of the ensembles as uUB instead of calculating uUBs based on empirical mean and variance of the ensemble. However, even the $100\%$-quantile of the ensemble (the point-wise maximum of the ensemble predictions) would often not sufficiently capture enough uncertainty, since \citet[Remark B.5]{heiss2022nomu} empirically showed that the uncertainty of ensemble methods often needs to be scaled up by a very big factor to capture enough uncertainty. Moreover, \citet{pmlr-v80-kuleshov18a} empirically showed that the uncertainty needs to be calibrated to achieve good results. To summarize, ensemble methods of monotonic functions have the problem that their calibration is limited or calibrating them results in non-monotonic uUBs. Note that the calibration method from \citet{pmlr-v80-kuleshov18a} cannot solve this problem.
Furthermore, in the ICA setting of this paper, optimizing the acquisition function (i.e., the ML-based WDP) based on the uUBs obtained from deep ensembles \citep{lakshminarayanan2017simple} would be computationally too expensive.

However, for our proposed uUB $\uUBMi$ the uncertainty can be calibrated by varying $\muexp$ without sacrificing monotonicity of the uUB. In the limit $\muexp\to0$ one would just obtain the mean prediction and in the limit $\muexp\to\infty$ in relation to explicit and implicit regularization one would obtain the 100\%-uUB as solution to our optimization problem.

\subsubsection{Linear Skip Connections}

We have a hyper-parameter (that is part of our HPO) that decides whether we add a trainable linear connection directly from the input to the output for $\uUBMi$ and $\meanMi$ as formally defined in \Cref{def:lskipMVNN}.

\subsubsection{No Connections Between the Two Sub-Architectures}

The architecture suggested in \citet{heiss2022nomu} contains connections from the mean-sub-architecture to the uncertainty-sub-architecture (the dashed lines in \cite[Figure~2]{heiss2022nomu}). In our architecture (see \Cref{fig:nn_tik_extension}) we do not use such connections for two reasons:
\begin{enumerate}
    \item Solving the WDP via the MILP given in \Cref{thm:milp} is computationally much faster because we can completely ignore $\meanMi$ for the MILP formulation.
    \item In the case of \citet{heiss2022nomu}, without these connections, the uncertainty would be completely independent from the mean prediction (or from the labels $y^\text{train}$), prohibiting \citep[Desiderata D4]{heiss2022nomu}. However, our architecture directly outputs the uUB instead of the uncertainty $\hat{\sigma}_f$, thus $\uUBMi$ is automatically not independent from the labels $y^{(l)}=\hvi{x^{(l)}}$.
\end{enumerate}

\subsection{ICA-Based New NOMU Loss}\label{sec.appednix:DetailedNOMULoss}
\begin{definition}[\href{https://pytorch.org/docs/stable/generated/torch.nn.SmoothL1Loss.html}{Smooth L1 Loss}]\label{def:appendix_smoothL1Loss}
The smooth L1 loss $\sL:\mathbb{R}\times\mathbb{R}\to \mathbb{R}$ with threshold parameter $\beta \ge0$ is defined as follows:
\begin{align}
    \sL(x,y)=\begin{cases}
    \frac{0.5}{\beta}\cdot(x-y)^2, &|x-y|\le \beta\\
    |x-y|-0.5\cdot\beta, &\text{otherwise}.
    \end{cases}
\end{align}
\end{definition}

\begin{definition}[\href{https://pytorch.org/docs/stable/generated/torch.nn.ELU.html}{Exponential Linear Unit (ELU)}]\label{def:appendix_ELU}
The ELU function $\elu:\mathbb{R}\to \mathbb{R}$ (with default parameter $\alpha=1$) is defined as follows:
\begin{align}
    \elu(x)=\begin{cases}
    x, &x\ge 0\\
    1\cdot(\exp(x)-1), &x<0.
    \end{cases}
\end{align}
\end{definition}

\begin{definition}[Detailed ICA-based New NOMU Loss]\label{def:app:NOMU_Loss} Let $\hp=(\musqr,\muexp,\cexp, \piOneuUB,\piMean)\in \R_+^5$ be a tuple of hyperparameters. For a training set $\Dtr$, $L^\hp$ is defined as

\begin{subequations}\label{eq:app:NOMU_Loss_Extension}%
\resizebox{\columnwidth}{!}{\parbox{1.2\columnwidth}{%
\begin{align}
  \hspace{-1cm}&L^\hp(\uUBMi)\coloneqq 
  \musqr\sum_{l=1}^{\ntr}\sL\left(\uUBMi(x^{(l)}),y^{(l)}\right)\label{subeq:app:dataLoss}\\
  \hspace{-.375cm}&+\muexp\int_{[0,1]^m}\hspace{-0.5cm}g\left(0.01-\cexp\left(\min\{\uUBMi(x),\oneMi(x)\} - \meanMi(x)\right)\right)\,dx\label{subeq:app:pushUpLoss}\\
  \hspace{-.375cm}&+\muexp\cexp\piOneuUB\int_{[0,1]^m}\hspace{-0.5cm}\sL\left((\uUBMi(x)-\oneMi(x))^{+}\right)\,dx\label{subeq:app:below100Loss}\\
  \hspace{-.375cm}&+\muexp\cexp\piMean\int_{[0,1]^m}\hspace{-0.5cm}\sL\left((\meanMi(x)-\uUBMi(x))^{+}\right)\,dx\label{subeq:app:aobveMeanLoss}\\
  \hspace{-.375cm}&+\musqr\sum_{l=1}^{\ntr}0.001\left(\uUBMi(x^{(l)})-y^{(l)}\right)^{+}\hspace{-0.2cm}+0.5\sL\left((\uUBMi(x^{(l)})-y^{(l)})^{+},0\right) \Big)\label{subeq:app:dataLossAssymetric}
\end{align}
}
}
\end{subequations}

\noindent where $\sL$ is the smooth L1-loss with threshold $\beta$ (see \Cref{def:appendix_smoothL1Loss}), $(\cdot)^{+}$ the positive part, and $g\coloneqq 1+\elu$ is convex monotonically increasing with $\elu$ being the \emph{exponential linear unit (ELU)} (see \Cref{def:appendix_ELU}).
\end{definition}
Detailed interpretations of all five terms (including \eqref{subeq:app:dataLossAssymetric} which was added to slightly improve the numerical stability) are as follows:
\begin{enumerate}[align= left]
	\item[\eqref{subeq:app:dataLoss}] enforces that $\uUBMi$ fits through the training data.
	\item[\eqref{subeq:app:pushUpLoss}] pushes $\uUBMi$ up as long as it is below the 100\%-uUB $\oneMi$.
	This force gets weaker the further $\uUBMi$ is above the mean $\meanMi$ (especially if $\cexp$ is large).
	$\muexp$ controls the overall strength of \eqref{subeq:pushUpLoss} and $\cexp$ controls how fast this force increases when $\uUBMi \to\meanMi$.
	Thus, increasing $\muexp$ increases the uUB and increasing $\cexp$ increases the uUB in regions where it is close to the mean.
	Weakening \eqref{subeq:pushUpLoss} (i.e.,  $\muexp\cexp\to0$) leads $\uUBMi \approx\meanMi$. Strengthening \eqref{subeq:pushUpLoss} by increasing $\muexp\cexp$ in relation to regularization\footnote{Regularization can be early stopping or a small number of neurons (implicit) or L2-regularization on the parameters (explicit). The same principle would also hold true if one uses other forms of regularization such as L1-regularization or dropout.
	} leads $\uUBMi\approx\oneMi$.
 	In practice, we obtain reasonable approximations to $\alpha\%$-uUB with $\alpha\in[50,100]$ depending on the value of $\muexp\cexp$ in relation to regularization.
	\item[\eqref{subeq:app:below100Loss}] enforces that $\uUBMi\le\oneMi$. In theory, \eqref{subeq:below100Loss} would be redundant in the limit $\musqr\to\infty$, because $\uUBMi\in\Vmon_{\Dtr}$. The strength of this term is determined by $\piOneuUB\cdot (\muexp\cexp)$, where $\piOneuUB$ is the \eqref{subeq:below100Loss}-specific hyperparameter and $\muexp\cexp$ adjusts the strength of \eqref{subeq:below100Loss} to \eqref{subeq:pushUpLoss}.
	\item[\eqref{subeq:app:aobveMeanLoss}] enforces $\uUBMi\ge\meanMi$. In theory, one should take the limit $\piMean\to\infty$. However, in practice, a moderate value of $\piMean$ is numerically more stable and typically enforces that $\uUBMi\ge\meanMi$. The interpretation of $\piMean$ and $\muexp\cexp$ is analogous to \eqref{subeq:below100Loss}.
	\item[\eqref{subeq:app:dataLossAssymetric}] is an asymmetric version of \eqref{subeq:app:dataLoss} for numerical stability. Since \eqref{subeq:app:pushUpLoss} pushes $\uUBMi$ for all $x\in \X$ upwards, $\uUBMi$ would have a tendency to give slightly too high predictions for training data points, but \eqref{subeq:app:dataLossAssymetric} compensates this effect for slightly improved numerical stability. In theory \eqref{subeq:app:dataLossAssymetric} would be redundant in the limit $\musqr\to\infty$.
\end{enumerate}

As in \cite{heiss2022nomu}, in the implementation of $L^\hp$, we approximate \Crefrange{subeq:app:pushUpLoss}{subeq:app:aobveMeanLoss} (in the main paper \Crefrange{subeq:pushUpLoss}{subeq:aobveMeanLoss}) via MC-integration using additional, \emph{artificial input points} ${\Dart\coloneqq \left\{x^{(l)}\right\}_{l=1}^{\nart}\stackrel{i.i.d.}{\sim}\textrm{Unif}([0,1]^m)}$, where we sample new artificial input points for each batch of our mini-batch gradient descent based training algorithm.

Furthermore, note that in practice we train $\meanMi$ and $\uUBMi$ simultaneously but where $\meanMi$ is \emph{detached} (using \texttt{torch.tensor.detach()}) in the loss $L^\hp$ such that $L^\hp$ does not influence the mean MVNN $\meanMi$.

\section{Parameter Initialization for MVNNs}\label{sec:apendix_New MVNN Random Initialization}
In this section, we provide more details on our new parameter initialization method for MVNNs (see \Cref{subsec:New MVNN Random Initialization}). Specifically, we give recommendations on how to set the hyperparameters of our proposed mixture distribution depending on the architecture size of the considered MVNN.
\subsection{Theoretical Results}
\begin{definition}[Uniform Mixture Distribution]\label{def:mixtureDistribution}
        We define the probability density $g_{A_k,B_k,p_k}$ of an initial weight $W_{j,l}^{i,k}$ corresponding to an MVNN as\footnote{The bidder index $i$ is not relevant in this section and we assume that each bidder $i$ has the same architecture. If bidders would have different layer widths $d^{i,k-1}$, then all the values $A_{k},B_k, P_k,\mu_k,\sigma_k$ would depend on the bidder index $i$ too, i.e., $A_{i,k},B_{i,k}, P_{i,k},\mu_{i,k},\sigma_{i,k}$.}
        \begin{align}
        g_{A_k,B_k,p_k}(w)=\frac{1-p_k}{A_k}\indicatorOne_{[0,A_k]}(w)+\frac{p_k}{B_k}\indicatorOne_{[0,B_k]}(w),
        \end{align} which corresponds to a mixture distribution of $\Unif[0,A_k]$ and $\Unif[0,B_k]$.
\end{definition}
We assume that for each layer $k$, we independently sample all weights $W^{i,k}_{jl}$ i.i.d.\ according to the distribution in \Cref{def:mixtureDistribution}.

We then obtain that
\begin{align}\label{eq:mean_mixture}
\mu_k=\E[W_{j,l}^{i,k}]=(1-p_k) \frac{A_k}{2} + p_k \frac{B_k}{2}.
\end{align}
Moreover, by using that the variance of an $\Unif[0,c]$-distributed random variable is \href{https://en.wikipedia.org/w/index.php?title=Continuous_uniform_distribution&oldid=1098500085}{$\frac{c^2}{12}$} we get that
\begin{equation}\label{eq:var_mixture}
\sigma_k^2=\Var[W_{j,l}^{i,k}] \href{https://en.wikipedia.org/w/index.php?title=Mixture_distribution&oldid=1091582237\#Moments}{=} (1 - p_k)\frac{A_k^2}{3} + p_k\frac{B_k^2}{3} - \mu_k^2.
\end{equation}

In the following, we provide a rule how to set for each layer $k$, $A_k$, $B_k$ and $p_k$ depending on the dimension $d^{i,k-1}$ of the previous $(k-1)$\textsuperscript{st} layer.

This rule ensures that the conditional expectation and the conditional variance are equal to two constants $\Einit$, and $\Vinit$, which are independent of the dimension of the previous layer $d^{i,k-1}$. More formally, let $z^{i,k-1}\in \R^{d^{i,k-1}}$ be the output of the neurons in the $(k-1)$\textsuperscript{st} layer, then we set $A_k$, $B_k$ and $p_k$ such that\footnote{Note that $\Einit=\text{\eqref{subeq:constE}}$ ($\texttt{init\_E}$ in our code) includes already the bias, while $\Vinit=\text{\eqref{subeq:constV}}$ ($\texttt{init\_V}$ in our code) does not include the bias. But this is a simple additive term that could easily be adjusted. One could easily include the bias in both or in none of them to make it more consistent.}
\begin{subequations}\label{eq:constEAndV}
\begin{align}
    \Eco{\left(W^{i,k}z^{i,k-1}\right)_j +b^{i,k}_j }{ z^{i,k-1}=(1,\dots,1)^\top}=\Einit
    \label{subeq:constE}\\
    \Varco{\left(W^{i,k}z^{i,k-1}\right)_j }{ z^{i,k-1}=(1,\dots,1)^\top}=\Vinit
    .\label{subeq:constV}
\end{align}
\end{subequations}
\begin{lemma}\label{rem:differentFormulationsOfConstEV} Equivalent formulations of
\eqref{subeq:constE} are \begin{align*}
    \text{\eqref{subeq:constE}}&\Leftrightarrow\E[{\left(W^{i,k}(1,\dots,1)^\top\right)_j +b^{i,k}_j }] = \Einit
    \\
    &\Leftrightarrow\Eco{\left(W^{i,k}z^{i,k-1}\right)_j +b^{i,k}_j }{ \overline{z^{i,k-1}}=1}=\Einit,
\end{align*}
where $\overline{z^{i,k-1}}=\frac{1}{d^{i,k-1}}\sum_{l=1}^{d^{i,k-1}}z^{i,k-1}_l$. And equivalent formulations of \eqref{subeq:constV} are 
\begin{align*}
    \text{\eqref{subeq:constV}}&\Leftrightarrow\Var[{\left(W^{i,k}(1,\dots,1)^\top\right)_j }]=\Vinit
    \\
    &\Leftrightarrow\Varco{\left(W^{i,k}z^{i,k-1}\right)_j }{ z^{i,k-1}}=\Vinit\overline{\left(z^{i,k-1}\right)^2},
\end{align*}
where $\overline{\left(z^{i,k-1}\right)^2}=\frac{1}{d^{i,k-1}}\sum_{l=1}^{d^{i,k-1}}\left(z^{i,k-1}_l\right)^2$.
\end{lemma}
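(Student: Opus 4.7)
The plan is to reduce both sets of equivalences to two simple facts about the initialization protocol: (a)~the weights $W^{i,k}$ and biases $b^{i,k}$ are sampled independently of the previous-layer activations $z^{i,k-1}$, so that conditioning on a value of $z^{i,k-1}$ is the same as deterministically plugging that value in when computing moments over the weight/bias randomness; and (b)~within the $j$-th row the entries $W^{i,k}_{j,1},\ldots,W^{i,k}_{j,d^{i,k-1}}$ are i.i.d.\ with common mean $\mu_k$ and variance $\sigma_k^2$.

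For the mean equivalences, I would first apply (a) together with linearity of expectation to obtain
\[
  \Eco{(W^{i,k}z^{i,k-1})_j + b^{i,k}_j}{z^{i,k-1}} = \mu_k\sum_{l=1}^{d^{i,k-1}}z^{i,k-1}_l + \E[b^{i,k}_j] = \mu_k\,d^{i,k-1}\,\overline{z^{i,k-1}} + \E[b^{i,k}_j].
\]
The right-hand side depends on $z^{i,k-1}$ only through $\overline{z^{i,k-1}}$, so conditioning on the strong event $z^{i,k-1}=(1,\dots,1)^\top$ and conditioning on the weaker event $\overline{z^{i,k-1}}=1$ both force this quantity to equal $\mu_k\,d^{i,k-1} + \E[b^{i,k}_j]$. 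Specializing the identity to the deterministic input $(1,\dots,1)^\top$ and using~(a) removes the conditioning entirely and yields the unconditional formulation $\E[(W^{i,k}(1,\dots,1)^\top)_j + b^{i,k}_j]=\Einit$.

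For the variance equivalences, (a)~again reduces the conditional variance to the variance over the weights with $z^{i,k-1}$ held fixed, and the i.i.d.\ property~(b) (which eliminates all cross terms) gives
\[
  \Varco{(W^{i,k}z^{i,k-1})_j}{z^{i,k-1}} = \sigma_k^2\sum_{l=1}^{d^{i,k-1}}(z^{i,k-1}_l)^2 = \sigma_k^2\,d^{i,k-1}\,\overline{(z^{i,k-1})^2}.
\]
Plugging in $z^{i,k-1}=(1,\dots,1)^\top$ then identifies $\Vinit$ with $\sigma_k^2\,d^{i,k-1}$, which simultaneously yields the unconditional formulation $\Var[(W^{i,k}(1,\dots,1)^\top)_j]=\Vinit$ and the general formula $\Varco{(W^{i,k}z^{i,k-1})_j}{z^{i,k-1}} = \Vinit\,\overline{(z^{i,k-1})^2}$.

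The computations are elementary; the only step requiring explicit attention is spelling out the independence between the random initialization and the (possibly random) activations $z^{i,k-1}$, so that conditional moments collapse to plain moments over the weight/bias randomness alone. I do not foresee any genuine obstacle.
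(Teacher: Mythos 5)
Your proposal is correct and follows essentially the same route as the paper's proof: both rest on the independence of the freshly sampled weights/biases from the previous-layer activations and on the row entries being i.i.d., which collapses the conditional mean to $\mu_k d^{i,k-1}\overline{z^{i,k-1}}+\E[b^{i,k}_j]$ and the conditional variance to $\sigma_k^2 d^{i,k-1}\overline{(z^{i,k-1})^2}$. The only cosmetic difference is that the paper evaluates the expectation conditional on $\overline{z^{i,k-1}}=1$ directly via $\Eco{\sum_l z^{i,k-1}_l}{\overline{z^{i,k-1}}=1}=d^{i,k-1}$, whereas you condition on the full activation vector first and note the result depends on it only through its average; these are the same computation.
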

\begin{proof}
The first and the third equivalence are trivial. The second equivalence can be seen as:
 {\small
\begin{align*}
&\Eco{\left(W^{i,k}z^{i,k-1}\right)_j +b^{i,k}_j }{ \overline{z^{i,k-1}}=1}=\\
&=\sum_{l=1}^{d^{i,k-1}}\Eco{W^{i,k}_{jl} z^{i,k-1}_l}{\overline{z^{i,k-1}}=1}+\E[b^{i,k}_j]
\end{align*}
Using that $W^{i,k}_{jl}$ and $z^{i,k}$ are independent, we can further simplify this term as
\begin{align*}
&=\sum_{l=1}^{d^{i,k-1}}\Eco{z^{i,k-1}_l}{\overline{z^{i,k-1}}=1}\E[W^{i,k}_{jl} ]+\E[b^{i,k}_j]\\
&=\E[W^{i,k}_{j1} ]\sum_{l=1}^{d^{i,k-1}}\Eco{z^{i,k-1}_l}{\overline{z^{i,k-1}}=1}+\E[b^{i,k}_j]\\
&=\E[W^{i,k}_{j1} ]\Eco{\sum_{l=1}^{d^{i,k-1}}z^{i,k-1}_l}{\overline{z^{i,k-1}}=1}+\E[b^{i,k}_j]\\
&=\E[W^{i,k}_{j1} ]d^{i,k-1}+\E[b^{i,k}_j]\\
&=\E[{\left(W^{i,k}(1,\dots,1)^\top\right)_j +b^{i,k}_j }].
\end{align*}} 
Finally, the last equivalence can be seen as
\begin{align*}
&\Varco{\left(W^{i,k}z^{i,k-1}\right)_j }{z^{i,k-1}}=\\
&=\sum_{l=1}^{d^{i,k-1}}\Varco{W^{i,k}_{jl} z^{i,k-1}_l}{z^{i,k-1}}\\
&=\sum_{l=1}^{d^{i,k-1}}\Var\left[W^{i,k}_{jl}\right]\left(z^{i,k-1}_l\right)^2\\
&=d^{i,k-1}\Var\left[W^{i,k}_{j1}\right]\frac{1}{d^{i,k-1}}\sum_{l=1}^{d^{i,k-1}}\left(z^{i,k-1}_l\right)^2\\
&=\Var[{\left(W^{i,k}(1,\dots,1)^\top\right)_j }]\overline{\left(z^{i,k-1}\right)^2}.
\end{align*} 
\end{proof}
The solution to problem~\eqref{eq:constEAndV} is not unique, but we propose three hyperparameters $\Binit, \Biasinit, \littleConst\in\Rp$ to characterize one specific solution that has multiple nice properties (see \Cref{thm:ScalingInitConstantEandV}).
\begin{definition}[Scaling Rule]\label{def:initScalingRule}
        For any choice of hyperparameters $\Einit, \Vinit, \Binit, \Biasinit, \littleConst\in\Rp$ 
        we propose the following values for $B_k$, $A_k$ and $p_k$  (for a shorter notation we write $d$ instead of $d^{i,k-1}\in\N_+$):
       \begin{align}
        &B_k=\begin{cases}
        \max \left(\frac{3 \Muk^2+3 d \Vinit}{2 \Muk d}+\frac{\littleConst}{d},\Binit\right) &\text{, }d>\frac{\Muk^2}{3 \Vinit}\\
        \frac{2}{d}\Muk&,\,\text{else}
        \end{cases}\label{eq:def:Bk}\\
        &p_k=\begin{cases}
        1-\frac{B_k^2 d^2-4 B_k \Muk d+4 \Muk^2}{B_k^2 d^2-4 B_k \Muk d+3 \Muk^2+3 d \Vinit}
        &\hspace{-0.15cm}\text{, }d>\frac{\Muk^2}{3 \Vinit}\\
        1&\hspace{-0.15cm},\,\text{else}
        \end{cases}\label{eq:def:pk}\\
        &A_k=\begin{cases}
        \frac{
        2 \Muk-B_k d p_k
        }{d (1-p_k)
        }
        &\text{, }d>\frac{\Muk^2}{3 \Vinit}\\
        0&,\,\text{else,}
        \end{cases}\label{eq:def:Ak}\\
        &W_{j,l}^{i,k}\stackrel{i.i.d.}{\sim} g_{A_k,B_k,p_k}\\
        &b^{i,k}_j\stackrel{i.i.d.}{\sim}\Unif[-\Biasinit,0]\label{eq:biasDistribution}
        \end{align}
        where $\Muk=\Einit + \frac{\Biasinit}{2}$ and where $g_{A_k,B_k,p_k}$ is the mixture distribution described in \Cref{def:mixtureDistribution} and where all weights and biases are sampled independently.
\end{definition}
Finally, when using the scaling rule from \Cref{def:initScalingRule}, we can prove the following theorem.
\begin{theorem}\label{thm:ScalingInitConstantEandV}For any choice of hyperparameters $\Einit, \Vinit, \Binit, \Biasinit, \littleConst\in\Rp$, for every $k\in\{1,\dots,K_i\}$, for every dimension $d^{i,k-1}\in\N_+$ it holds that if we sample according to the distribution described in \Cref{def:initScalingRule}, we obtain that:\footnote{We assume that $\Einit,\Vinit$ and $d$ are strictly positive. However, $\Binit,\Biasinit$ and $\littleConst$ could in theory also be set to 0 and \Cref{thm:ScalingInitConstantEandV} is formulated such that it would still hold true for $\Binit,\Biasinit,\littleConst\in\Rp\cup\{0\}$.}
\begin{enumerate}
    \item\label{itm:Ecorrect} \eqref{subeq:constE} holds,
    \item\label{itm:VnotTooSmall} $\Varco{\left(W^{i,k}z^{i,k-1}\right)_j }{ z^{i,k-1}=(1,\dots,1)^\top}\ge\Vinit
    $ holds,
    \item\label{itm:Vcorrect} \eqref{subeq:constV} holds if $d^{i,k-1}>\frac{\Muk^2}{3 \Vinit}$,
    \item\label{itm:NonZeorWeights} $\PP[W^{i,k}_{j,l}=0]=0$ if $\littleConst>0$,
    \item\label{itm:ZeorWeights} $\PP[W^{i,k}_{j,l}=0]=1-p_k$ if $\littleConst=0$ and $\Binit< B_k$,
    \item\label{itm:validMVNNInitialization} $\PP[W^{i,k}_{j,l}<0]=0$,
    \item\label{itm:boundedWeights} $\PP[W^{i,k}_{j,l}>B_k]=0$,
    \item\label{itm:BlowerBound} $B_k\geq \frac{3\Vinit}{2\Muk}$,
    \item\label{itm:Blimit} $\lim_{d^{i,k-1}\to\infty}B_k= \max\left\{\frac{3\Vinit}{2\Muk},\Binit\right\}$.
\end{enumerate}

\end{theorem}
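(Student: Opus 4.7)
The plan is to verify each of the nine items by direct computation, first using \Cref{rem:differentFormulationsOfConstEV} to reduce the two conditional-moment conditions to statements about the marginal first two moments of the weight mixture. For the weight $W^{i,k}_{j,l}$ with mixture density $g_{A_k,B_k,p_k}$, equations~\eqref{eq:mean_mixture} and \eqref{eq:var_mixture} give $\mu_k$ and $\sigma_k^2$; together with $\E[b^{i,k}_j]=-\Biasinit/2$ from \eqref{eq:biasDistribution}, conditions \eqref{subeq:constE} and \eqref{subeq:constV} collapse to the scalar identities $d\mu_k = \Muk$ and $d\sigma_k^2 = \Vinit$, where I abbreviate $d = d^{i,k-1}$.

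For items 1--3 I split into the two cases of \Cref{def:initScalingRule}. In the degenerate case $d \le \Muk^2/(3\Vinit)$, the rule gives $A_k=0$, $p_k=1$, $B_k=2\Muk/d$, so $\mu_k = \Muk/d$ and $\sigma_k^2 = \Muk^2/(3d^2)$. This immediately yields item~1, and $d\sigma_k^2 = \Muk^2/(3d) \ge \Vinit$ is equivalent to the case hypothesis, giving item~2 in this case (in the generic case, item~2 follows trivially from item~3). In the generic case $d > \Muk^2/(3\Vinit)$, the identity $d\mu_k = \Muk$ holds by construction, since \eqref{eq:def:Ak} is exactly the solution for $A_k$ of $(1-p_k)A_k/2 + p_k B_k/2 = \Muk/d$. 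The algebraically delicate step is item~3: substituting \eqref{eq:def:pk} and \eqref{eq:def:Ak} into $\sigma_k^2$ and simplifying. The key trick is to write $1-p_k = N/D$ with $N = (B_k d - 2\Muk)^2$ and $D = (B_k d - 2\Muk)^2 + 3d\Vinit - \Muk^2$, which lets the common factors cancel so that $d\sigma_k^2 = \Vinit$ drops out. I expect this to be the main obstacle, since the symbolic manipulation is long, but it is otherwise mechanical.

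For items 4--7 I analyze the support of the mixture. Item~6 is immediate from $A_k, B_k \ge 0$. For item~7, showing $A_k \le B_k$ reduces to $B_k d \ge 2\Muk$; this holds with equality in the else-case, and in the generic case follows from $B_k \ge \frac{3\Muk}{2d} + \frac{3\Vinit}{2\Muk}$ together with $d > \Muk^2/(3\Vinit)$. Item~4 is then straightforward because both continuous uniform components carry no atoms. For item~5, if $\Binit < B_k$ (so $B_k = (3\Muk^2+3d\Vinit)/(2\Muk d)$) and $\littleConst = 0$, a direct computation using the closed form $p_k = \tfrac{4\Muk^2}{3(d\Vinit+\Muk^2)}$ obtained from the factorization in the previous paragraph gives $B_k d p_k = 2\Muk$, so by \eqref{eq:def:Ak} $A_k = 0$, placing an atom of mass $1-p_k$ at zero.

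Items 8 and 9 are elementary. For item~8, the generic-case bound gives $B_k \ge \frac{3\Muk}{2d} + \frac{3\Vinit}{2\Muk} \ge \frac{3\Vinit}{2\Muk}$, and the else-case gives $B_k = 2\Muk/d \ge 6\Vinit/\Muk \ge \frac{3\Vinit}{2\Muk}$ by the case hypothesis. For item~9, for $d$ sufficiently large we are in the generic case, the terms $\frac{3\Muk}{2d}$ and $\frac{\littleConst}{d}$ vanish, and taking the limit inside $\max\{\,\cdot\,,\Binit\}$ yields the stated value.
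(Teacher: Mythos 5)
Your proposal is correct and follows the same overall skeleton as the paper's proof: both first invoke \Cref{rem:differentFormulationsOfConstEV} to collapse the conditional-moment conditions \eqref{subeq:constE} and \eqref{subeq:constV} to the marginal identities $d\,\E[W^{i,k}_{j,l}]=\Muk$ and $d\,\Var[W^{i,k}_{j,l}]=\Vinit$, and both then split on the two cases of \Cref{def:initScalingRule}. Where you genuinely diverge is in the execution of the verification: the paper delegates the symbolic simplification (items 2--9 in particular) to a Mathematica script, explicitly stating that doing it by hand would take pages of tedious, error-prone calculation, whereas you carry it out by hand via the substitution $u\coloneqq B_k d-2\Muk$, observing that the numerator of $1-p_k$ is the perfect square $u^2$ and the denominator is $u^2+3d\Vinit-\Muk^2$. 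This collapses $A_k$ to $(2\Muk u-c)/(du)$ with $c\coloneqq 3d\Vinit-\Muk^2$, after which the variance identity reduces to the factorization $(2\Muk u-c)^2+c(u+2\Muk)^2=(u^2+c)(4\Muk^2+c)$ and everything else (items 4--9) follows from the inequality $u\ge \frac{c}{2\Muk}+\littleConst$. I checked these computations and they are correct; your route buys a self-contained, human-verifiable proof where the paper offers only a machine-checked one. One small point to tighten: for item 4 you assert that ``both continuous uniform components carry no atoms,'' but this presupposes $A_k>0$, which is exactly what $\littleConst>0$ guarantees (via $u>c/(2\Muk)$ in the generic case) and what $p_k=1$, $B_k>0$ guarantees in the degenerate case; the needed facts are implicit in your item-5 and item-7 discussion, but you should state them explicitly where item 4 is proved, as the paper does.
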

\begin{proof}
First, note that by \Cref{rem:differentFormulationsOfConstEV}
\begin{align*}
\text{\eqref{subeq:constE}}&\Leftrightarrow
\E[{\left(W^{i,k}(1,\dots,1)^\top\right)_j +b^{i,k}_j }]=\Einit
\\
&\Leftrightarrow
\E[{\left(W^{i,k}(1,\dots,1)^\top\right)_j }]=\Einit-\E[b^{i,k}_j]
\\
&\Leftrightarrow
d^{i,k-1}\E[W^{i,k}_{j,l}]=\Einit+\frac{\Biasinit}{2}
\\
&\Leftrightarrow
\E[W^{i,k}_{j,l}]=\frac{\Muk}{d^{i,k-1}}.
\end{align*}
We prove this in section \enquote{Proof of item 1} of the \MathematicaScript{}\footnote{You can open the \MathematicaScript{} including the results in your web-browser (\url{https://www.wolframcloud.com/obj/jakob.heiss/Published/MVNN_initialization_V1_4.nb}) without the need to install anything on your computer. (The interactive plot might only work if you open the script with an installed version of Mathematica, but it is not necessary for the proof.) We use a computer algebra system to make it much more convenient to check the correctness of our proof. Some of the terms that appear in this proof are already very long as one can see for example in section \enquote{Proof of item 2 and 3} in our \MathematicaScript{}. Simplifying these terms by hand would take multiple pages of very tedious calculations, which would be highly prone to typos and other mistakes. Within our \MathematicaScript{} we only use exact symbolic methods and no numerical approximations (except for the visualizations).}, where we use the notation $\texttt{d}\coloneqq d^{i,k-1}$, $\texttt{M}\coloneqq \Muk$, $\texttt{V}\coloneqq \Vinit$, $\texttt{epsilon}\coloneqq \littleConst$, $\texttt{Binit}\coloneqq \Binit$,
$\texttt{Bchoice}\coloneqq B_k$ given in \eqref{eq:def:Bk},
$\texttt{pchoice}\coloneqq p_k$ given in \eqref{eq:def:pk},
$\texttt{Achoice}\coloneqq A_k$ given in \eqref{eq:def:Ak} and 
$\texttt{EofW}\coloneqq \E[W^{i,k}_{j,l}]$ computed in \eqref{eq:mean_mixture}.
This proves \cref{itm:Ecorrect}.

We manipulate using \Cref{rem:differentFormulationsOfConstEV}
\begin{align*}
\text{\eqref{subeq:constV}}&\Leftrightarrow
\Var[{\left(W^{i,k}(1,\dots,1)^\top\right)_j }]=\Vinit
\\
&\Leftrightarrow
d^{i,k-1}\Var[W^{i,k}_{j,l}]=\Vinit
\\
&\Leftrightarrow
\Var[W^{i,k}_{j,l}]=\frac{\Vinit}{d^{i,k-1}}.
\end{align*}
We prove this in section \enquote{Proof of item 2 and 3} of the \MathematicaScript{} with the additional notation $\texttt{VofW}\coloneqq \Var[W^{i,k}_{j,l}]$ computed in \eqref{eq:var_mixture}.
This section proves \cref{itm:Vcorrect,itm:VnotTooSmall}.

To show \cref{itm:NonZeorWeights} it is sufficient to show $A_k>0$ (because of $A_k\leq B_k$, as shown in section \enquote{Proof of item 7: (\texttt{A}$\leq$\texttt{B})}) or ($p_k=1$ and $B_k>0$).
If $d^{i,k-1}>\frac{\Muk^2}{3 \Vinit}$, we show $A_k>0$ in section \enquote{Proof of item 4} of the \MathematicaScript{}.
In the other case $p_k=1$ and $B_k>0$ as one can directly see from \eqref{eq:def:Bk} and \eqref{eq:def:pk}.

To show \cref{itm:ZeorWeights}, it is sufficient to show that $A_k=0$. Section \enquote{Proof of item 5} of the \MathematicaScript{} shows in the case $d>\frac{\Muk^2}{3 \Vinit}$ that $A_k=0$ $\iff$ ($\littleConst=0$ and $\Binit\leq\frac{3 \Muk^2+3 d \Vinit}{2 \Muk d}$). By definition this statement is equivalent to $A_k=0 \iff B_k=\frac{3 \Muk^2+3 d \Vinit}{2 \Muk d}$. Finally, since ($\littleConst=0$ and $\Binit<B_k) \implies B_k=\frac{3 \Muk^2+3 d \Vinit}{2 \Muk d}$, \cref{itm:ZeorWeights} holds true. 
In the case $d^{i,k-1}\le\frac{\Muk^2}{3 \Vinit}$, $A_k=0$ (i.e., \cref{itm:ZeorWeights}) follows directly from \eqref{eq:def:Ak}.
Note that $B_k>0$ always holds true, since we always assume $\Muk>0$ and $d>0$.

\Cref{itm:validMVNNInitialization} follows directly from \Cref{def:mixtureDistribution} and the fact that $A_k\geq0$ and $B_k\geq0$ (see section \enquote{Proof of item 6} in the \MathematicaScript{}).

\Cref{itm:boundedWeights} follows directly from \Cref{def:mixtureDistribution} and $A_k\leq B_k$ (see section \enquote{Proof of item 7: (\texttt{A}$\leq$\texttt{B})} in the \MathematicaScript{}).

\Cref{itm:BlowerBound} is shown for the two cases in section \enquote{Proof of item 8} of the \MathematicaScript{}.

\Cref{itm:Blimit} is shown in section \enquote{Proof of item 9} in our \MathematicaScript{}.
\end{proof}

\paragraph{Discussion of \Cref{thm:ScalingInitConstantEandV}.}\Cref{itm:Ecorrect,itm:Vcorrect} in \Cref{thm:ScalingInitConstantEandV} tell us that our initialization scheme actually solves problem~\eqref{eq:constEAndV} if $d^{i,k-1}>\frac{\Muk^2}{3\Vinit}$. Note that problem~\eqref{eq:constEAndV} does not have any solution for for $d^{i,k-1}\leq\frac{\Muk^2}{3\Vinit}$ (as can be seen in the first 5 lines of our \MathematicaScript{} with the notation explained in the proof of \Cref{thm:ScalingInitConstantEandV}). However, \cref{itm:Ecorrect,itm:VnotTooSmall} in \Cref{thm:ScalingInitConstantEandV} tell us that we still have a reasonable initialization scheme for $d^{i,k-1}\leq\frac{\Muk^2}{3\Vinit}$ that solves the relaxed problem of \cref{itm:Ecorrect,itm:VnotTooSmall}.
In practice solutions to this relaxed problem are still fine, since it also prevents us from exploding expectation or vanishing variance. Too much variance in the initialization is less of a problem.\footnote{Alternatively one could consider to relax problem~\eqref{eq:constEAndV} in the other direction by allowing smaller expectation instead of bigger variance, which would also be fine in practice. A solution to this alternative relaxed problem could be simply achieved by changing the definition of $B_k$ in the \enquote{else}-case of \eqref{eq:def:Bk}.}

\Cref{itm:NonZeorWeights,itm:ZeorWeights} motivate to choose $\littleConst>0$ to prevent weights being initialized to zero, which can lead to bad local minima.

\Cref{itm:validMVNNInitialization} guarantees that our network is actually a valid MVNN at initialization fulfilling the non-negativity constraints of the weights at initialization.\footnote{Note that if we would initialize our network with a weight-distribution that solves problem~\eqref{eq:constEAndV}, but does not fulfill \Cref{itm:validMVNNInitialization} (e.g., a generic initialization with $\mu_k=0$ and $\sigma_k\propto\frac{1}{\sqrt{d^{i,k-1}}}$), we would get a valid MVNN after the first gradient step, which projects all the negative weights to zero. However, this almost initial network would have exploding conditional expectations, i.e., almost all neurons would have pre-activations $o^{i,k}_j>1$ independent of the input training data point and thus one would end up with an almost constant function and zero gradients as shown in \Cref{fig:appendix_1d_path_plot_initialization}.}

\Cref{itm:boundedWeights,itm:BlowerBound,itm:Blimit} give us guarantees on the upper bound of the weights.

\subsection{Recommended Hyperparameter Choices}\label{sec:appendix:RecomendedHyperParametersInit}
In this section, we provide intuition about each hyperparameter $\Einit, \Vinit, \Binit, \Biasinit, \littleConst\in\Rp$ and recommendations on how to set them in practice.

\begin{figure*}
    \centering
    \resizebox{\textwidth}{!}{
    \includegraphics[trim=0 0 0 0, clip]{figures/main/Seed11_1dpathPlot_12_08_2022_10-17-30_glorotDefault_NEW.pdf}
    \includegraphics[trim=95 0 0 0, clip]{figures/main/Seed11_1dpathPlot_12_08_2022_10-27-23_custom_NEW.pdf}
    \includegraphics[trim=95 0 0 0, clip]{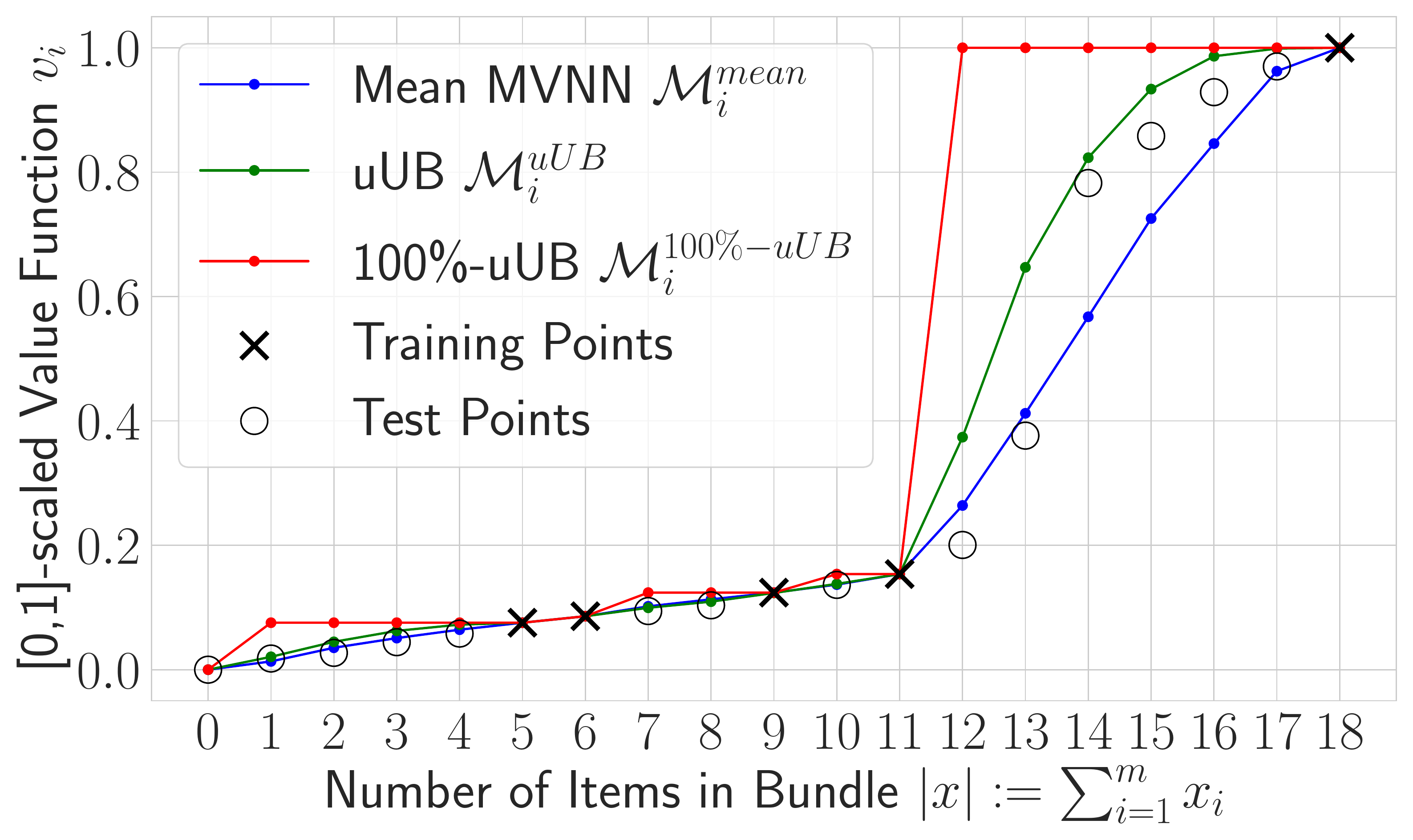}
    }
    \caption{$\meanMi$, $\uUBMi$ and $\oneMi$ along an increasing 1D subset-path in LSVM for the national bidder. \textbf{Left:} [64,64]-architecture with generic initialization fails. \textbf{Middle:} [64,64]-architecture with our initialization works. \textbf{Right:} even larger [256,256]-architecture with our initialization still works.}
    \label{fig:appendix_1d_path_plot_initialization}
\end{figure*}
\begin{enumerate}
    \item \textbf{Parameter $\Einit$:} (\texttt{init\_E} in our code) gives the conditional expectation~\eqref{subeq:constE} of a pre-activated neuron (including bias) conditioned on $\overline{z^{i,k-1}}=1$ (see \cref{itm:Ecorrect} in \Cref{thm:ScalingInitConstantEandV} and \Cref{rem:differentFormulationsOfConstEV}). This corresponds to an upper bound of the expected value of the MVNN $\E[\MVNNi{}((1,...,1))]$ (i.e., the predicted value of the full bundle) at initialization of the network, if all cutoffs $t^{i,k}$ of the bReLU activation function are equal to $1$. $\Einit$ is approximately equal to $\E[\MVNNi{}((1,...,1))]$ if $\Einit\geq1$.
    
    If you normalize the data such that the full bundle has always value 1 (i.e., $\hvi{}((1,...,1))=1$), setting $\Einit=1$ is our recommended choice. If you choose to initialize the bReLU cutoffs $t^{i,k}$ i.i.d.\ uniformly at random, i.e.,  $t^{i,k}\sim \text{Unif}(0,1)$, then $\Einit \in [1,2]$ is recommended, because in this case $\frac{\Einit}{2}$ is an upper bound for the expectation of the pre-activated values\footnote{Recall, that $o^{i,k}_j$ is the pre-activated value of the $j$-th neuron in the $k$-th layer (of the $i$-th bidder) including biases.} $o^{i,k}_j$ of any neuron in the MVNN at initialization for $k>1$. This can be seen as follows:
    \begin{align}
    \E[o^{i,k}_j]&\stackrel{\text{\Cref{rem:differentFormulationsOfConstEV}}}{=}(\Einit-\E[b^{i,k}_j])\E[z_1^{i,k-1}]+\E[b^{i,k}_j]\\
    &\leq(\Einit-\E[b^{i,k}_j])\E[t^{i,k-1}]+\E[b^{i,k}_j]\\
    &=\frac{\Einit+\frac{\Biasinit}{2}}{2}-\frac{\Biasinit}{2}\\
    &=\frac{\Einit}{2}-\frac{\Biasinit}{4}<\frac{\Einit}{2},
    \end{align}
    where the second inequality follows from the definition of the bRelu activation function.
     Specifically, this also shows for $j=1$ and $k=K_i$ that $\frac{\Einit}{2}$ is an upper bound for $\E[\MVNNi{}((1,...,1))]=o^{i,K_i}_1((1,...,1))$ at initialization.\footnote{On the other hand one could argue that one wants the conditional expectation of pre-activated neurons to be smaller because of the smaller cut-offs. However, note especially for small values of $\Einit$, the actual expectation $\E[\MVNNi{}((1,...,1))]$ at initialization decreases with increasing depth of the network, since the upper bound can loose its tightness as depth increases especially for $\Einit<1$.} If the values of your MVNN are in a different order of magnitude, you should scale your data in a pre-processing step to $[0,1]$.
    \item \textbf{Parameter $\Vinit$:} (\texttt{init\_V} in our code) gives the conditional variance~\eqref{subeq:constV} of a pre-activated neuron (without bias) conditional on $z^{i,k-1}=(1,\dots,1)^\top$, if $d^{i,k-1}>\frac{\Muk^2}{3 \Vinit}$ (see \cref{itm:Vcorrect} in \Cref{thm:ScalingInitConstantEandV}). In any case, $\Vinit$ is a lower bound for this conditional variance (see \cref{itm:VnotTooSmall} in \Cref{thm:ScalingInitConstantEandV}). 
    Typically, we select $\Vinit \in [\nicefrac{1}{50},1]$.
    Choosing $\Vinit$ too small yields an almost deterministic network initialization.
    Since we prefer initial weights that are smaller than one, $\Vinit$ should not be chosen too large (i.e., preferably $\frac{3\Vinit}{2\Muk}\leq1$ because of \cref{itm:boundedWeights,itm:BlowerBound,itm:Blimit} in \Cref{thm:ScalingInitConstantEandV}).
    \item \textbf{Parameter $\Biasinit$:} (\texttt{init\_bias} in our code\footnote{\label{footnote:PaperToCodeBiasAndB}Be careful, when translating the notation of our paper into the notation of our code. $B_k$ is \texttt{b} in our code and the biases $b$ are denoted by \texttt{bias} in our code. $\Biasinit$ does \emph{not} translate to \texttt{init\_b}.}) All the biases of the MVNN $b^{i,k}_j$ are sampled uniformly at random from $[-\Biasinit,0]$ as given in \cref{eq:biasDistribution}. Setting $\Biasinit=0.05$ is our recommendation, although any other small values would be sufficient. We discourage zero to avoid numerical issues during training.
    \item \textbf{Parameter $\Binit$:} (\texttt{init\_b} in our code\textsuperscript{\ref{footnote:PaperToCodeBiasAndB}}) 
    $\Binit$ gives a \emph{lower} bound for $B_k$ in the case of large number of neurons (see \cref{itm:Blimit} in \Cref{thm:ScalingInitConstantEandV} and \cref{eq:def:Bk}).
    The "big" weights are sampled from $\text{Unif}(0,B_k)$, i.e., $B_k$ is an upper bound for the weights (see \cref{itm:boundedWeights} in \Cref{thm:ScalingInitConstantEandV}).
    Usually, we prefer $B_k$ that are not unnecessary big, but too small $B_k$ leads to almost vanishing $A_k$ (in the case of $d>\frac{\Muk^2}{3 \Vinit}$).\footnote{Section \enquote{Proof of item 5} in the \MathematicaScript{} shows that $A_k=0$ iff $\epsilon=0$ and $\Binit\le\frac{3 \Muk^2+3 d \Vinit}{2 \Muk d}$. This statement is equivalent to $A_k=0$ iff $B_k=\frac{3 \Muk^2+3 d \Vinit}{2 \Muk d}$, which is the minimal possible choice for $B_k\in[\frac{3 \Muk^2+3 d \Vinit}{2 \Muk d},\infty)$. By continuity $B_k \approx \frac{3 \Muk^2+3 d \Vinit}{2 \Muk d} \implies A_k\approx 0$. Note that in the other case $d\le\frac{\Muk^2}{3 \Vinit}$, $p_k=1$ and thus $A_k$ is irrelevant for the mixture distribution from \Cref{def:mixtureDistribution}.
    }
    Thus, setting $\Binit=0.05$ is our recommendation (but one could also use any other small value including zero).
    \item \textbf{Parameter $\littleConst$:} (\texttt{init\_little\_const} in our code) prevents weights from being initialized to zero, i.e., $\littleConst>0$ guarantees that no initial weight will be \emph{exactly} zero, see \cref{itm:NonZeorWeights} in \Cref{thm:ScalingInitConstantEandV}. Conversely setting $\littleConst=0$ leads weights to be zero with probability $(1-p_k)$ (see \cref{itm:ZeorWeights} in \Cref{thm:ScalingInitConstantEandV}).
    If one chooses $\littleConst$ too large $B_k$ can become too large (see \cref{eq:def:Bk}). Thus, setting $\littleConst=0.1$ is our recommendation (similarly to $\Biasinit$ any other small value is also admissible).
\end{enumerate}

In the last section of our \MathematicaScript{} we provide an interactive plot that shows how $B_k,A_k,p_k,\Varco{o^{i,k}_j-b^{i,k}_j}{z^{i,k-1}=(1,\dots,1)^\top}$ and  $\Eco{o^{i,k}_j}{z^{i,k-1}=(1,\dots,1)^\top}$ depend on $d^{i,k-1}$ for different choices of our hyperparameters.

\subsection{Visualization for Wider MVNNs}\label{subsec:appendix_Visualization for Wider MVNNs}
In this section, we provide an additional visualization for a wider MVNN that uses our proposed new initialization method.

In \Cref{fig:appendix_1d_path_plot_initialization}, we present the results. \Cref{fig:appendix_1d_path_plot_initialization} confirms that our initialization method also properly works for an even larger MVNN-architecture with two hidden layers with 256 neurons per hidden layer. While the problems of the generic initialization methods described in \Cref{subsec:New MVNN Random Initialization} increase as the number of neurons increase, our initialization method can deal with an arbitrarily large number of neurons.

\section{MILP}\label{sec:appendix_MVNN_based_MILP}
In this section, we provide more details on \Cref{subsec:New MVNN-based MIP}.

\subsection{Proof of \Cref{thm:milp}}\label{subsec:appendix_proof_MILP}
In this section, we provide the proof of \Cref{thm:milp}. 

First, we show in \Cref{lem:mvnn_layer} how to encode an arbitrary single hidden MVNN layer into multiple linear constraints. 
For this fix a bidder $i\in N$ and an arbitrary layer $k\in \{1,\ldots,K_i-1\}$. Recall, that $z^{i, k-1}\in \R^{d^{i,k-1}}$ denotes the output of the previous layer (with $z^{i, 0}$ being equal to the input $x\in \R^{d^{i,0}}=\R^{m}$) and that $o^{i, k}\coloneqq W^{i, k}z^{i, k-1} + b^{i, k}$ denotes the \emph{pre}-activated output of the $k$\textsuperscript{th} layer with $l^{i, k}\le o^{i, k} \le u^{i, k}$, where the tight lower/upper bound $l^{i, k}$/ $u^{i, k}$ can be computed by forward-propagating the empty/full bundle. Then the following Lemma holds:\footnote{All vector inequalities should be understood component-wise.}

\begin{lemma}\label{lem:mvnn_layer}
    Consider the following set of linear constraints:
    \begin{align}
        & z^{i, k}\le \alpha^{i,k}\cdot t^{i, k}\label{eq:lemma(i)}\\
        & z^{i, k}\le o^{i, k} - l^{i,k}\cdot(1-\alpha^{i,k})\label{eq:lemma(ii)}\\
        & z^{i, k}\ge \beta^{i,k}\cdot t^{i, k}\label{eq:lemma(iii)}\\
        & z^{i, k}\ge o^{i, k} + (t^{i, k}-u^{i, k})\cdot \beta^{i,k} \label{eq:lemma(iv)}\\
        & \alpha^{i,k}\in \{0,1\}^{d^{i,k}},\, \beta^{i,k}\in \{0,1\}^{d^{i,k}}\label{eq:lemma(v)}.
    \end{align}
    Then it holds for the output of the $k$\textsuperscript{th} layer $\varphi_{0,t^{i, k}}(o^{i, k})=z^{i,k}$.
\end{lemma}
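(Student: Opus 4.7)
The statement is purely about the encoding of one bReLU activation, so the first step is to \emph{decompose component-wise}: since every constraint~\eqref{eq:lemma(i)}--\eqref{eq:lemma(v)} is imposed entry-by-entry and the bReLU is applied coordinate-wise, it suffices to fix a bidder $i$, a layer $k$ and a neuron index $j$, and to argue that the scalar version of the constraints forces
\[
z \;=\; \varphi_{0,t}(o) \;=\; \min\bigl(t,\max(0,o)\bigr),
\]
where $o := o^{i,k}_j$, $z := z^{i,k}_j$, $\alpha := \alpha^{i,k}_j$, $\beta := \beta^{i,k}_j$, $t := t^{i,k}$, $l := l^{i,k}_j$, $u := u^{i,k}_j$, and $l \le o \le u$ with $l \le 0 \le t \le u$ (the last chain of inequalities is a consequence of the MVNN sign conventions plus the forward-propagation of empty/full bundles).

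The bulk of the argument is then a short \emph{case analysis} on the three regimes of $\varphi_{0,t}$:
\begin{itemize}
\item[(a)] $o < 0$: I will show the only $(\alpha,\beta)$ feasible for~\eqref{eq:lemma(i)}--\eqref{eq:lemma(iv)} is $\alpha=\beta=0$; in that case~\eqref{eq:lemma(i)} gives $z\le 0$, \eqref{eq:lemma(ii)} reduces to $z\le o-l\ge 0$ (non-binding), \eqref{eq:lemma(iii)} gives $z\ge 0$, and \eqref{eq:lemma(iv)} to $z\ge o$ (non-binding since $o<0$), forcing $z=0$. The choices $\alpha=1$ or $\beta=1$ lead to contradictions such as $z\le o<0\le z$ or $z\ge t>0\ge z$.
\item[(b)] $0 \le o \le t$: the only feasible assignment is $\alpha=1,\beta=0$ (for $o>0$; for $o=0$ both $\alpha=0$ and $\alpha=1$ work but both give $z=0$), under which~\eqref{eq:lemma(i)}--\eqref{eq:lemma(iv)} collapse to $z\le o$ and $z\ge o$, hence $z=o$. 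I will rule out $\beta=1$ by $z\ge t \ge o$ combined with $z\le o$, which forces $o=t$ and is already covered by case (c).
\item[(c)] $o > t$: only $\alpha=\beta=1$ is feasible; then~\eqref{eq:lemma(i)} gives $z\le t$, \eqref{eq:lemma(iii)} gives $z\ge t$, and~\eqref{eq:lemma(iv)} becomes $z\ge o+t-u$, which is non-binding precisely because $o\le u$. The alternative $\beta=0$ would force $z\ge o>t\ge z$.
\end{itemize}
The \emph{tightness} of $l$ and $u$ obtained by forward-propagating the empty and full bundles (\cite[Fact~1]{weissteiner2022monotone}) is exactly what makes the non-binding constraints in cases (a) and (c) non-restrictive without introducing any ``big-M'' slack—this will be the key remark to highlight at the end of the lemma proof.

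Once \Cref{lem:mvnn_layer} is established, the proof of \Cref{thm:milp} proceeds by induction on the layer index. I set $z^{i,0}=a_i$ via~\eqref{eq:thm(i)}, apply \Cref{lem:mvnn_layer} to each hidden layer $k=1,\ldots,K_i-1$ (constraints~\eqref{eq:thm(ii)}--\eqref{eq:lemma(vi)}) so that $z^{i,k}=\varphi_{0,t^{i,k}}(W^{i,k}z^{i,k-1}+b^{i,k})$ coincides with the hidden activations of $\uUBMi$ evaluated at $a_i$, and observe that the final (linear) output layer $W^{i,K_i}z^{i,K_i-1}$ is exactly the term in the objective~\eqref{eq:milp_objective}. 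Since $a\in\F$ is the remaining constraint and the feasible set matches $\F$, any optimal solution of the MILP corresponds to an allocation maximizing $\sum_i\uUBMi(a_i)$ and vice versa.

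The only mildly delicate point is case (b) when $o=0$ or $o=t$, where two $(\alpha,\beta)$ assignments are simultaneously feasible; I will note that in these boundary situations all feasible assignments yield the \emph{same} value of $z$, so the lemma's conclusion still holds as an equality $z=\varphi_{0,t}(o)$. This, together with verifying that the chosen $l,u$ really are attained by the empty/full bundles (which uses $W^{i,k}\ge 0$ and $b^{i,k}\le 0$), is the only place where care is needed; the rest is routine bookkeeping.
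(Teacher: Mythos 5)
Your proof follows the same core strategy as the paper's: fix a neuron, split into the three regimes $o<0$, $0\le o\le t$, $o>t$ of the bReLU, and check the constraints in each case. The one substantive difference is that the paper's proof only \emph{exhibits}, in each regime, a binary assignment ($\alpha=\beta=0$, then $\alpha=1,\beta=0$, then $\alpha=\beta=1$) under which the constraints force $z=\varphi_{0,t}(o)$, whereas you additionally rule out every other assignment of $(\alpha,\beta)$ (or show that the remaining feasible ones at the boundaries $o=0$, $o=t$ yield the same $z$). Since the lemma asserts the equality $z^{i,k}=\varphi_{0,t^{i,k}}(o^{i,k})$ for the constraint set as a whole --- and an MILP solver is free to pick any feasible binary assignment --- your exhaustive version is the argument that fully justifies the claim, so this is a welcome strengthening rather than a detour. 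Two small caveats: the inequality $t\le u$ that you list among your standing assumptions is not actually guaranteed (the forward-propagated upper bound $u$ need not exceed the cutoff $t$), but nothing in your case analysis uses it --- if $u<t$ your case (c) is simply vacuous and the rest goes through; and your closing induction over layers for \Cref{thm:milp} matches the paper's proof of that theorem exactly.
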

\begin{figure}[t!]
    \centering
    \resizebox{\columnwidth}{!}{
    \includegraphics[trim=160 170 400 150, clip]{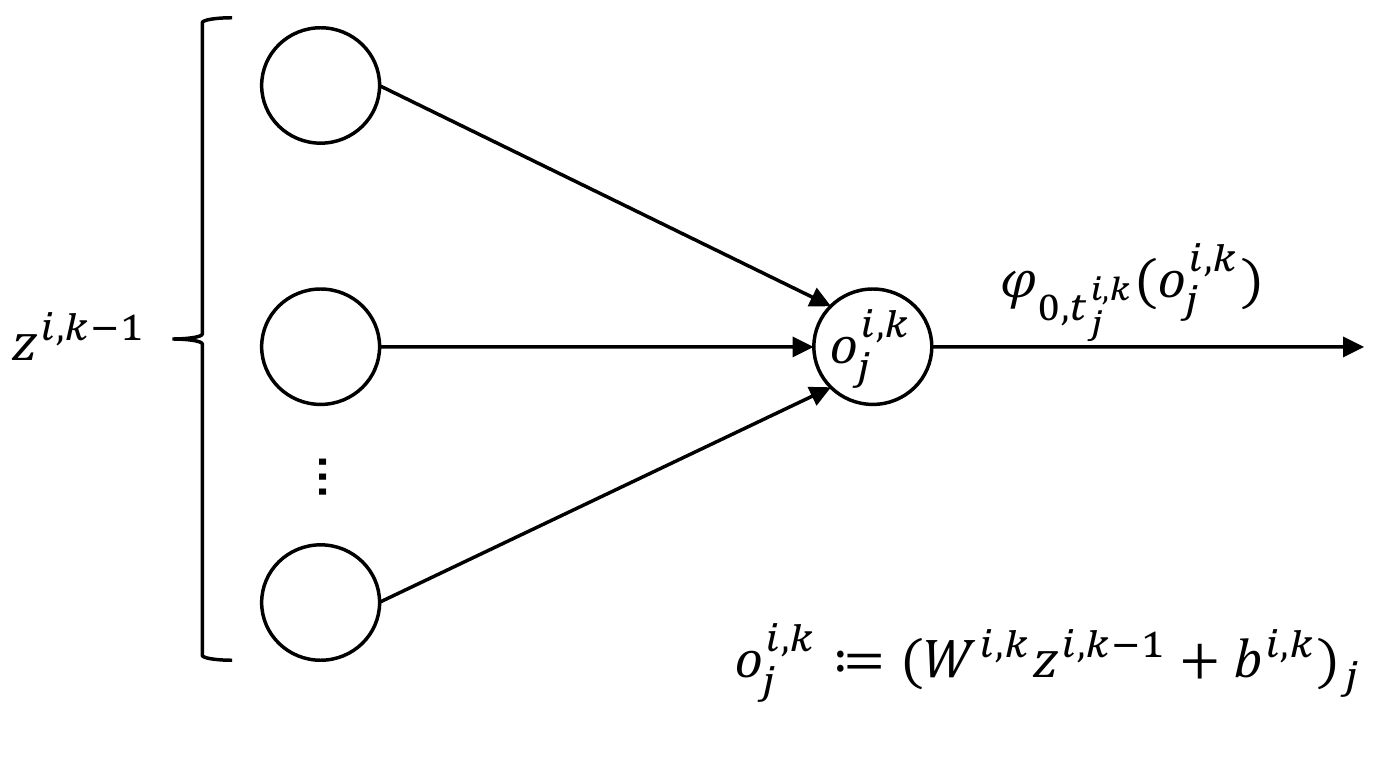}
    }
    \vskip -0.35cm
    \caption{Schematic representation of the $j$\textsuperscript{th} neuron in the $k$\textsuperscript{th} layer. Shown are the output $z^{i,k-1}$ of the previous $(k-1)$\textsuperscript{st} layer, the pre-activated output $o^{i,k}_j$ of the $j$\textsuperscript{th} neuron in the $k$\textsuperscript{th} layer, and the output of the $j$\textsuperscript{th} neuron of the $k$\textsuperscript{th} layer after applying bRELU $\varphi_{0,t_j^{i,k}}(o^{i,k}_j)=\min(t_j^{i,k},\max(0,o^{i,k}_j))$ .}
    \label{fig:appendix:mip_lemma_schematic_representation}
    \vspace{0.5cm}
\end{figure}
\begin{proof}
Recall, that $o^{i,k}\coloneqq W^{i,k}z^{i,k-1}+b^{i,k}$ denotes the pre-activated output of the $k$\textsuperscript{th} layer. Let $j\in \{1,\ldots,d^{i,k}\}$ denote the $j$\textsuperscript{th} neuron of that layer and let $l_j^{i,k}\le o_j^{i,k}\le u_j^{i,k}$, where $l_j^{i,k}$ and $u_j^{i,k}$ are the tight box bounds computed by forward propagating the empty, i.e., $x=(0,\ldots,0)$, and the full, i.e., $x=(1,\ldots,1)$ bundle (see \cite[Appendix C.5 and Fact C.1]{weissteiner2022monotone}). Moreover, let $\varphi_{0,t_j^{i,k}}(o_j^{i,k})=\min(t_j^{i,k},\max(0,o_j^{i,k}))$ with $t_j^{i,k}\ge 0$ be the output of the $j$\textsuperscript{th} neuron in the $k$\textsuperscript{th} layer.  In \Cref{fig:appendix:mip_lemma_schematic_representation}, we present a schematic representation for a single neuron. In \Cref{fig:appendix:bReLU}, we present an example of the bReLU activation function.

We distinguish the following three exclusive cases:
\begin{itemize}
     \item \textbf{Case 1:} $o^{i,k}_j < 0$, i.e., the red line segment in \Cref{fig:appendix:bReLU}. Per definition it follows that $ \varphi_{0,t_j^{i,k}}(o^{i,k}_j) = 0$.
     Setting $\alpha_j^{i,k}=\beta_j^{i,k}=0$ in \Crefrange{eq:lemma(i)}{eq:lemma(v)} implies that $z^{i, k}_j= 0$.

     \item \textbf{Case 2:} $o^{i,k}_j \in [0, t^{i,k}_j]$, i.e., the blue line segment in \Cref{fig:appendix:bReLU}. Per definition it follows that $\varphi_{0,t_j^{i,k}}(o^{i,k}_j) = o^{i,k}_j$.
     Setting $\alpha_j^{i,k}=1$ and  $\beta_j^{i,k}=0$ in \Crefrange{eq:lemma(i)}{eq:lemma(v)} implies that $z^{i, k}_j =  o^{i,k}_j$.
     \item \textbf{Case 3:} $o^{i,k}_j > t^{i,k}_j$, i.e., the green line segment in \Cref{fig:appendix:bReLU}. Per definition it follows that $\varphi_{0,t_j^{i,k}}(o^{i,k}_j) = t_j^{i,k}$
    Setting $\alpha_j^{i,k}=\beta_j^{i,k}=1$ in \Crefrange{eq:lemma(i)}{eq:lemma(v)} implies that $z^{i, k}_j  =  t^{i,k}_j$.
\end{itemize}

Thus, in total $z^{i, k}=\varphi_{0,t_j^{i,k}}(o^{i,k}_j)$.
\end{proof}

\begin{figure}[t!]
    \centering
    \resizebox{\columnwidth}{!}{
    \includegraphics{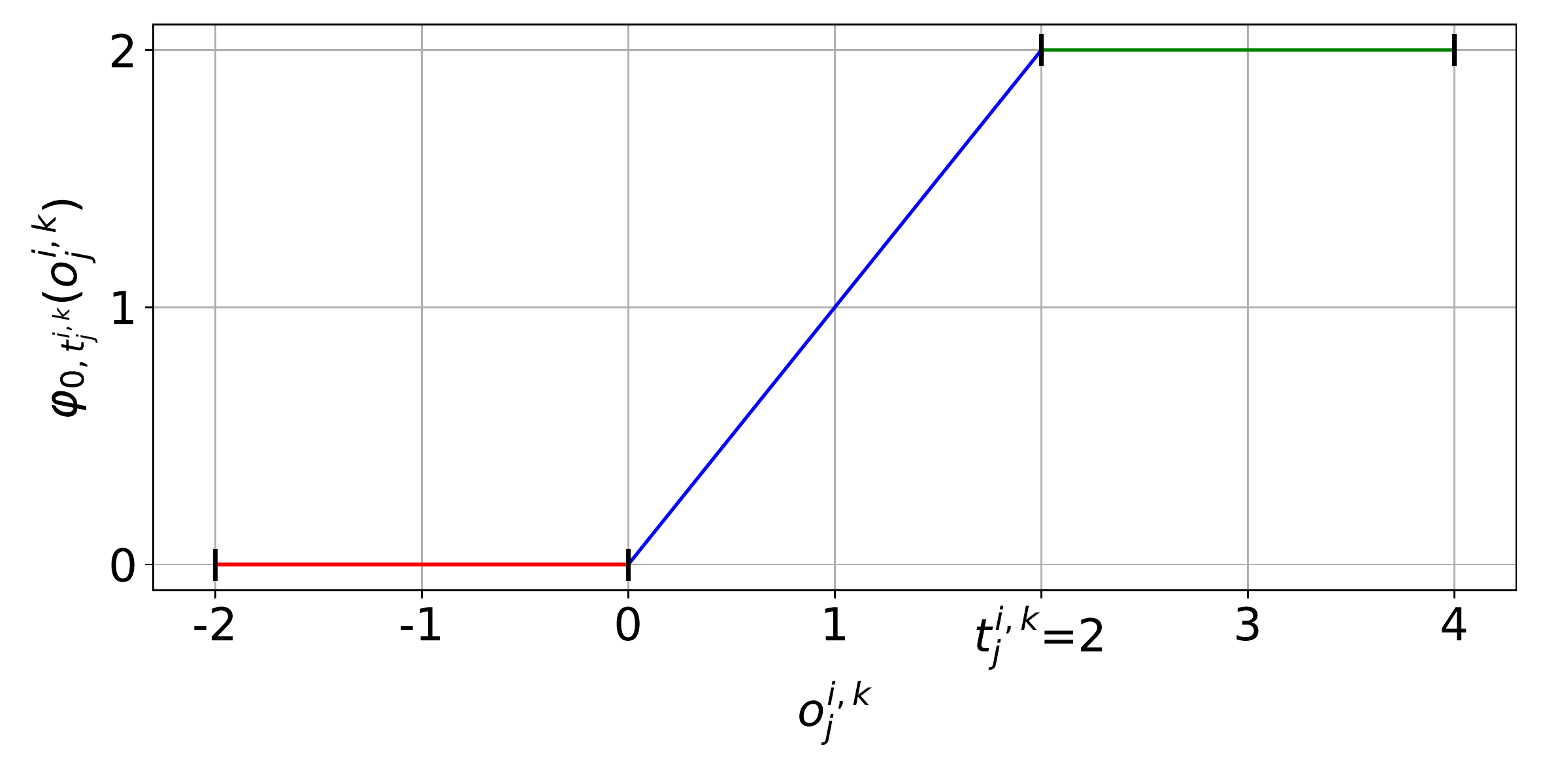}
    }
    \vskip -0.35cm
    \caption{Example plot for agent $i$, layer $k$ and neuron $j$ of the bReLU activation function $\varphi_{0,t_j^{i,k}}(\cdot)\coloneqq\min(t_j^{i,k},\max(0,\cdot))$ \cite{weissteiner2022monotone} with cutoff $t_j^{i,k}=2$ in the interval $[-2,4]$. Shown are the pre-activated output $o_j^{i,k}$ of the $j$\textsuperscript{th} neuron in the $k$\textsuperscript{th} layer (x-axis) and the output of the $j$\textsuperscript{th} neuron in the  $k$\textsuperscript{th} layer after applying bRELU (y-axis).}
    \label{fig:appendix:bReLU}
    \vspace{0.5cm}
\end{figure}
Using \Cref{lem:mvnn_layer}, we can now proof \Cref{thm:milp} which provides our new and more succinct MILP formulation.

\begin{proof}
Consider the ML-based WDP from \Cref{eq:ML-WDP}. For each bidder $i\in N$, we first set
$z^{i,0}$ equal to the input bundle $a_i$. Then we proceed by using
\Cref{lem:mvnn_layer} for $k=1$, i.e., we reformulate the output of the 1\textsuperscript{st} layer as the linear \Crefrange{eq:lemma(i)}{eq:lemma(iv)}. We iterate
this procedure until we have reformulated the last hidden layer, i.e,
layer $k=K_i-1$. Doing so for each bidder $i\in N$ yields the desired MILP formulation from \Cref{thm:milp}.
\end{proof}

\subsection{Removing Constraints via Box Bounds}\label{subsec:app:Removing Constraints with Box Bounds}
Let $l_j^{i,k}\le o_j^{i,k}\le u_j^{i,k}$, where $l_j^{i,k}$ and $u_j^{i,k}$ are the \emph{tight} box bounds computed by forward propagating the empty, i.e., $x=(0,\ldots,0)$, and the full, i.e., $x=(1,\ldots,1)$ bundle (see \cite[Appendix C.5 and Fact C.1]{weissteiner2022monotone}).

In the following cases one can remove the constraints and corresponding variables in \Cref{lem:mvnn_layer} and thus also in \Cref{thm:milp}.
\begin{itemize}[leftmargin=*,topsep=5pt,partopsep=0pt, parsep=0pt]
    \item \textbf{Case 1:} $0\le t_j^{i,k} < l_j^{i,k}\le u_j^{i,k}$.
    Then one can simply set 
    \begin{align}
        z_j^{i,k}\coloneqq t_j^{i,k}
    \end{align} and remove the $j$\textsuperscript{th} components from \Crefrange{eq:lemma(i)}{eq:lemma(v)} of the corresponding  layer $k$ ($o_j^{i,k}$ lies for sure in the green line segment in \Cref{fig:appendix:bReLU}).
    \item \textbf{Case 2:} $l_j^{i,k} \le u_j^{i,k}< 0\le t_j^{i,k}$.
    Then one can simply set 
    \begin{align}
        z_j^{i,k}\coloneqq 0
    \end{align} and remove the $j$\textsuperscript{th} components from \Crefrange{eq:lemma(i)}{eq:lemma(v)} of the corresponding layer $k$ ($o_j^{i,k}$ lies for sure in the red line segment in \Cref{fig:appendix:bReLU}).
    \item \textbf{Case 3:} $0\le l_j^{i,k}\le u_j^{i,k}\le t_j^{i,k}$.
    Then one can simply set 
    \begin{align}
        z_j^{i,k}\coloneqq o_j^{i,k}
    \end{align} and remove the $j$\textsuperscript{th} components from \Crefrange{eq:lemma(i)}{eq:lemma(v)} of the corresponding layer $k$ ($o_j^{i,k}$ lies for sure in the blue line segment in \Cref{fig:appendix:bReLU}).
   \item \textbf{Case 4:} $0\le l_j^{i,k}\le t_j^{i,k}< u_j^{i,k}$.
    Then one can set 
    \begin{align}
        \alpha_j^{i,k}\coloneqq 1
    \end{align} and only one binary decision variable for the $j$\textsuperscript{th} neuron of the $k$\textsuperscript{th} layer remains. ($o_j^{i,k}$ lies for sure in the union of the blue and green line segment in \Cref{fig:appendix:bReLU}).
    \item \textbf{Case 5:} $l_j^{i,k}\le 0< u_j^{i,k}\le t_j^{i,k}$.
    Then one can set 
    \begin{align}
        \beta_j^{i,k}\coloneqq 0
    \end{align} and only one binary decision variable for the $j$\textsuperscript{th} neuron of the $k$\textsuperscript{th} layer remains. ($o_j^{i,k}$ lies for sure in the union of the red and blue line segment in \Cref{fig:appendix:bReLU}).
\end{itemize}

\subsection{MILP for MVNNs with Linear Skip Connection}\label{subsec:appMILP for MVNNs with Linear Skip Connection}
In this section, we provide a simple extension of the MILP in \Cref{thm:milp} for MVNNs with a linear skip connection. First, we define MVNNs with a linear skip connection.

\begin{definition}[MVNN with Linear Skip Connection]\label{def:lskipMVNN}
		An MVNN with linear skip connection $\lskipMVNNi{}:\X \to \mathbb{R}_+$  for agent $i\in N$ is defined as
		\begin{align}\label{eq:MVNNlskip}
		\lskipMVNNi{x} &=W^{i,K_i}\varphi_{0,t^{i,K_i}}\left(\ldots\varphi_{0,t^{i,1}}(W^{i,1}x+b^{i,1})\ldots\right)\\
		&+W^{i,0}x\notag,
		\end{align}
		\begin{itemize}[leftmargin=*,topsep=0pt,partopsep=0pt, parsep=0pt]
		\item $K_i+1\in\mathbb{N}$ is the number of layers ($K_i-1$ hidden layers),
		\item $\{\varphi_{0,t^{i, k}}{}\}_{k=1}^{K_i-1}$ are the MVNN-specific activation functions with cutoff $t^{i, k}>0$, called \emph{bounded ReLU (bReLU)}:
		\begin{align}\label{itm:app:MVNNactivation}
		\varphi_{0,t^{i, k}}(\cdot)\coloneqq\min(t^{i, k}, \max(0,\cdot))
		\end{align}
		\item $W^i\coloneqq (W^{i,k})_{k=0}^{K_i}$ with $W^{i,k}\ge0$ and $b^i\coloneqq (b^{i,k})_{k=1}^{K_i-1}$ with $b^{i,k}\le0$ are the \emph{non-negative} weights and \emph{non-positive} biases of dimensions $d^{i,k}\times d^{i,k-1}$ (except $W^{i,0}$ which is of dimension $d^{i,K_i}\times d^{i,0}$) and $d^{i,k}$, whose parameters are stored in $\theta=(W^i,b^i)$, where $W^{i,0}$ represents the linear skip connection.
		\end{itemize}
\end{definition}
For the MILP of an MVNN with linear skip connection the only thing that changes is the objective, i.e., one needs to replace \Cref{eq:milp_objective} in \Cref{thm:milp} with 
\begin{align}
    &\max\limits_{a\in \F, z^{i,k},\alpha^{i,k},\beta^{i,k}}\left\{\sum_{i \in N} W^{i, K_{i}} z^{i, K_{i}-1}+W^{i, 0}z^{i,0}\right\}\label{eq:milp_objective_linear_skip_mvnn}
\end{align}

\section{Experiment Details}\label{sec:Experiment Details}
In this section, we present all details of our experiments from \Cref{sec:experiments}.

\subsection{SATS Domains}\label{subsec:appendix_SATS_domains}
In this section, we provide a more detailed overview of the three SATS domains\footnote{We do not consider the simplest of all SATS domains, i.e., the \textit{Global Synergy Value Model (GSVM)} \cite{goeree2010hierarchical}, since prior work already achieves 0\% efficiency loss without integrating any notion of uncertainty \cite{weissteiner2022monotone}, and thus GSVM can be seen as already ``solved''.}, which we use to experimentally evaluate BOCA:
\begin{itemize}[leftmargin=*,topsep=0pt,partopsep=0pt, parsep=0pt]
\item \textbf{Local Synergy Value Model (LSVM)} \cite{scheffel2012impact} has $18$ items, $5$ \emph{regional} and $1$ \emph{national bidder}. Complementarities arise from spatial proximity of items.
\item \textbf{Single-Region Value Model (SRVM)} \cite{weiss2017sats} has $29$ items and $7$ bidders (categorized as  \emph{local}, \emph{high frequency} \emph{regional}, or \emph{national}) and models large UK 4G spectrum auctions.
\item \textbf{Multi-Region Value Model (MRVM)} \cite{weiss2017sats} has $98$ items and $10$ bidders (\emph{local}, \emph{regional}, or \emph{national}) and models large Canadian 4G spectrum auctions.
\end{itemize}
In the efficiency experiments in this paper (i.e., \Cref{tab:efficiency_loss_mlca}, 
 \Cref{tab:efficiency_loss_mlca_appendix}, \Cref{tab:efficiency_loss_mlca_appendix_onlyMean}, 
 \Cref{tab:efficiency_loss_mlca_appendix_20iQ}, and \Cref{tab:efficiency_loss_mlca_appendix_20iQ_onlyMean}), we instantiated for each SATS domain the $50$ synthetic CA instances with the seeds $\{10001,\ldots,10050\}$. We used \href{https://github.com/spectrumauctions/sats/releases/}{SATS version 0.8.0}. All experiments were conducted on a compute cluster running Debian GNU/Linux 10 with Intel Xeon E5-2650 v4 2.20GHz processors with 24 cores and 128GB RAM and Intel E5 v2 2.80GHz processors with 20 cores and 128GB RAM and Python 3.7.10.

\subsection{Hyperparameter Optimization}\label{subsec:appendix_hpo}

\begin{table*}[htbp]
    \robustify\bfseries
	\centering
	\begin{sc}
	\resizebox{1\textwidth}{!}{
\begin{tabular}{llll}
\toprule
\multicolumn{1}{c}{\textbf{Category}}&\multicolumn{1}{c}{\textbf{Hyperparameter}}               & \multicolumn{1}{c}{\textbf{Range}} & \multicolumn{1}{c}{\textbf{Log-Uniform Sampling}} \\ \midrule
Data &Number of Training Data Points: $|\Dtr|$              & \begin{tabular}[c]{@{}l@{}}LSVM: 50\\ SRVM: 100\\ MRVM: 100\end{tabular}                                                                                                          &               \\
\cmidrule(l{2pt}r{2pt}){2-4}
&Number of Test Data Points: $|\Dtest|$               & \begin{tabular}[c]{@{}l@{}}LSVM: $2^{10}$\\ SRVM: $2^{13}$\\ MRVM: $2^{13}$\end{tabular}                                                                                    &               \\\cmidrule(l{2pt}r{2pt}){2-4}
&Number of SATS Instances                & 100           & \\\midrule
Generic MVNN&Architecture: (\#neurons per hidden layer, \#hidden layers)                 & \begin{tabular}[c]{@{}l@{}}LSVM: \{(96, 1), (32, 2), (16, 3)\}\\ SRVM: \{(32, 2), (16, 3)\}\\ MRVM: \{(96, 1), (64, 1), (20, 2)\}\end{tabular} &               \\ \cmidrule(l{2pt}r{2pt}){2-4}
&Linear Skip Connection (see \Cref{def:lskipMVNN})   & \{True, False\}                        &               \\ \cmidrule(l{2pt}r{2pt}){2-4}
&Batch Size                       & $\{\nicefrac{|\Dtr|}{4}, \nicefrac{|\Dtr|}{2},|\Dtr|\}$ &               \\ \cmidrule(l{2pt}r{2pt}){2-4}
&Epochs                       & $\{4000 \frac{\textsc{Batch Size}}{|\Dtr|},4500\frac{\textsc{Batch Size}}{|\Dtr|},5000\frac{\textsc{Batch Size}}{|\Dtr|},\ldots, 8000\frac{\textsc{Batch Size}}{|\Dtr|}\}$ &               \\\cmidrule(l{2pt}r{2pt}){2-4}
&Dropout Probability                & {[}0, 0.8{]}       &               \\\cmidrule(l{2pt}r{2pt}){2-4}
&Dropout Probability Decay Factor     & {[}0.75, 1.0{]}  &               \\ \midrule
Generic Loss & Optimizer                & Adam &               \\\cmidrule(l{2pt}r{2pt}){2-4}
&Learning Rate                & \begin{tabular}[c]{@{}l@{}}GSVM: {[}0.0002, 0.002{]}\\ LSVM: {[}0.001, 0.01{]}\\ SRVM: {[}0.0008, 0.006{]}\\ MRVM: {[}0.0007, 0.004{]}\end{tabular}                                          & Yes           \\\cmidrule(l{2pt}r{2pt}){2-4}
&L2-Regularization: $\lambda$ (see \Cref{eq:NOMU_Loss_Extension})                          & {[}1e-10, 1e-3{]} & Yes           \\\cmidrule(l{2pt}r{2pt}){2-4}
&Smooth L1-Loss $\beta$ (see \Cref{def:appendix_smoothL1Loss}) & \{1/32, 1/64, 1/128\} &               \\\cmidrule(l{2pt}r{2pt}){2-4}
&Clip Grad Norm               & [1e-6, 1]                         & Yes           \\ \midrule
NEW NOMU Loss&Number of Artificial Input Data Points: $|\Dart|$ (see \Cref{subsec:NOMU for Combinatorial Assignment})                      & \{64,80,96,\ldots,512\}        &            \\ \cmidrule(l{2pt}r{2pt}){2-4}
&$\musqr$ (see \Cref{eq:NOMU_Loss_Extension})                      & 1       &            \\\cmidrule(l{2pt}r{2pt}){2-4}
&$\muexp$ (see \Cref{eq:NOMU_Loss_Extension})                      & {[}1e-6, 5e-1{]}        & Yes           \\\cmidrule(l{2pt}r{2pt}){2-4}
&$\piMean$ (see \Cref{eq:NOMU_Loss_Extension})               & {[}64, 256{]}           & Yes              \\\cmidrule(l{2pt}r{2pt}){2-4}
&$\piOneuUB$ (see \Cref{eq:NOMU_Loss_Extension})                & 0.25           &               \\\cmidrule(l{2pt}r{2pt}){2-4}
&$\cexp$ (see \Cref{eq:NOMU_Loss_Extension})                       & {[}64, 256{]}           & Yes              \\ \midrule
MVNN Initialization&Random Initialized ts                 & Unif([0,1])                        &               \\\cmidrule(l{2pt}r{2pt}){2-4}
&Trainable ts                 & \{True, False\}                        &               \\\cmidrule(l{2pt}r{2pt}){2-4}
&Initial Expectation: $\Einit$ (see \Cref{sec:appendix:RecomendedHyperParametersInit})                       & {[}1, 2{]}                               &               \\\cmidrule(l{2pt}r{2pt}){2-4}
&Initial Variance: $\Vinit$ (see \Cref{sec:appendix:RecomendedHyperParametersInit})                    & {[}0.02, 0.16{]}                         &     Yes          \\\cmidrule(l{2pt}r{2pt}){2-4}
&Initial Bias: $\Biasinit$ (see \Cref{sec:appendix:RecomendedHyperParametersInit})                     &    0.05                     &               \\\cmidrule(l{2pt}r{2pt}){2-4}
&Initial ``b'' Constant: $\Binit$ (see \Cref{sec:appendix:RecomendedHyperParametersInit})                    &    0.05                     &               \\\cmidrule(l{2pt}r{2pt}){2-4}
&Initial ``Little'' Constant: $\littleConst$ (see \Cref{sec:appendix:RecomendedHyperParametersInit})                   &    0.1                     &               \\ \midrule \bottomrule
\end{tabular}
}
\end{sc}
\caption{Hyperparameter ranges used in our HPO for random search. If not explicitly stated otherwise, the ranges apply to all considered SATS domains.}
\label{tab:app:hpo_ranges}
\end{table*}

In this section, we provide details on the exact hyperparameter ranges that we used in our HPO. \Cref{tab:app:hpo_ranges} shows all hyperparameter ranges that we used. In the following, we explain selected hyperparameters:
\begin{itemize}
    \item \textsc{Dropout Probability Decay Factor}: After each epoch $t$ the dropout probability for the next epoch $t+1$ $p^{t+1}_{\text{drop}}$ is updated as: $p^{t+1}_{\text{drop}} = p^{t}_{\text{drop}}\cdot \kappa$, where $\kappa$ denotes this factor.
    \item \textsc{Clip Grad Norm}: Parameter for gradient clipping in \textsc{PyTorch} via \emph{torch.nn.utils.clip\_grad\_norm\_()}.
    \item \textsc{Random Initialized ts}: Uniform distribution, which is used to initialize the bReLU cutoffs $t^{i,k}$ i.i.d.\ uniformly at random, i.e.  $t^{i,k}\sim \text{Unif}(A,B)$ (setting $A=B$ makes those cutoffs deterministic).
    \item \textsc{Trainable ts}: If set to \textsc{True},
    the cutoffs of the bReLU activation function $\{t^{i,k}\}_{k=1}^{K_i-1}$ are learned (i.e., trained) during the training procedure of the corresponding MVNN.
\end{itemize}

\begin{table*}[htbp]
	\robustify\bfseries
	\centering
	\begin{sc}
	\resizebox{1\textwidth}{!}{
    \begin{tabular}{crrrrccc}
    \toprule
    \textbf{Domain} & \textbf{Quantile Parameter q}&$\boldsymbol{\Qinit}$&$\boldsymbol{\Qround}$&$\boldsymbol{\Qmax}$ & \multicolumn{1}{c}{\textbf{Efficiency Loss in \%\,\,\textdownarrow}}  & \multicolumn{1}{c}{\textbf{Revenue in \%\,\,\textuparrow}} & \multicolumn{1}{c}{\textbf{Runtime in Hours}}\\
    \midrule
    LSVM & 0.60& 40& 4& 100 & 0.69$\pm$\scriptsize\,0.41 & 74.73$\pm$\scriptsize\,3.68 & \hphantom{1}4.90\\
    & 0.75& 40& 4& 100&  0.69$\pm$\scriptsize\,0.44 & 75.07$\pm$\scriptsize\,3.71 & \hphantom{1}5.53\\
    & \ccell 0.90& \ccell40& \ccell4& \ccell100& \ccell0.39$\pm$\scriptsize\,0.30 & \ccell73.53$\pm$\scriptsize\,3.72 & \ccell15.64 \\
    & 0.95& 40& 4& 100&  0.40$\pm$\scriptsize\,0.35 & 73.88$\pm$\scriptsize\,3.93 & 15.58  \\
    \midrule
    SRVM & 0.60& 40& 4& 100&  0.16$\pm$\scriptsize\,0.04 & 54.34$\pm$\scriptsize\,1.48 & 24.67\\
    & \ccell0.75& \ccell40& \ccell4& \ccell100&  \ccell0.06$\pm$\scriptsize\,0.02 & \ccell54.22$\pm$\scriptsize\,1.46 & \ccell18.80\\
    & 0.90& 40& 4& 100&  0.54$\pm$\scriptsize\,0.08 & 53.89$\pm$\scriptsize\,1.44 & 33.90\\
    & 0.95& 40& 4& 100&  0.62$\pm$\scriptsize\,0.11 & 54.25$\pm$\scriptsize\,1.54 & 33.26\\
    \midrule
    MRVM & 0.60& 40& 4& 100&  7.88$\pm$\scriptsize\,0.43 & 41.81$\pm$\scriptsize\,1.06 & 61.48 \\
    & 0.75& 40& 4& 100&  8.44$\pm$\scriptsize\,0.43 & 41.89$\pm$\scriptsize\,0.93 & 34.91\\
    & \ccell0.90& \ccell40& \ccell4& \ccell100&  \ccell7.77$\pm$\scriptsize\,0.34 & \ccell42.04$\pm$\scriptsize\,0.89 & \ccell28.15\\
    & 0.95& 40& 4& 100&  7.98$\pm$\scriptsize\,0.34 & 42.28$\pm$\scriptsize\,1.00 & 27.92 \\
    \bottomrule
    \end{tabular}
}
    \end{sc}
    \vskip -0.2cm
    \caption{Detailed BOCA results. We present efficiency loss, relative revenue and runtime with our MVNN-based uUB $\uUBMi$ as $\mathcal{A}_i$. Shown are averages including a 95\%-normal-CI on a test set of $50$ instances in all three considered SATS domains. The best MVNN-based uUBs per domain (w.r.t. the quantile parameter $q$ based on the lowest efficiency loss are marked in grey.}
    \label{tab:efficiency_loss_mlca_appendix}
    \vskip 1cm
\end{table*}

\subsubsection{Evaluation Metric HPO}
We motivate our choice of the two terms in our evaluation metric~\Cref{eq:evaluation_metric_HPO} in the following way:
\begin{enumerate}
    \item\label{itm:TestSetTermEvaluationMetricHPO} The first term \[|\Dtest|^{-1}\hspace{-0.5cm}\sum\limits_{(x,y)\in \Dtest}\hspace{-0.5cm}\max\{(y\hspace{-0.08cm}-\hspace{-0.08cm}\uUBMi(x))q,(\uUBMi(x)\hspace{-0.08cm}-\hspace{-0.08cm}y)(1\hspace{-0.08cm}-\hspace{-0.08cm}q)\}\hspace{-0.08cm}\] of \Cref{eq:evaluation_metric_HPO} is the standard quantile-loss applied on the test data set. Achieving a low value in this evaluation metric is intuitively desirable since for values $q>0.5$ we penalize predictions that are too low more severely than predictions that are too high.
    It is also theoretically well motivated, since the true\footnote{By \enquote{true posterior} we denote the posterior coming from the \enquote{true prior} that we sample our value functions~$\hvi{}$ from.} posterior $q$-credible bound $\text{uUB}_\alpha(x)\coloneqq \inf\{y\in\R : \PP[{\hviPlain(x)\leq y | \Dtr}]\geq \alpha\} \ \forall x\in X$ would minimize this evaluation metric in expectation. As we average this term over 100 different value functions and as we use a large test data set for each of them, this is a good approximation for the expectation.
    \item The second term $\text{MAE}(\Dtr)$ of \Cref{eq:evaluation_metric_HPO} might appear to be counter-intuitive, because we are using the training data set. However, in BO it is particularly important to fit well through the noiseless training data points. First, the training data points in BO have already been chosen to lie in a region of potential maximizers. Second, in BO, relative uncertainty \citep[Appendix~A.2.1.]{heiss2022nomu} and particularly \emph{Desiderata D2} from \citep{heiss2022nomu} are more important than calibration as discussed in \citep[Appendix~D.2.3]{heiss2022nomu}. Adding a constant value to $\uUBMi$ would calibrate them but would not change the argmax~\Cref{eq:ML-based-WDP} (i.e., would not change the selected queries). However, the \ref{itm:TestSetTermEvaluationMetricHPO}\textsuperscript{st} term of our evaluation metric \Cref{eq:evaluation_metric_HPO} alone would assign quite low values to $\uUBMi$ of the form $\meanMi+c$. Fortunately, the second term $\text{MAE}(\Dtr)$ prevents \Cref{eq:evaluation_metric_HPO} from assigning low values to $\uUBMi$ of the form $\meanMi+c$.
\end{enumerate}

\subsection{Details MVNN Training}\label{subsec:Details MVNN-Training}
Both for the HPO as well as when running our efficiency experiments, we use the following two techniques to achieve numerically more robust results.
\begin{enumerate}
    \item At the end of the training procedure we use the best weights from all epochs, and not the ones from the last epoch.
    \item If at the end of the training procedure the $R^2$ (coefficient of determination) was below $0.9$ on the training set, we retrain once and finally select the model with the best performance across these two attempts. 
\end{enumerate}

\subsection{Details MILP Parameters}\label{subsec:DetailsMILP Parameters}
In our experiments we use for all MILPs \textsc{CPLEX} version 20.1.0. Furthermore, we set the following MILP parameters: $\texttt{time\_limit}=600$ sec, $\texttt{relative\_gap}=0.005$, $\texttt{integrality\_tol}=1e-06$, $\texttt{feasibility\_tol}=1e-09$. Thus, the MILP solver stops as soon as it has found a feasible integer solution proved to be within 0.5\% of optimality and otherwise stops after 600 seconds. \textsc{CPLEX} is automatically proving upper bounds for the relative gap (using duality) while approximating the solution for the MILP. Thus, when \textsc{CPLEX} finds a solution with a relative gap of at most $0.5\%$, we have a guarantee that the solution found by \textsc{CPLEX} is \emph{at most} $0.5\%$ worse than the unknown true solution. However, we have no a priori theoretical guarantee that \textsc{CPLEX} is always able to find such a solution and such a bound within 600 sec. Note that we set the time limit to 600 sec to allow researchers with a limited budget to reproduce our results hundreds of times with different seeds, while achieving already SOTA. For an actual spectrum auction the additional costs for increasing the computational budget by a factor of 10 would be negligible compared to the statistical significant increase of 200 million USD of revenue gained from using BOCA instead of MVNN in the case of MRVM for example (see \Cref{subsec:revenue}). In our experiments, the median relative gaps\footnote{In \texttt{CPLEX}, the \enquote{relative gap} always refers to the proven upper bound of the relative gap.} across all runs per domain in \Cref{tab:efficiency_loss_mlca} were: LSVM=0.004987, SRVM=0.005000, MRVM=0.004991, indicating that in most of the runs the MILP solver found within the time limit an optimal solution within tolerance.

\subsection{Details BOCA Results}\label{subsec:Detailed MLCA Results}
In this section, we provide detailed efficiency loss results of BOCA. Specifically, we present in \Cref{tab:efficiency_loss_mlca_appendix} efficiency loss results for all four HPO winner configurations (with respect to the evaluation metric based on the four different quantile parameters $q\in \{0.6,0.75,0.9,0.95\}$). Furthermore, we present the \emph{relative revenue} $\sum_{i\in N}p(R)_i/V(a^*)$ of an allocation $a^*_R\in \mathcal{F}$ and payments $p(R) \in \mathbb{R}^n_+$ determined by BOCA when eliciting reports $R$ as well as the average runtime in hours for a single instance (i.e, how long it would take to run a single auction). Note that since we stop BOCA when we have already found an allocation with 0\% efficiency loss, the relative revenue numbers (in LSVM and SRVM) are pessimistic and typically increase if we let BOCA run until $\Qmax$ is reached. For the runtime results the opposite holds.

\subsection{Revenue}\label{subsec:revenue}
Comparing the revenue of BOCA (see \Cref{tab:efficiency_loss_mlca_appendix}) to the revenue of MVNN-MLCA and NN-MLCA in \citep[Table 6]{weissteiner2022monotone}, we see that overall the mean relative revenue is as good or better than SOTA. For LSVM (and SRVM) this comparison could be flawed because we stop computing further queries when reaching 0\% efficiency loss as described above.

However, for MRVM, we always compute all 100 queries as can be seen in \citep[Table 6]{weissteiner2022monotone}. Thus MRVM allows for a fair comparison of the relative revenue. In MRVM, BOCA's average relative revenue is $7.58\%$ points higher than the one of MVNN-MLCA
, which corresponds to $\sim\!\!400$ million USD in this domain.\footnote{The revenue of the 2014 Canadian 4G auction was $5.27$ billion USD \citep{ausubel2017practical}.
If one accumulates the revenue of all spectrum auctions, one would obtain significantly larger values.}
A pairwise $t$-test with null hypothesis of equal means resulted in $p_{\text{VAL}}=5\mathrm{e}{-10}$.
Moreover, with high statistical significance ($p_{\text{VAL}}=2\mathrm{e}{-4}$), BOCA achieves on average more than 200 million USD \emph{more} than MVNN-MLCA.
The comparison of BOCA to NN-MLCA is also in favour of BOCA, but much closer (and BOCA outperforms NN-MLCA in terms of social welfare with a $p_\text{VAL}=2\mathrm{e}{-5}$, see \Cref{tab:efficiency_loss_mlca}).

The strength of BOCA with respect to revenue is quite intuitive since each query that explores regions of high uncertainty in the bundle space is beneficial for all economies, while \enquote{exploiting} an allocation which leads to high efficiency in one certain economy is mainly beneficial for this certain economy (see \Cref{sec:appendix_A Machine Learning powered ICA} for the definition of main and marginal economies). As discussed in \citep[Appendix E.3]{weissteiner2022monotone}, MLCA queries the main economy more often than any other economy and revenue is high if the social welfare in the marginal economies is high relative to the social welfare in the main economy. Thus, exploration favours high revenue in settings (such as ours) where the main economy is queried more often than the marginal economies.

\begin{figure*}
    \centering
    \resizebox{\textwidth}{!}{
    \includegraphics{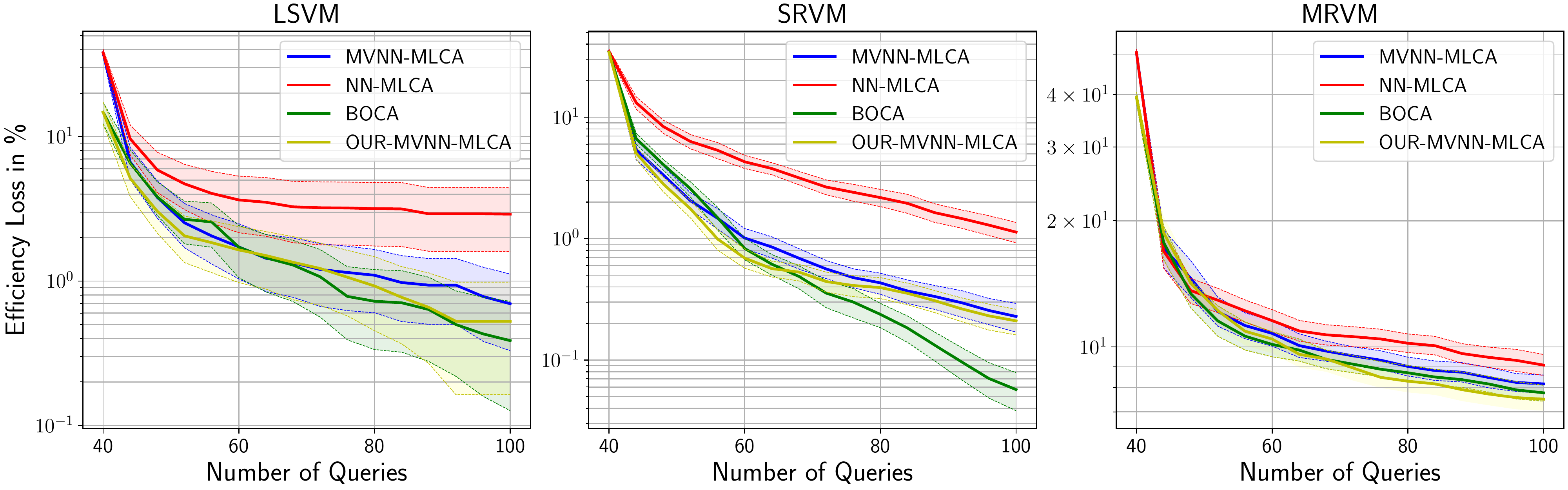}
    }
    \caption{Efficiency loss paths (i.e., regret plots) for $\Qinit=40$ of OUR-MVNN-MLCA winners (yellow) from \Cref{tab:efficiency_loss_mlca_appendix_onlyMean}   compared to BOCA winners (green) from \Cref{tab:efficiency_loss_mlca_appendix} and to the results presented in \citep{weissteiner2022monotone} of MVNNs (blue) and plain NNs (red). Shown are averages with 95\% CIs over 50 instances.
    }
    \label{fig:efficiency_path_plot_summary_appendix}
\end{figure*}
\subsection{Understanding BOCA's Performance Increase}\label{subsec:Ablation Study - When is Exploration Needed}
In this section, we present further efficiency loss results when using only the mean MVNN $\meanMi$ of $\NOMUi$ in MLCA as $\Ai{}$. We call this method OUR-MVNN-MLCA. Note that OUR-MVNN-MLCA does not integrate any explicit notion of uncertainty and is thus the same method as MVNN-MLCA from \cited{weissteiner2022monotone}, but now with our new proposed parameter initialization method (see \Cref{subsec:New MVNN Random Initialization}) and optimized with our HPO. Thus, this experiment investigates how much of the performance gain is attributed to the integration of our notion of uncertainty compared to the other changes we made (i.e., our parameter initialization method and our HPO).

In \Cref{fig:efficiency_path_plot_summary_appendix}, we present the efficiency loss path plots for OUR-MVNN-MLCA (yellow) compared to the results presented in \Cref{fig:efficiency_path_plot_summary} in the main paper. For each domain we use for OUR-MVNN-MLCA (yellow) the best mean models $\meanMi$ from $\NOMUi$ based on the smallest efficiency loss (i.e., the grey-marked winners from \Cref{tab:efficiency_loss_mlca_appendix_onlyMean}).

As expected, we see that BOCA, i.e., integrating a notion of uncertainty, is as good or better than OUR-MVNN-MLCA, i.e., only using the mean model. This effect is statistically significant in SRVM, while in LSVM and MRVM both lead to results that are statistically on par. The degree to which exploration via a notion of uncertainty is beneficial depends on intrinsic characteristics of the domain (e.g., the dimensionality or multi-modality of the objective function). Specifically, in MRVM where the query budget of $\Qmax=100$ is extremely small compared to the dimensionality of the domain (i.e., MRVM has $m=98$ items and $n=10$ bidders thus the dimensionality is $980=98\cdot10$), it appears that exploitation might be beneficial compared to adding exploration~\citep{de2021greed} and the power of adding exploration may reveal itself only when increasing the query budget. However, in LSVM and SRVM ($m=18$ and $m=29$), we see that adding exploration with an uUB $\uUBMi$ decreases the efficiency loss. Finally, these results also suggest that our proposed new parameter initialization method for MVNNs discussed in \Cref{subsec:New MVNN Random Initialization} tends to be better than a simple generic one, i.e, OUR-MVNN-MLCA is in every domain on average better than MVNN-MLCA.

Finally, in \Cref{tab:efficiency_loss_mlca_appendix_onlyMean} we present the complete results of this experiment for all quantile parameters $q$ per domain. 

We refer to \Cref{subsec:Reduced Number of Initial Queries} for an additional comparison of BOCA and OUR-MVNN-MLCA for a reduced number of initial queries.

\begin{table*}
	\robustify\bfseries
	\centering
	\begin{sc}
	\resizebox{1\textwidth}{!}{
    \begin{tabular}{crrrrccc}
    \toprule
    \textbf{Domain} & \textbf{Quantile Parameter q}&$\boldsymbol{\Qinit}$&$\boldsymbol{\Qround}$&$\boldsymbol{\Qmax}$ & \multicolumn{1}{c}{\textbf{Efficiency Loss in \%\,\,\textdownarrow}}  & \multicolumn{1}{c}{\textbf{Revenue in \%\,\,\textuparrow}} & \multicolumn{1}{c}{\textbf{Runtime in Hours}}\\
    \midrule
    LSVM & 0.60& 40& 4& 100 &  0.66$\pm$\scriptsize\,0.44 & 74.93$\pm$\scriptsize\,3.48 & \hphantom{1}4.62\\
    & 0.75& 40& 4& 100&   0.61$\pm$\scriptsize\,0.42 & 75.14$\pm$\scriptsize\,3.53 & \hphantom{1}4.16\\
    & \ccell 0.90& \ccell40& \ccell4& \ccell100&\ccell0.53$\pm$\scriptsize\,0.41 & \ccell73.80$\pm$\scriptsize\,3.88 & \ccell10.44\\
    & 0.95& 40& 4& 100& 0.56$\pm$\scriptsize\,0.40 & 74.83$\pm$\scriptsize\,3.67 & 10.31\\
    \midrule
    SRVM & 0.60& 40& 4& 100& 0.22$\pm$\scriptsize\,0.04 & 54.24$\pm$\scriptsize\,1.51 & 21.18\\
    &  \ccell0.75& \ccell40& \ccell4&  \ccell100&\ccell0.21$\pm$\scriptsize\,0.05 & \ccell54.47$\pm$\scriptsize\,1.51 & \ccell17.81\\
    & 0.90& 40& 4& 100& 0.36$\pm$\scriptsize\,0.06 & 54.51$\pm$\scriptsize\,1.49 & 20.03\\
    & 0.95& 40& 4& 100&  0.35$\pm$\scriptsize\,0.05 & 54.51$\pm$\scriptsize\,1.48 & 19.14\\
    \midrule
    MRVM & 0.60& 40& 4& 100 & 7.86$\pm$\scriptsize\,0.42 & 41.91$\pm$\scriptsize\,0.91 & 41.51\\
    & 0.75& 40& 4& 100&  8.41$\pm$\scriptsize\,0.40 & 40.28$\pm$\scriptsize\,0.92 & \hphantom{1}6.89\\
    & \ccell0.90& \ccell40& \ccell4& \ccell100&\ccell7.51$\pm$\scriptsize\,0.47 & \ccell40.22$\pm$\scriptsize\,0.96 & \ccell\hphantom{1}4.91\\
    & 0.95& 40& 4& 100&  8.01$\pm$\scriptsize\,0.48 & 40.62$\pm$\scriptsize\,1.08 & \hphantom{1}9.03\\
    \bottomrule
    \end{tabular}
}
    \end{sc}
    \vskip -0.2cm
    \caption{Detailed OUR-MVNN-MLCA results. We present efficiency loss, relative revenue and runtime of OUR-MVNN-MLCA, i.e., MLCA with our MVNN-based mean $\meanMi$ as $\mathcal{A}_i$. Shown are averages including a 95\%-normal-CI on a test set of $50$ instances in all three considered SATS domains. The best $\meanMi$ per domain (w.r.t. the quantile parameter $q$) based on the lowest efficiency loss is marked in grey.}
    \label{tab:efficiency_loss_mlca_appendix_onlyMean}
\end{table*}
\subsection{Reduced Number of Initial Queries}\label{subsec:Reduced Number of Initial Queries}
In this section, we present results of BOCA and OUR-MVNN-MLCA for a reduced number of initial random queries. Specifically, we chose $\Qinit=20$ initial random queries (instead of $\Qinit=40$ that were selected in the main set of experiments following prior work; see \Cref{tab:efficiency_loss_mlca_appendix} and \Cref{tab:efficiency_loss_mlca_appendix_onlyMean}). All other parameters are left untouched.

Note that the previous literature used only $\Qinit=40$, since batching reduces the cost per queries in practice, i.e., not only the queries are costly but also the rounds are costly. Reducing $\Qinit$ from 40 to 20 increases the number of rounds from 15 to 20 (given that $\Qround=4$ queries, i.e., 3 marginal economy queries and 1 main economy query, are asked per round). Furthermore, the experiments get computationally more costly as we reduce $\Qinit$, because we need to perform more (MV)NN trainings and solve more MILPs.

Because of this, we only compare BOCA vs. OUR-MVNN-MLCA in this section (as in \Cref{subsec:Ablation Study - When is Exploration Needed}). Thus, in this section, we do not study the benefits of our new initialization method (\Cref{subsec:New MVNN Random Initialization}), and only focus on studying the benefits of incorporating our proposed uncertainty model (\Cref{subsec:NOMU for Combinatorial Assignment}).

In \Cref{tab:efficiency_loss_mlca_appendix_20iQ}, we present the BOCA results for $\Qinit=20$. To isolate the effect of the integrated uncertainty in BOCA, we present the corresponding results for OUR-MVNN-MLCA for $\Qinit=20$ in  \Cref{tab:efficiency_loss_mlca_appendix_20iQ_onlyMean}, (see \Cref{subsec:Ablation Study - When is Exploration Needed} for a description of this experiment setting).

\paragraph{Effect of $\Qinit$ Parameter} Comparing \Cref{tab:efficiency_loss_mlca_appendix_20iQ} to \Cref{tab:efficiency_loss_mlca_appendix}, we see that by reducing the randomly sampled initial queries from $\Qinit=40$ to $\Qinit=20$, BOCA' efficiency loss tends to be on average smaller for LSVM and MRVM and, perhaps surprisingly, larger for SRVM. However, by comparing the grey-marked winner models in each domain, we see that it does not make a statistically significant difference whether $\Qinit$ is chosen to be 20 or 40.

\paragraph{BOCA vs. OUR-MVNN-MLCA for $\Qinit=20$}Also when comparing BOCA to OUR-MVNN-MLCA for a reduced number of $\Qinit=20$ initial queries, i.e., comparing \Cref{tab:efficiency_loss_mlca_appendix_20iQ} to \Cref{tab:efficiency_loss_mlca_appendix_20iQ_onlyMean}, we find that in each domain the BOCA winner model and the OUR-MVNN-MLCA winner model perform statistically on par (even though the average efficiency loss of the BOCA winner model is always better than that of the OUR-MVNN-MLCA winner model).
This can also be seen in the efficiency loss paths (i.e., regret plots) shown in \Cref{fig:efficiency_path_plot_summary_appendix_qinit20}. Furthermore, \Cref{fig:efficiency_path_plot_summary_appendix_qinit20} suggests that for a small query budget of $\Qmax<60$, exploitation might be more important, while for query budgets larger than 80, i.e., $\Qmax>80$, exploration might pay off more.
\begin{table*}[htbp]
	\robustify\bfseries
	\centering
	\begin{sc}
	\resizebox{1\textwidth}{!}{
    \begin{tabular}{crrrrccc}
    \toprule
    \textbf{Domain} & \textbf{Quantile Parameter q}&$\boldsymbol{\Qinit}$&$\boldsymbol{\Qround}$&$\boldsymbol{\Qmax}$ & \multicolumn{1}{c}{\textbf{Efficiency Loss in \%\,\,\textdownarrow}}  & \multicolumn{1}{c}{\textbf{Revenue in \%\,\,\textuparrow}} & \multicolumn{1}{c}{\textbf{Runtime in Hours}}\\
    \midrule
    LSVM & 0.60& 20& 4& 100 &0.79$\pm$\scriptsize\,0.47 & 73.74$\pm$\scriptsize\,3.65 & \hphantom{1}5.93\\
    & 0.75& 20& 4& 100&   0.61$\pm$\scriptsize\,0.42 & 73.99$\pm$\scriptsize\,3.56 & \hphantom{1}7.31 \\
    & 0.90& 20& 4& 100&   0.37$\pm$\scriptsize\,0.24 & 73.18$\pm$\scriptsize\,3.60 & 23.99\\
    & \ccell 0.95& \ccell20& \ccell4& \ccell100&   \ccell0.16$\pm$\scriptsize\,0.20 & \ccell72.84$\pm$\scriptsize\,3.45 & \ccell22.21\\
    \midrule
    SRVM & 0.60& 20& 4& 100&0.14$\pm$\scriptsize\,0.04 & 53.86$\pm$\scriptsize\,1.44 & 32.07\\
    &  \ccell0.75& \ccell20& \ccell4&  \ccell100& \ccell0.12$\pm$\scriptsize\,0.09 & \ccell53.87$\pm$\scriptsize\,1.55 & \ccell25.89\\
    & 0.90& 20& 4& 100&  0.72$\pm$\scriptsize\,0.11 & 53.94$\pm$\scriptsize\,1.59 & 45.20\\
    & 0.95& 20& 4& 100&  0.75$\pm$\scriptsize\,0.11 & 54.09$\pm$\scriptsize\,1.55 & 45.09\\
    \midrule
    MRVM & 0.60& 20& 4& 100& 7.94$\pm$\scriptsize\,0.36 & 42.14$\pm$\scriptsize\,0.98 & 68.84\\
    & 0.75& 20& 4& 100& 8.31$\pm$\scriptsize\,0.31 & 40.92$\pm$\scriptsize\,0.73 & 27.09\\
    & 0.90& 20& 4& 100& 7.92$\pm$\scriptsize\,0.33 & 42.61$\pm$\scriptsize\,0.89 & 21.70\\
    & \ccell0.95& \ccell20& \ccell4& \ccell100& \ccell7.45$\pm$\scriptsize\,0.37 & \ccell41.19$\pm$\scriptsize\,0.86 & \ccell21.51\\
    \bottomrule
    \end{tabular}
}
    \end{sc}
    \vskip -0.2cm
    \caption{
    Detailed BOCA results for a reduced number of $\Qinit=20$ initial random queries. We present efficiency loss, relative revenue and runtime of MLCA with our MVNN-based uUB $\uUBMi$ as $\mathcal{A}_i$. Shown are averages including a 95\%-normal-CI on a test set of $50$ instances in all three considered SATS domains. The best MVNN-based uUBs per domain (w.r.t. the quantile parameter $q$) based on the lowest efficiency loss are marked in grey.}
    \label{tab:efficiency_loss_mlca_appendix_20iQ}
\end{table*}
\begin{table*}[htbp]
	\robustify\bfseries
	\centering
	\begin{sc}
	\resizebox{1\textwidth}{!}{
    \begin{tabular}{crrrrccc}
    \toprule
    \textbf{Domain} & \textbf{Quantile Parameter q}&$\boldsymbol{\Qinit}$&$\boldsymbol{\Qround}$&$\boldsymbol{\Qmax}$ & \multicolumn{1}{c}{\textbf{Efficiency Loss in \%\,\,\textdownarrow}}  & \multicolumn{1}{c}{\textbf{Revenue in \%\,\,\textuparrow}} & \multicolumn{1}{c}{\textbf{Runtime in Hours}}\\
    \midrule
    LSVM & 0.60& 20& 4& 100 &0.71$\pm$\scriptsize\,0.43 & 73.73$\pm$\scriptsize\,3.57 & \hphantom{1}4.87\\
    & 0.75& 20& 4& 100&  0.71$\pm$\scriptsize\,0.45 & 74.71$\pm$\scriptsize\,3.32 & \hphantom{1}5.11\\
    & \ccell0.90& \ccell20& \ccell4& \ccell100&  \ccell0.58$\pm$\scriptsize\,0.38 & \ccell74.81$\pm$\scriptsize\,3.51 & \ccell12.76\\
    & 0.95& 20& 4& 100&  0.59$\pm$\scriptsize\,0.34 & 73.88$\pm$\scriptsize\,3.64 & 14.11\\
    \midrule
    SRVM & 0.60& 20& 4& 100 & 0.18$\pm$\scriptsize\,0.04 & 54.13$\pm$\scriptsize\,1.45 & 30.47\\
    & \ccell 0.75& \ccell20& \ccell4&  \ccell100& \ccell0.16$\pm$\scriptsize\,0.04 & \ccell54.32$\pm$\scriptsize\,1.49 & \ccell28.62\\
    & 0.90& 20& 4& 100& 0.29$\pm$\scriptsize\,0.06 & 54.55$\pm$\scriptsize\,1.51 & 31.74\\
    & 0.95& 20& 4& 100&  0.30$\pm$\scriptsize\,0.05 & 54.35$\pm$\scriptsize\,1.58 & 30.39\\
    \midrule
    MRVM & 0.60& 20& 4& 100 & 7.73$\pm$\scriptsize\,0.43 & 42.51$\pm$\scriptsize\,0.75 & 25.39\\
    & 0.75& 20& 4& 100&  8.14$\pm$\scriptsize\,0.40 & 41.46$\pm$\scriptsize\,0.85 & \hphantom{1}7.70\\
    & 0.90& 20& 4& 100& 7.73$\pm$\scriptsize\,0.41 & 41.42$\pm$\scriptsize\,0.85 & 10.06\\
    & \ccell0.95& \ccell20& \ccell4& \ccell100&\ccell7.52$\pm$\scriptsize\,0.36 & \ccell41.33$\pm$\scriptsize\,1.11 & \ccell\hphantom{1}9.44\\
    \bottomrule
    \end{tabular}
}
    \end{sc}
    \vskip -0.2cm
    \caption{Detailed OUR-MVNN-MLCA results for a reduced number of $\Qinit=20$ initial random queries. We present efficiency loss, relative revenue and runtime of OUR-MVNN-MLCA, i.e., MLCA with our MVNN-based mean $\meanMi$ as $\mathcal{A}_i$. Shown are averages including a 95\%-normal-CI on a test set of $50$ instances in all three considered SATS domains.}
    \label{tab:efficiency_loss_mlca_appendix_20iQ_onlyMean}
\end{table*}

\begin{figure*}
    \centering
    \resizebox{\textwidth}{!}{
    \includegraphics{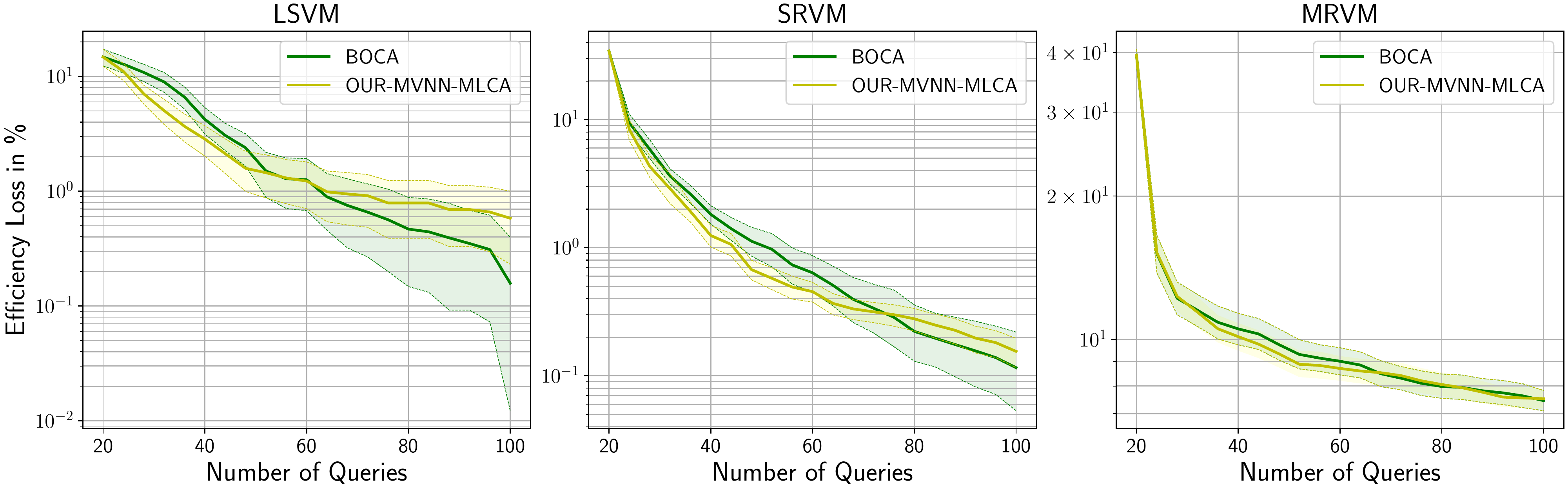}
    }
    \caption{Efficiency loss paths (i.e., regret plots) for a reduced number of initial queries $\Qinit=20$ of BOCA winners (green) from \Cref{tab:efficiency_loss_mlca_appendix_20iQ} compared to OUR-MVNN-MLCA winners (yellow) from \Cref{tab:efficiency_loss_mlca_appendix_20iQ_onlyMean}. Shown are averages with 95\% CIs over 50 instances.
    }
    \label{fig:efficiency_path_plot_summary_appendix_qinit20}
\end{figure*}

\fi
\end{document}